\newtheorem{theorem}{Theorem}
\newtheorem{lemma}{Lemma}
\DeclareMathOperator*{\softmax}{Softmax}
\newcommand{\cellc}{\cellcolor{lightgray!50}}
\title{
Slight Corruption in Pre-training Data \\ Makes Better Diffusion Models
}
\author{%
Hao Chen$^{1}$\thanks{haoc3@andrew.cmu.edu},
Yujin Han$^{2}$,
Diganta Misra$^{1,3}$,
Xiang Li$^{1}$,
Kai Hu$^{1}$,
\\
Difan Zou$^{2}$,
Masashi Sugiyama$^{4,5}$,
Jindong Wang$^{6}$\thanks{Correspondence to: jwang80@wm.edu},
Bhiksha Raj$^{1,7}$
}
\affil{\small{$^{1}$Carnegie Mellon University, $^{2}$The University of Hong Kong,$^{3}$Mila - Quebec AI Institute, 
\\
$^{4}$ RIKEN AIP, $^{5}$The University of Tokyo, $^{6}$William \& Mary, $^{7}$MBZUAI}}
\begin{document}
\etocdepthtag.toc{mtchapter}
\etocsettagdepth{mtchapter}{none}
\etocsettagdepth{mtappendix}{none}

\maketitle

\begin{abstract}
Diffusion models (DMs) have shown remarkable capabilities in generating realistic high-quality images, audios, and videos. 
They benefit significantly from extensive pre-training on large-scale datasets, including web-crawled data with paired data and conditions, such as image-text and image-class pairs.
Despite rigorous filtering, these pre-training datasets often inevitably contain \textit{corrupted} pairs where \textit{conditions} do not accurately describe the data. 
This paper presents the first comprehensive study on the impact of such condition corruption in pre-training data of DMs.
We synthetically corrupt ImageNet-1K and CC3M to pre-train and evaluate over $50$ conditional DMs. 
Our empirical findings reveal that various types of slight corruption in pre-training can significantly enhance the quality, diversity, and fidelity of the generated images across different DMs, both during pre-training and downstream adaptation stages. 
Theoretically, we consider a Gaussian mixture model and prove that slight corruption in the condition leads to higher entropy and a reduced 2-Wasserstein distance to the ground truth of the data distribution generated by the corruptly trained DMs.
Inspired by our analysis, we propose a simple method to improve the training of DMs on practical datasets by adding condition embedding perturbations (CEP).
CEP significantly improves the performance of various DMs in both pre-training and downstream tasks.
We hope that our study provides new insights into understanding the data and pre-training processes of DMs and all models are released at \url{https://huggingface.co/DiffusionNoise}.
\end{abstract}

\section{Introduction}
\label{sec:intro}
\vspace{-0.1in}

Recently, diffusion models (DMs) have been demonstrating unprecedented capabilities in generating high-quality, realistic, and faithful images \cite{ho2020denoising,song2021scorebased,vahdat2021score,dhariwal2021diffusion,karras2022elucidating}, audios \cite{liu2023audioldm,yang2024usee}, and videos \cite{ho2022video}.
In addition, they exhibit impressive conditional generation results \cite{rombach2022high,saharia2022photorealistic,peebles2023scalable}  when trained with classifier-free guidance \cite{ho2022classifier}. 
The successes of DMs are often attributed to the massive pre-training on large-scale datasets consisting of paired data and conditions \cite{thomee2016yfcc100m,sharma2018conceptual,changpinyo2021cc12m,schuhmann2021laion,schuhmann2022laion}, which also empowered and facilitated numerous downstream applications and personalization of pre-trained models, such as subject-driven generation \cite{gal2022textinversion,ruiz2023dreambooth}, controllable conditional generation \cite{zhang2023adding,mou2023t2i,zhao2024uni}, and synthetic data training \cite{he2022synthetic,fan2023scaling,tian2024stablerep}.  

The large-scale pre-training datasets of paired data and conditions are usually web-crawled.
For example, Stable Diffusion \cite{stablediffusion} was pre-trained on LAION-2B \cite{schuhmann2022laion}, which contains billion-scale image-text pairs collected from Common Crawl \cite{commoncrawl}.
Despite the heavy filtering mechanisms used in collecting pre-training datasets \cite{schuhmann2022laion,gadre2024datacomp}, they still inevitably contain corrupted pairs where conditions do not correctly describe or match the data, such as corrupted labels and texts \cite{ northcutt2021confidentlearning,vasudevan2022does,gordon2023mismatch,elazar2023s}.
While large-scale datasets are necessary for DMs to perform well, the corruption may lead to unexpected behavior or generalization performance of models \cite{frankel2020fair,hall2022systematic,chen2024catastrophic} during both pre-training and adaptation stages, especially for safety-critical domains such as healthcare \cite{kazerouni2023diffusion} and autonomous driving \cite{li2023drivingdiffusion,jiang2023motiondiffuser}.


\begin{figure}[t!]
    \centering
    \includegraphics[width=\linewidth]{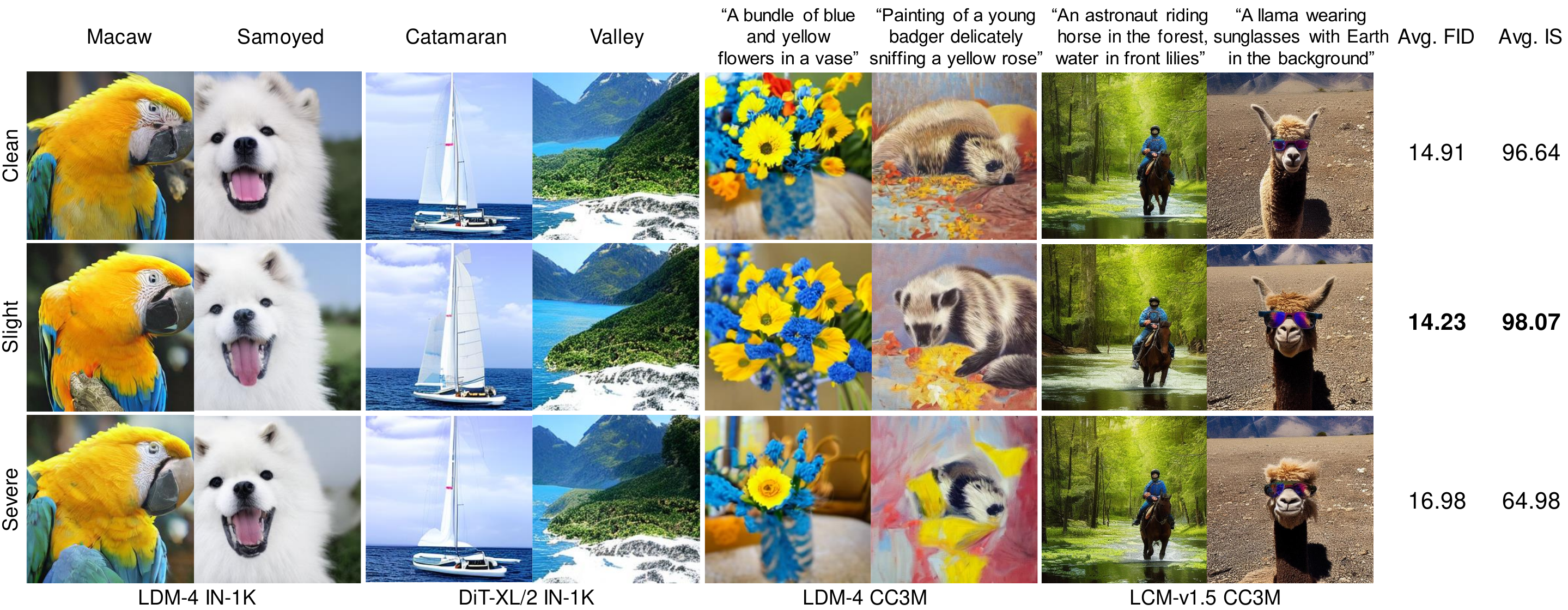}
    \vspace{-0.1in}
    \caption{Visualization from class and text-conditional DMs pre-trained with clean, slight, and severe condition corruption. Slight corruption in pre-training improves the quality and diversity of images.}
    \label{fig:teaser_vis}
\vspace{-0.1in}
\end{figure}


Conventional wisdom may suggest that training under corrupted conditions could lead to deterioration in performance. 
For example, Noisy Label Learning \cite{Ghosh2017RobustLF,Han2018CoteachingRT,Li2020DivideMixLW,li2021provably,northcutt2021confidentlearning,sopliu22w,chen2023imprecise} aims to improve the generalization of models when training with corrupted labels. 
Label-noise robust conditional generative adversarial nets \cite{thekumparampil2018robustness,kaneko2019label} and DMs \cite{na2024label} have also been studied.
However, these works are primarily concerned with supervised learning in downstream scenarios with assumptions of high noise ratios and the same training and testing data distributions.
Due to the misalignment with large-scale self-supervised pre-training in practice on filtered datasets with relatively smaller noise ratios, the effects of corruption in pre-training can also differ from those in downstream \cite{chen2023understanding,chen2024learning}.Understanding the effects of pre-training with such corruption is challenging and still remains largely unexplored.


\begin{wrapfigure}{r}{0.5\textwidth}
\vspace{-.2in}
\begin{center}
    \includegraphics[width=0.5\textwidth]{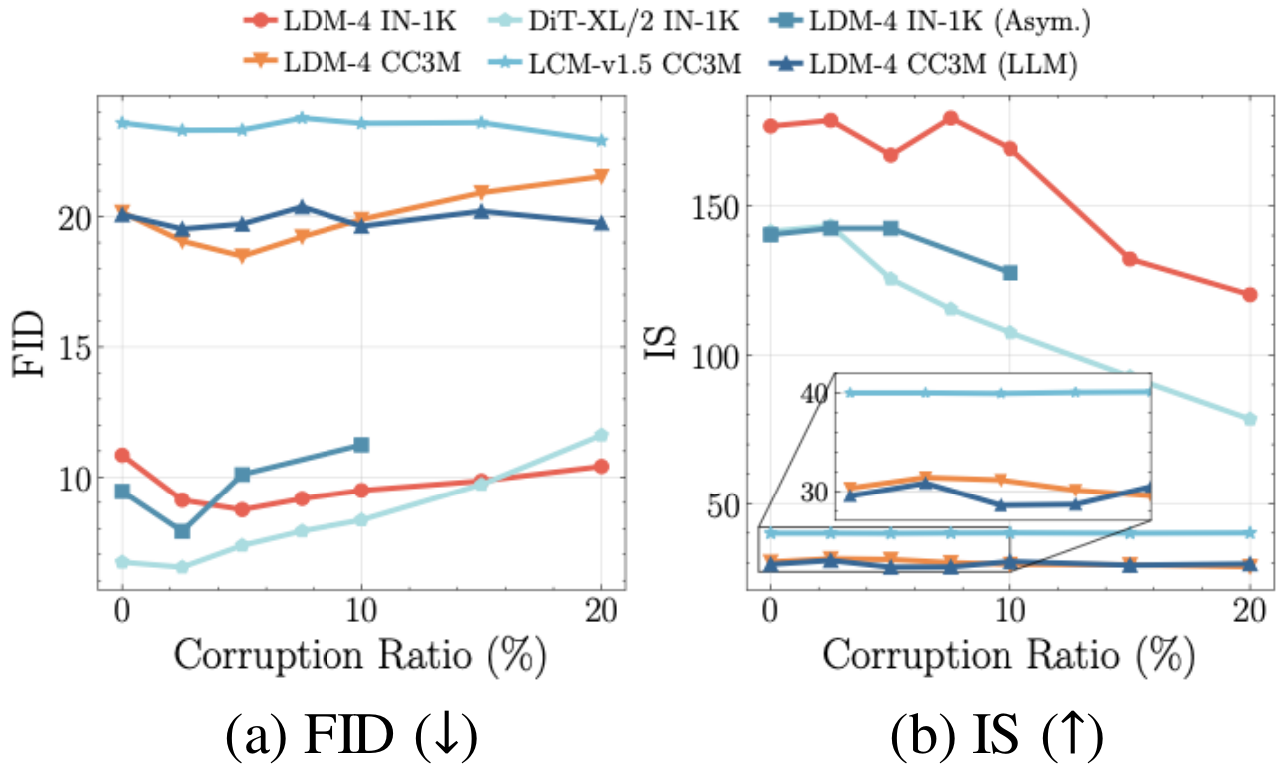}
\end{center}
\vspace{-0.15in}
\caption{(a) FID and (b) IS of DMs pre-trained on IN-1K and CC3M with various corruption.
Slight corruption of various types helps DMs achieve better performance, compared to the clean ones.
}
\vspace{-0.15in}
\label{fig:teaser_fid}
\end{wrapfigure}

In this paper, we provide the first comprehensive and practical study on condition corruption in the pre-training of DMs. 
Through in-depth analysis, we empirically, theoretically, and methodologically verify that \textbf{slight condition corruption in pre-training makes better DMs}. 
We pre-train over $50$ class-conditional and text-conditional DMs using classifier-free guidance (CFG) \cite{ho2022classifier} on ImageNet-1K (IN-1K) \cite{krizhevsky2012imagenet} and CC3M \cite{sharma2018conceptual} with synthetically corrupted conditions, i.e., classes and texts, of various levels. 
Our study covers a wide range of DM families, including Latent Diffusion Model (LDM) \cite{rombach2022high}, Diffusion Transformer (DiT) \cite{peebles2023scalable}, and Latent Consistency Model (LCM) \cite{song2023consistency,luo2023latent}.
Due to the known obstacles of evaluating generative models \cite{borji2022pros,betzalel2022study,lee2024holistic}, 
we conduct both pre-training and downstream evaluation from the perspectives of image quality, fidelity, diversity, complexity, and memorization, to comprehensively understand the effects of pre-training corruption of DMs.  
More specifically, for pre-training, we directly evaluate the images generated from the pre-trained models, and for downstream adaptation, we evaluate on the images generated using personalized models with ControlNet \cite{zhang2023adding} and T2I-Adapter \cite{mou2023t2i} from the pre-trained ones. In addition, we theoretically investigate how slight corruption in conditional embeddings benefits the training and generative processes of DMs.
Our key findings include:
\begin{itemize}[leftmargin=1em,nosep]
\setlength\itemsep{0em}
    \item Empirically, slight corruption in pre-training facilitates the DMs to generate images with higher quality and more diversity, both qualitatively (in \cref{fig:teaser_vis}) and quantitatively (in \cref{fig:teaser_fid}).
    \item Theoretically, we employ a Gaussian mixture model to show slight condition corruption improves the diversity and quality of generation by increasing entropy over clean condition generation and reducing the quadratic 2-Wasserstein distance to the true data distribution (in \cref{sec:theory}).
    \item Methodologically, based on our analysis, we propose a simple method to improve the pre-training of DMs by adding conditional embedding perturbations (CEP).
    We show that CEP can significantly boost the performance of various DMs in both pre-training and downstream tasks (in \cref{sec:cond_perturb}).
\end{itemize}

Going beyond images, we do see the potential of this study in other modalities.
Our efforts may also inspire future investigation on other types of corruption and bias inside pre-training datasets.
We hope that our work can shed light on the future research of diffusion models and responsible AI.



\section{Preliminary}
\label{sec:preliminary}

\textbf{Denoising Diffusion Models.}
\label{sec:preliminary-dm}
DMs are probabilistic models that learn the data distribution $\mathbb{P}(\mathbf{x})$, with $\mathbf{x}$ denoting the observed data\footnote{We use $\mathbf{x}$ for images in both the raw pixel space and the latent space of VQ-VAE \cite{esser2021taming}.}, over a set of latent variables $\mathbf{z}_1,\dots,\mathbf{z}_T$ with length $T$ \cite{ho2020denoising,kingma2024understanding}. 
It assumes a forward diffusion process, gradually adding Gaussian noise to the data with a fixed Markov chain: $q(\mathbf{z}_t | \mathbf{x}) = \mathcal{N}( \sqrt{\bar{\alpha_t}} \mathbf{x}, (1 - \bar{\alpha}_t) \mathbf{I})$, which can be re-parameterized as $\mathbf{z}_t = \sqrt{\bar{\alpha_t}} \mathbf{x} + \sqrt{1 - \bar{\alpha}_t} \boldsymbol{\epsilon}$ with $\boldsymbol{\epsilon} \sim \mathcal{N}(\mathbf{0}, \mathbf{I})$ and $\bar{\alpha}_t$ as constants produced by a noise scheduler.
DMs are trained via the reverse process, inverting the forward process as: $p_{\theta}(\mathbf{z}_{t-1} | \mathbf{z}_t) = \mathcal{N}\left(\boldsymbol{\mu}_{\theta}\left( \mathbf{z}_t \right), \boldsymbol{\Sigma}_{\theta}\left( \mathbf{z}_t \right)\right)$, with a network that predicts the statistics of $p_{\theta}$.
Setting $\boldsymbol{\Sigma}_{\theta}\left( \mathbf{z}_t \right) = (1 - \bar{\alpha}_t) \mathbf{I}$ to untrained constants, the reverse process is simplified as training equally weighted denoising autoencoders $\boldsymbol{\epsilon}\left( \mathbf{z}_t, t \right)$ with uniformly sampled $t$: 
\begin{equation}
    \mathcal{L}_{\mathrm{DM}}=\mathbb{E}_{\mathbf{x}, \boldsymbol{\epsilon} \sim \mathcal{N}(0,\mathbf{I}), t \sim \mathcal{U}(1, T)} \left[ \left\|\boldsymbol{\epsilon}-\boldsymbol{\epsilon}_\theta\left(\mathbf{z}_t, t\right)\right\|_2^2 \right]. 
\end{equation}
After training, new images can be generated by sampling $\mathbf{z}_{t-1} \sim \mathbf{p}_{\theta}(\mathbf{z}_{t-1} | \mathbf{z}_t)$ starting with $\mathcal{N}(\mathbf{0}, \mathbf{I})$.

\textbf{Classifier-free Guidance (CFG).}
\label{sec:preliminary-cfg}
Extra condition information $y$, such as class labels and text prompts, can be injected into DMs with conditional embeddings $\mathbf{c}_{\theta}(y)$ from modality-specific encoders \cite{rombach2022high} for conditional generation: $\mathbf{p}_{\theta}(\mathbf{z}_{t-1}|\mathbf{z}_t, \mathbf{c}_{\theta}(y))$. 
CFG \cite{ho2022classifier} jointly learns a unconditional model $\boldsymbol{\epsilon}_{\theta}(\mathbf{z}_t, t, \mathbf{c}_{\theta}(\emptyset))$ with an empty condition $y=\emptyset$ and a conditional model $\boldsymbol{\epsilon}_{\theta}(\mathbf{z}_t, t, \mathbf{c}_{\theta}(y)) $, and combines them linearly to control the trade-off of sample quality and diversity in generation:
\begin{equation}
    \hat{\boldsymbol{\epsilon}}_{\theta}(\mathbf{z}_t, t, \mathbf{c}_{\theta}(y)) = \boldsymbol{\epsilon}_{\theta}(\mathbf{z}_t, t, \mathbf{c}_{\theta}(\emptyset)) + s \left( \boldsymbol{\epsilon}_{\theta}(\mathbf{z}_t, t, \mathbf{c}_{\theta}(y))  - \boldsymbol{\epsilon}_{\theta}(\mathbf{z}_t, t, \mathbf{c}_{\theta}(\emptyset))  \right),
\end{equation}
where $s > 1$ denotes the guidance scale.
We adopt CFG by default with the training objective:
\begin{equation}
\mathcal{L}_{\mathrm{DM}}=\mathbb{E}_{\mathbf{x},y,\boldsymbol{\epsilon} \sim \mathcal{N}(0,\mathbf{I}), t \sim \mathcal{U}(1, T)}\left[\left\|\boldsymbol{\epsilon}-\boldsymbol{\epsilon}_\theta\left(\mathbf{z}_t, t, \mathbf{c}_{\theta}(y) \right)\right\|_2^2\right].
\label{eq:conditional_dm}
\end{equation}

\textbf{Condition Corruption.}
\label{sec:preliminary-corruption}
Ideally, each $y$ should accurately describe and match $\mathbf{x}$. 
However, in practice, due to errors from the collection of web-crawled datasets, conditions $y^c$ may un-match $\mathbf{x}$. 
We define $(\mathbf{x}, y^c)$ as pairs with condition corruption, and assume that $\mathbf{c}_{\theta}(y^c) = \mathbf{c}_{\theta}(y;\eta, \boldsymbol{\xi})$, where $\boldsymbol{\xi}$ denotes certain noise and $\eta$ denotes corruption ratio that implicitly controls the noise magnitude.

\section{Understanding the Pre-training Corruption in Diffusion Models}
\label{sec:understand}

In this section, we conduct the first comprehensive and practical study on pre-training DMs with condition corruption.
Through holistic exploration with synthetically corrupted datasets, we reveal a surprising observation that slight pre-training corruption can be beneficial for DMs.

\subsection{Pre-training Evaluation}
\label{sec:understand-pretrain}

\textbf{Pre-training Setup}. 
Here, we adopt Latent Diffusion Models (LDMs) \cite{rombach2022high} with the pre-trained VQ-VAE \cite{van2017neural,esser2021taming} and a down-sampling factor of 4 for the latent space of observed data $\mathbf{x}$, denoted as LDM-4.
More specifically, we train class-conditional and text-conditional LDM-4 from scratch on synthetically corrupted IN-1K \cite{krizhevsky2012imagenet} and CC3M \cite{sharma2018conceptual}, respectively, with a resolution of $256 \times 256$ .
We use a class embedding layer and a learnable pre-trained BERT \cite{devlin2018bert} to compute the conditional embeddings of the IN-1K class labels and the CC3M text prompts. 
To introduce synthetic corruption into the conditions, we randomly flip the class label into a random class for IN-1K, and randomly swap the text of two sampled image-text pairs for CC3M, following \cite{chen2023understanding,chen2024learning} (other corruption types studied in \cref{sec:understand-discussion}). 
We train models with different corruption ratios $\eta \in \{0, 2.5, 5, 7.5, 10, 15, 20\}\%$
More details on synthetic corruption and pre-training recipes are shown in \cref{sec:appendix_pretrain-corruption} and \ref{sec:appendix_pretrain-ldm_setup}.

\begin{figure}[t!]
\centering
    \hfill
    \subfigure[FID - IS, IN-1K]{\label{fig:ldm_pretrain_fid-fid_in1k}\includegraphics[width=0.24\linewidth]{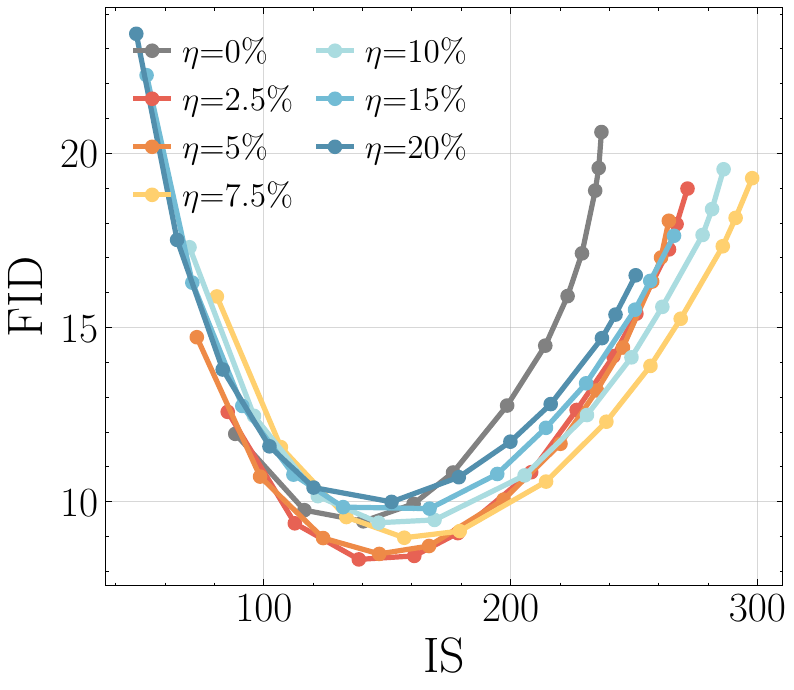}}
    \hfill
    \subfigure[Prec. - Recall, IN-1K]{\label{fig:ldm_pretrain_fid-prec_in1k}\includegraphics[width=0.24\linewidth]{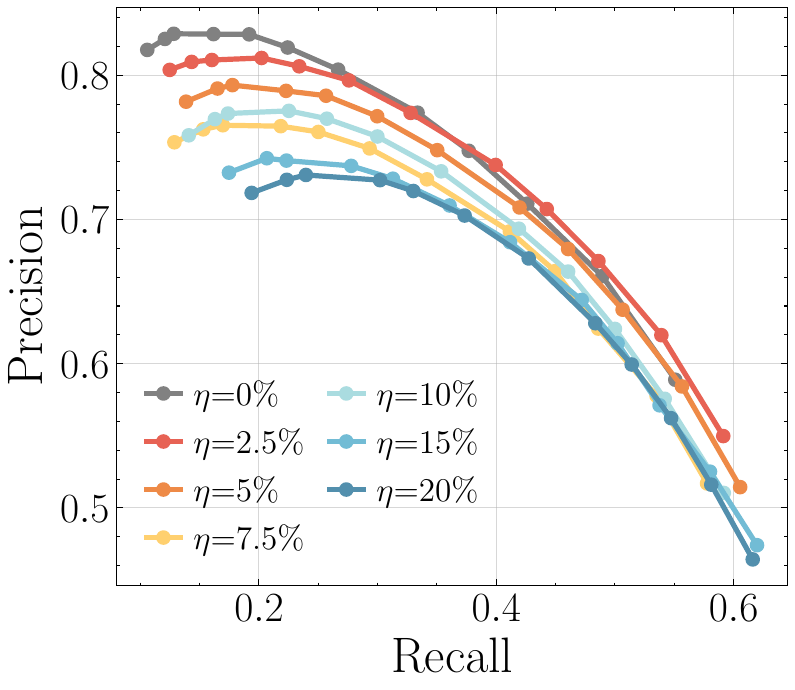}}
    \hfill
    \subfigure[FID - CS, CC3M]{\label{fig:ldm_pretrain_fid-fid_cc3m}\includegraphics[width=0.24\linewidth]{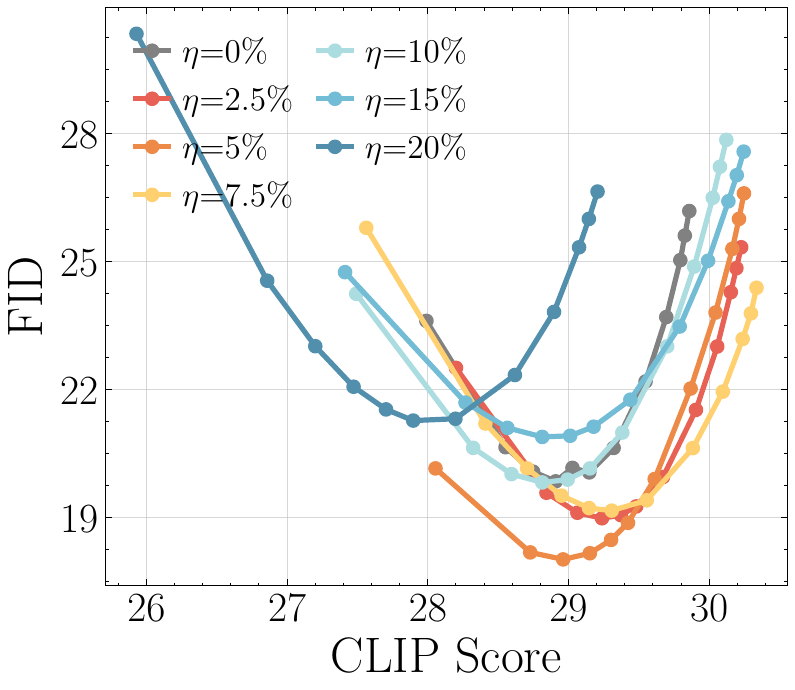}}
    \hfill
    \subfigure[Prec. - Recall, CC3M]{\label{fig:ldm_pretrain_fid-prec_cc3m}\includegraphics[width=0.24\linewidth]{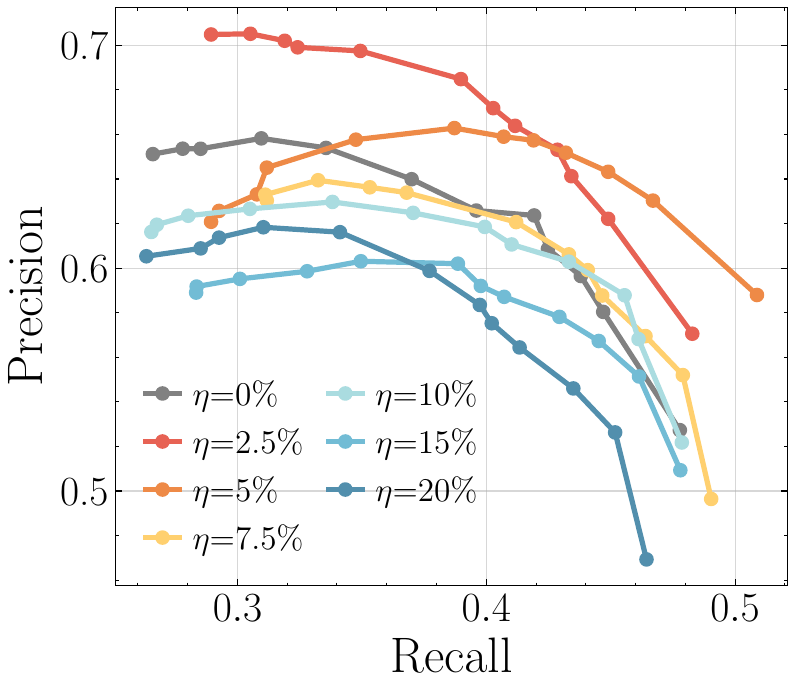}}
    \hfill
\vspace{-0.1in}
\caption{
Quantitative evaluation of generated images from class and text-conditional LDMs pre-trained with condition corruption. 
All metrics are computed over $50K$ generated images and validation images of IN-1K and MS-COCO.
We plot FID vs. IS or CS ((a) and (c)) , and Precision vs. Recall ((b) and (d)), where each point indicates the results computed from using a guidance scale. 
Models pre-trained with slight condition corruption achieve better FID, IS or CS, and PR trade-off.
}
\vspace{-0.15in}
\label{fig:ldm_pretrain_fid}
\end{figure}

\begin{figure}[t!]
\centering
    \hfill
    \subfigure[RMD, IN-1K]{\label{fig:ldm_pretrain_rmd-rmd_in1k}\includegraphics[width=0.24\linewidth]{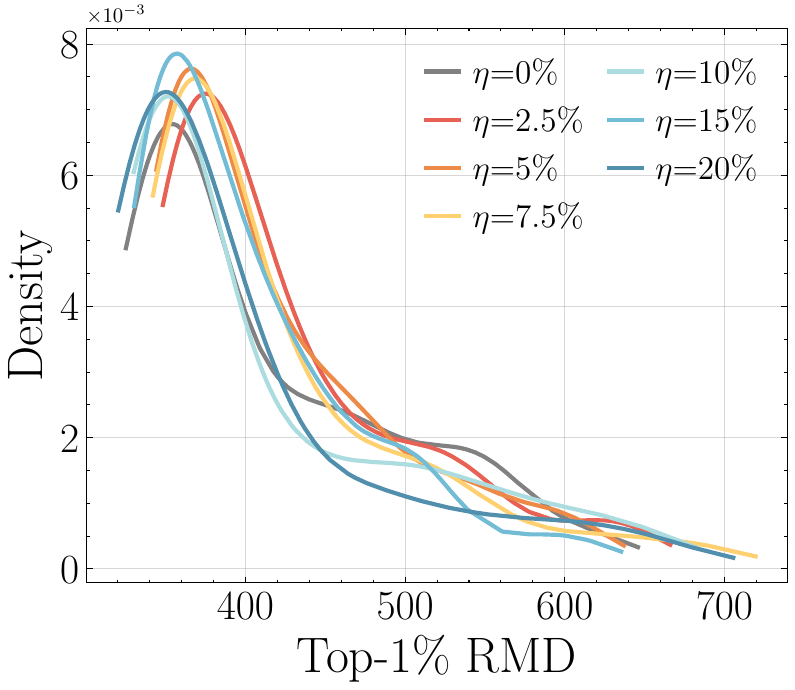}}
    \hfill
    \hfill
    \subfigure[Entropy, IN-1K]{\label{fig:ldm_pretrain_rmd-dist_in1k}\includegraphics[width=0.24\linewidth]{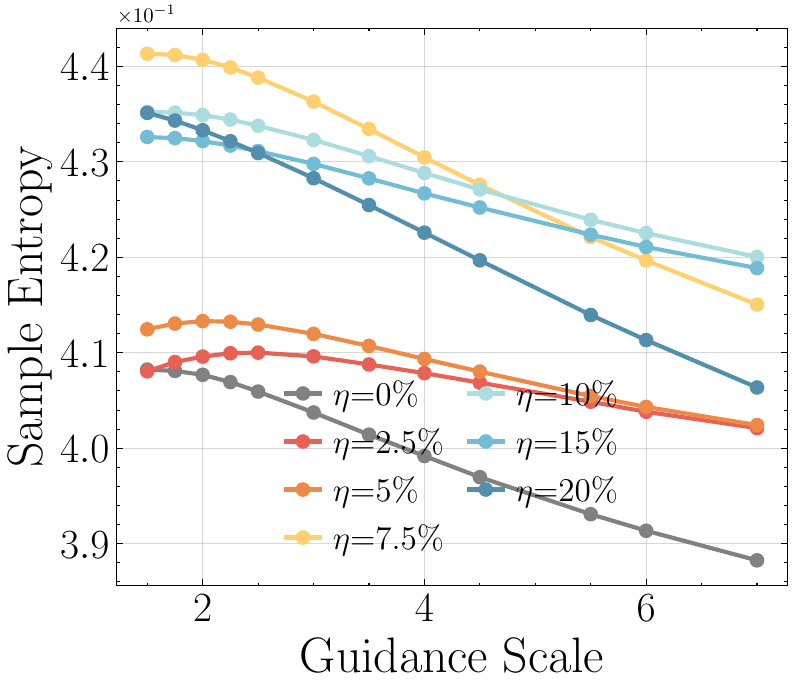}}
    \hfill
    \subfigure[RMD, CC3M]{\label{fig:ldm_pretrain_rmd-rmd_cc3m}\includegraphics[width=0.24\linewidth]{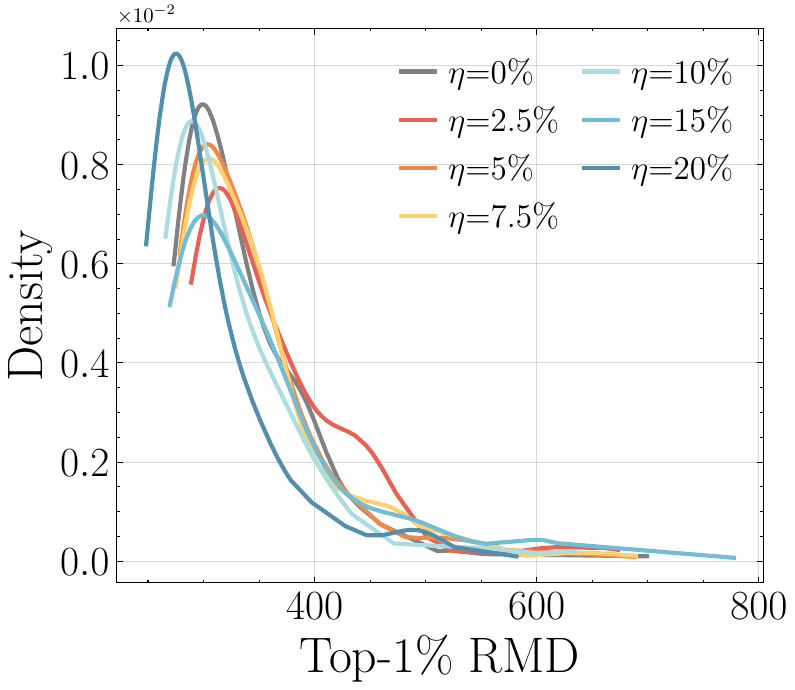}}
    \hfill
    \hfill
    \subfigure[Entropy, CC3M]{\label{fig:ldm_pretrain_rmd-dist_cc3m}\includegraphics[width=0.24\linewidth]{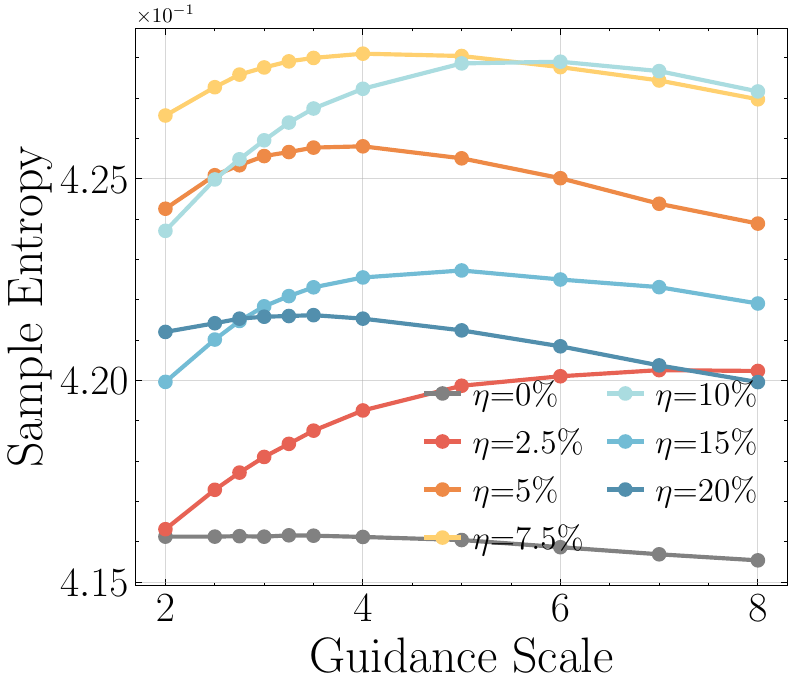}}
    \hfill
\vspace{-0.1in}
\caption{
Quantitative evaluation of complexity and diversity of class and text-conditional LDMs. 
We plot the top-$1\%$ RMD score ((a) and (c)) which measures the complexity and diversity of samples (with $s=2.0$ and $s=3.0$ for IN-1K and CC3M LDMs), and the sample entropy ((b) and (d)) as a proxy measure of diversity, where each point indicates the result of a guidance scale.
Models pre-trained with slight condition corruption generate samples of higher complexity and diversity.
}
\vspace{-0.2in}
\label{fig:ldm_pretrain_rmd}
\end{figure}

\textbf{Evaluation of Pre-trained Models}. 
We directly use the pre-trained LDMs to generate images to study the effects of condition corruption in the pre-training stage. 
We use IN-1K class labels for class-conditional LDMs and MS-COCO text prompts \cite{lin2014microsoft} for text-conditional LDMs to generate 50K images and compare with the real validation images.
The images are generated using a set of guidance scales $s \in \{1.5, 2.0, \dots, 10.0\}$ and DPM \cite{lu2022dpm} scheduler with 50 steps for faster inference speed\footnote{In \cref{sec:appendix_full_pretrain}, we also present the results of IN-1K LDM-4 using the DDIM \cite{song2020denoising} scheduler with 250 steps.}. 
We adopt Fr\'{e}chet Inception Distance (FID) \cite{heusel2017gans}, Inception Score (IS) \cite{salimans2016improved}, Precision, and Recall \cite{kynkaanniemi2019improved} to evaluate the quality, fidelity, and coverage of the generated images.
For CC3M models, we use the CLIP score (CS) \cite{hessel2021clipscore} to measure the similarity of the generated images and conditional text prompts. 
From the perspectives of sample complexity and diversity, we compute the top-$1\%$ Relative Mahalanobis Distance (RMD) \cite{cui2024learning,seo2024just}, calculated from the estimated class-specific and class-agnostic distributions of generated data, and the sample entropy \cite{shannon2001mathematical,wu2024theoretical}, calculated from the VQ-VAE codebook.
We also adopt other metrics, including sFID \cite{nash2021generating}, TopPR F1 \cite{kim2024topp}, average $L_2$ distance, and memorization ratio \cite{carlini2023extracting}.
More details of the metrics used are shown in \cref{sec:appendix_pretrain-metric}.

\textbf{Results}.
We present the main quantitative results of pre-training in \cref{fig:ldm_pretrain_fid} and \ref{fig:ldm_pretrain_rmd}, and the qualitative results in \cref{fig:ldm_pretrain_circular_walk}. 
More results are shown in \cref{sec:appendix_full_pretrain}.
In summary, we found that \textbf{slight pre-training corruption\footnote{Slight corruption corresponds to $\eta \leq 7.5\%$, which might be common in practical large-scale datasets.} can facilitate the quality, fidelity, and diversity of generated images}:
\begin{itemize}[leftmargin=1em]
\setlength\itemsep{0em}
\item Class and text-conditional models pre-trained with slight corruption achieve significantly lower FID and higher IS and CLIP score (\cref{fig:ldm_pretrain_fid-fid_in1k} and \ref{fig:ldm_pretrain_fid-fid_cc3m}). 
They also present comparable and better Precision-Recall curves (\cref{fig:ldm_pretrain_fid-prec_in1k} and \ref{fig:ldm_pretrain_fid-prec_cc3m}), compared to clean pre-trained models. 
\item Models pre-trained with slight corruption generate images with higher complexity and diversity, with a right-shifted density of RMD (\cref{fig:ldm_pretrain_rmd-rmd_in1k} and \ref{fig:ldm_pretrain_rmd-rmd_cc3m}), and larger entropy (\cref{fig:ldm_pretrain_rmd-dist_in1k} and \ref{fig:ldm_pretrain_rmd-dist_cc3m}). 
\item Qualitatively, models with slight corruption learn a more diverse distribution. Generated images present better variability in the circular walk around the latent space (\cref{fig:ldm_pretrain_circular_walk-in1k} and \ref{fig:ldm_pretrain_circular_walk-cc3m}).
\item More corruption in pre-training can potentially lead to quality and diversity degradation.
As $\eta$ increases, almost all metrics first improve and then degrade. 
However, the degraded measure with more corruption sometimes is still better than the clean ones (e.g. IS and Entropy).
\end{itemize}

\begin{figure}[t!]
\centering
        \subfigure[IN-1K]{\label{fig:ldm_pretrain_circular_walk-in1k}\includegraphics[width=0.49\linewidth]{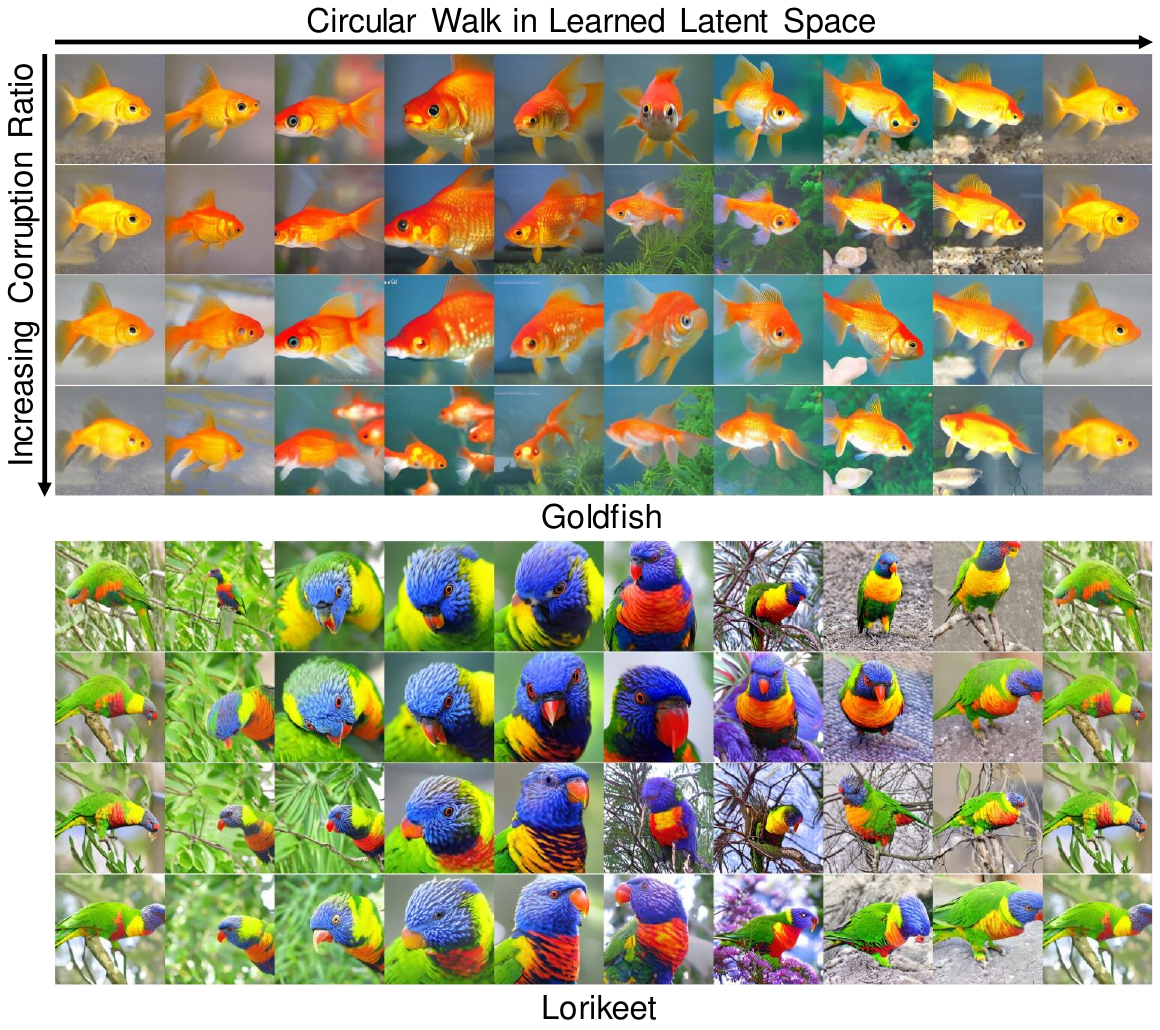}}
        \subfigure[CC3M]{\label{fig:ldm_pretrain_circular_walk-cc3m}\includegraphics[width=0.49\linewidth]{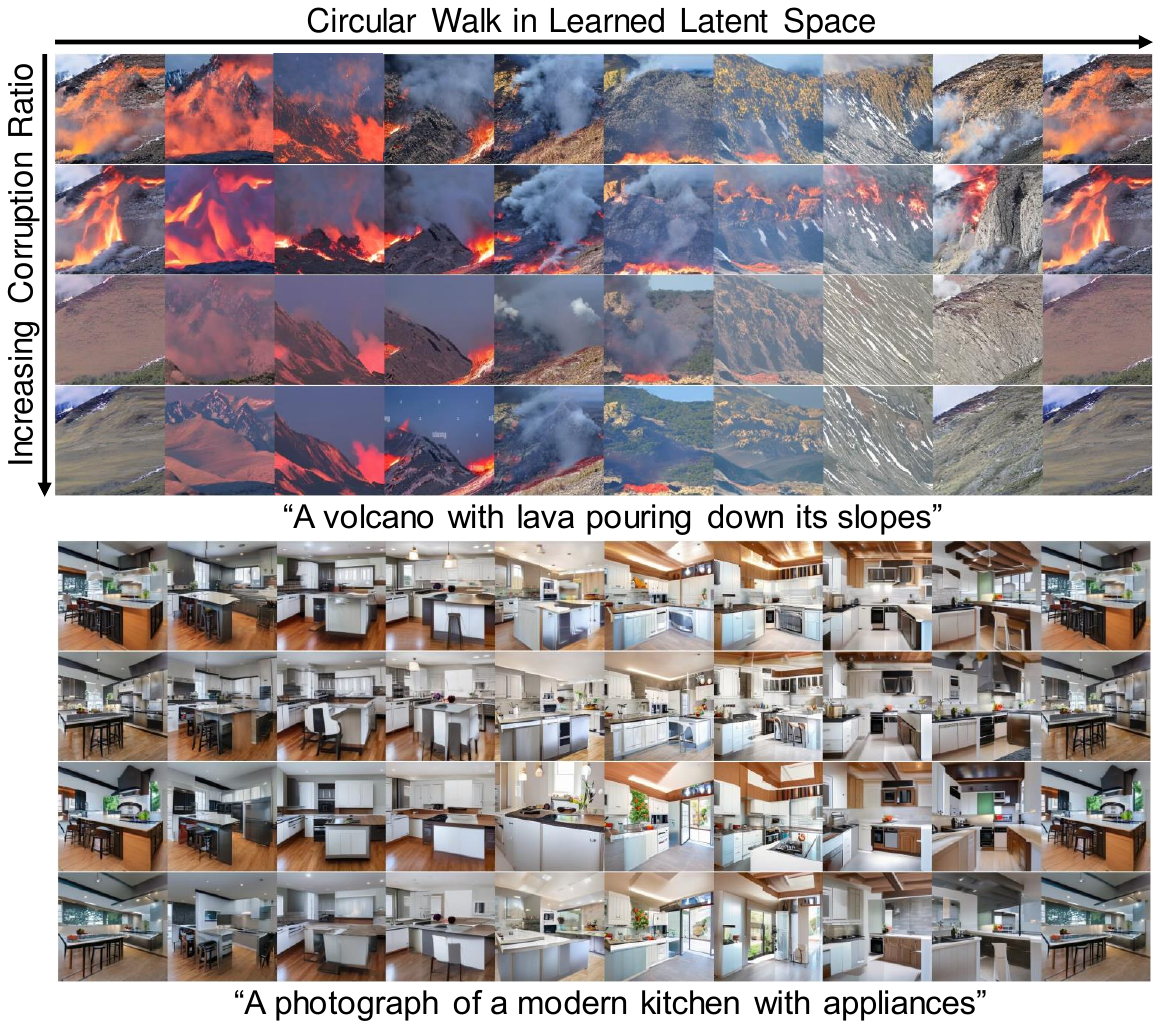}}
\vspace{-0.1in}
\caption{Qualitative evaluation of images generated from circular walk around the learned latent space using (a) class-conditional IN-1K LDMs and (b) text-conditional CC3M LDMs. 
Models pre-trained with slight condition corruption present more diversity in the learned distribution. 
}
\vspace{-0.25in}
\label{fig:ldm_pretrain_circular_walk}
\end{figure}

\subsection{Downstream Personalization Evaluation}
\label{sec:understand-personalization}

A common scenario of DMs pre-trained on large-scale datasets is that they can be personalized and customized for more controllable generation \cite{cao2024controllable,zhang2024survey} after tuning on smaller datasets. 
Here, we also examine the effects of pre-training condition corruption of DMs at downstream personalization tasks.

\textbf{Downstream Personalization Setup}. 
We personalize the pre-trained LDMs with two common methods: ControlNet \cite{zhang2023adding} and T2I-Adapter \cite{mou2023t2i}. 
Both methods can enable the pre-trained DMs to generate more controllable images according to input spatial conditioning.
For fair comparison, we automatically annotate the ImageNet-100 (IN-100) dataset to canny edges using the OpenCV canny detector \cite{itseez2015opencv} and segmentation masks using SegmentAnything (SAM) \cite{kirillov2023segment}, similar to Zhang et al. \cite{zhang2023adding}.
For text-conditional LDMs, we additionally use BLIP \cite{li2022blip} to generate MS-COCO-style text prompts for IN-100. 
We then fine-tune the LDMs on the annotated training set of IN-100.
More details on the annotation and personalization setup are shown in \cref{sec:appendix_pretrain-in100_anno} and \ref{sec:appendix_pretrain-adapt_setup}, respectively.

\textbf{Evaluation of Personalized Models}.
After tuning, we evaluate the personalized LDMs in the annotated validation set of IN-100. 
We mainly compute FID, IS, Precision, and Recall to compare the models. 
We similarly use a set of guidance scales to generate images, but only report results of the best guidance scale in this part due to space limit. 
The complete results are shown in \cref{sec:appendix_full_personalization}.

\textbf{Results}. 
Similarly, from the main results in \cref{fig:ldm_persoanlization_fid} and \cref{fig:ldm_persoanlization_vis}, we found that \textbf{slight pre-training corruption can also benefit the quality of generated images at downstream personalization}:
\begin{itemize}[leftmargin=1em]
\setlength\itemsep{0em}
\item Slight pre-training corruption helps the personalized model generate images with lower FID (\cref{fig:ldm_persoanlization_fid-fid_in1k} and \ref{fig:ldm_persoanlization_fid-fid_cc3m}) and higher IS score (\cref{fig:ldm_persoanlization_fid-is_in1k} and \ref{fig:ldm_persoanlization_fid-is_cc3m}), whereas more corruption deteriorates.
\item Qualitatively, the personalization images from slight corruption pre-trained models also present more diversity, better fidelity with the input spatial controls, and higher quality.  
\end{itemize}



\subsection{Discussion: Other Types of Pre-training Corruption and Diffusion Models}
\label{sec:understand-discussion}

Our previous studies mainly involve LDMs and symmetric random corruption.
Here, we additionally study other types of corruption and DMs to verify that the above observations universally hold. 

\textbf{Condition Corruption}.
We consider asymmetric (Asym.) label corruption for IN-1K and Large Language Model (LLM) corruption for CC3M.
For IN-1K, we introduce corruption only within the overlapped classes with CIFAR-100 \cite{krizhevsky2009learning}, while maintaining others as clean.
For CC3M, we prompt GPT-4 \cite{achiam2023gpt4} to corrupt the texts. 
More details of the corruption are shown in \cref{sec:appendix_pretrain-corruption}.

\textbf{Diffusion Models}. 
LDMs utilize U-Net \cite{ronneberger2015u} as backbone and Cross-Attention for adding conditional information \cite{rombach2022high}. 
We pre-train class-conditional diffusion transformers on IN-1K for extra assessment, termed DiT-XL/2 \cite{peebles2023scalable}, with Transformer \cite{vaswani2017attention} as backbone and adaptive LayerNorm \cite{ba2016layer,perez2018film,brock2018large,karras2019style} for conditional information. 
We also pre-train the recent text-conditional Latent Consistency Models (LCMs) \cite{luo2023latent,song2023consistency} on CC3M, which distill Stable Diffusion v1.5 \cite{stablediffusion} models to enable swift inference with minimal steps, noted as LCM-v1.5. 
Detailed setup is shown in \cref{sec:appendix_pretrain-dit_setup,sec:appendix_pretrain-lcm_setup}.

\begin{figure}[t!]
\centering
    \hfill
    \subfigure[FID, IN-1K]{\label{fig:ldm_persoanlization_fid-fid_in1k}\includegraphics[width=0.24\linewidth]{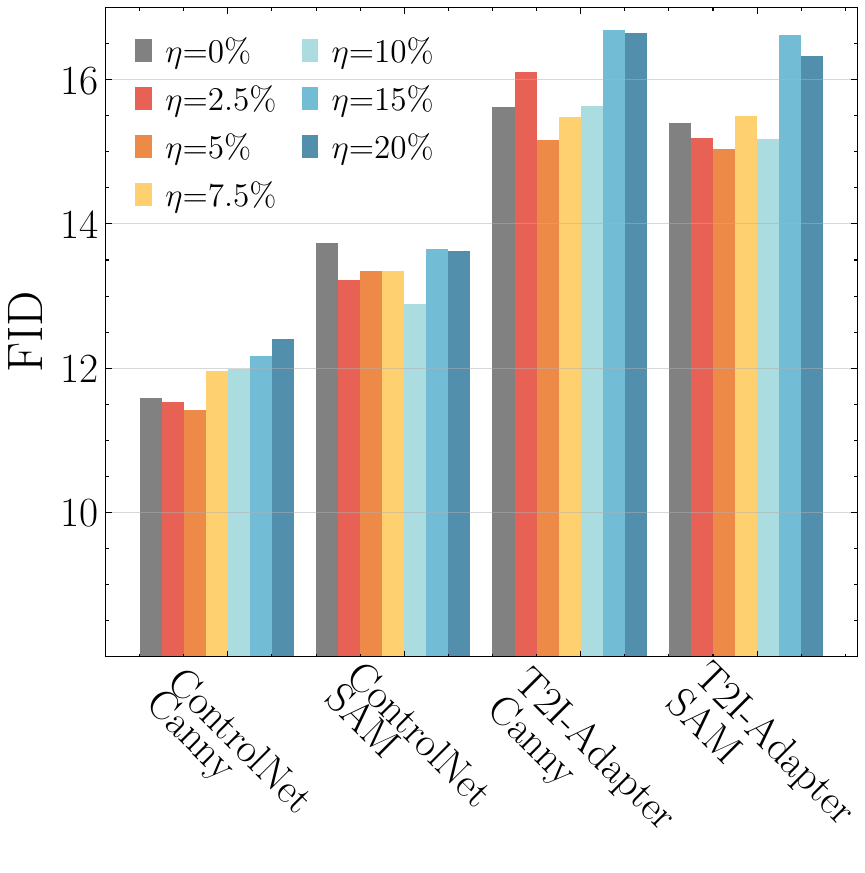}}
    \hfill
    \subfigure[IS, IN-1K]{\label{fig:ldm_persoanlization_fid-is_in1k}\includegraphics[width=0.24\linewidth]{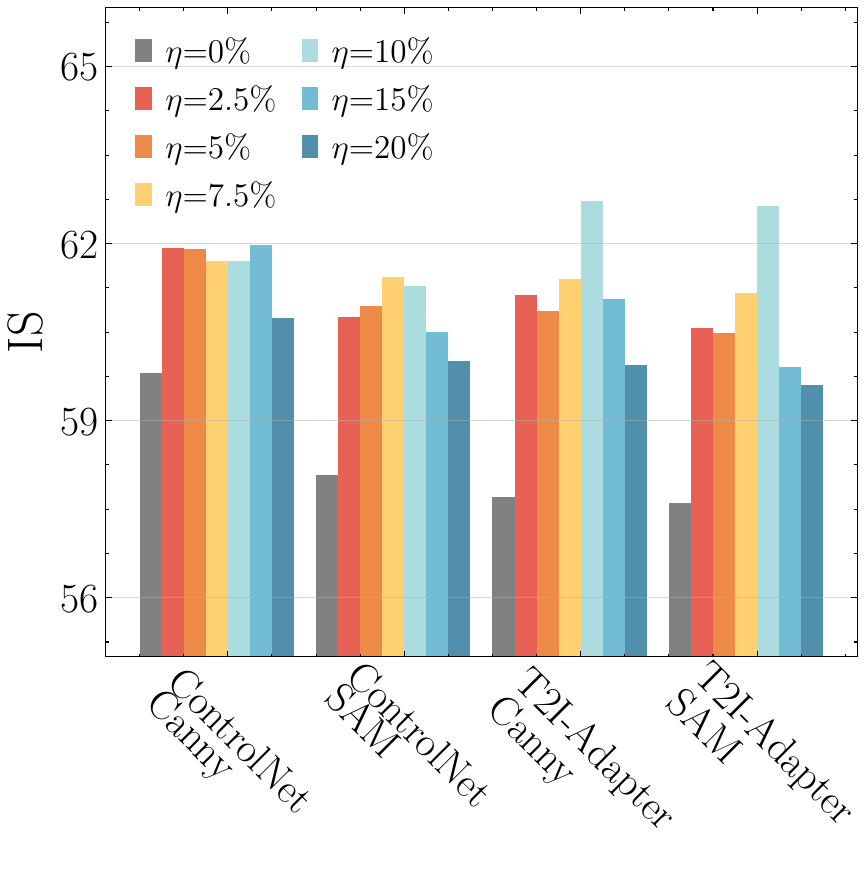}}
    \hfill
    \subfigure[FID, CC3M]{\label{fig:ldm_persoanlization_fid-fid_cc3m}\includegraphics[width=0.24\linewidth]{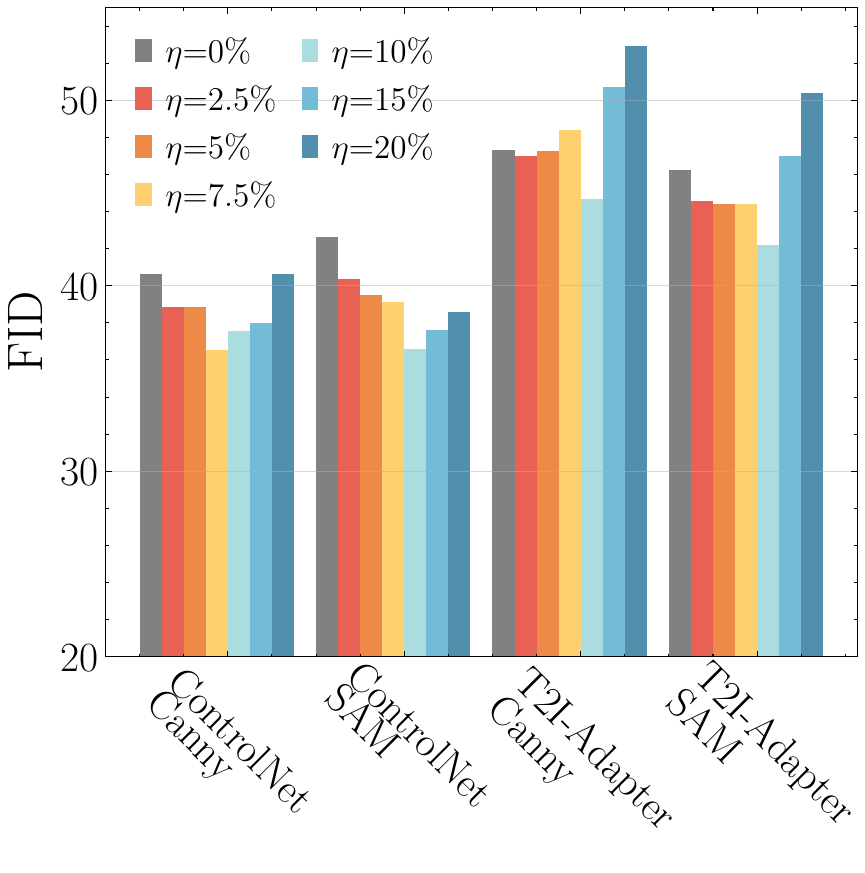}}
    \hfill
    \subfigure[IS, CC3M]{\label{fig:ldm_persoanlization_fid-is_cc3m}\includegraphics[width=0.24\linewidth]{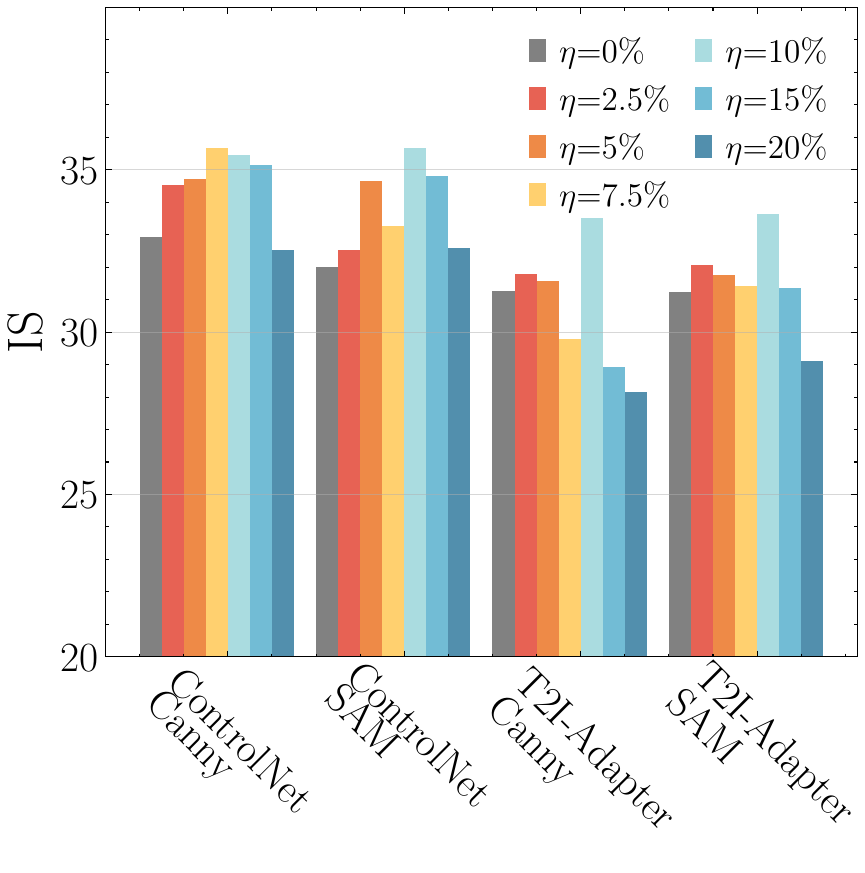}}
    \hfill
\vspace{-0.15in}
\caption{
Quantitative evaluation of ControlNet and T2I-Adapter personalized class and text-conditional LDMs.
FID ((a) and (c)) and IS ((b) and (d)) are computed using the 5K generated images. 
Slightly corrupted pre-trained models also present better performance in downstream personalization.
}
\vspace{-0.2in}
\label{fig:ldm_persoanlization_fid}
\end{figure}

\textbf{Results}.
We present the main results in \cref{fig:teaser_fid} due to the space limit. 
Full results are shown in \cref{sec:appendix_full_pretrain}. 
We find that slight condition corruption of various types universally facilitates the performance of different DMs and consistently makes them outperform the clean pre-trained ones.


\vspace{-.1in}
\section{Theoretical Analysis}
\label{sec:theory}
In this section, we theoretically analyze condition corruption and find that slight corruption prevents the generated distribution from collapsing to the empirical distribution of the training data and encourages coverage of the entire data space, thereby enhancing diversity and alignment with the ground truth. We present a concise overview here and provide a comprehensive analysis in \cref{app:Derivations and Proofs}.

\begin{figure}[t!]
\centering
        \subfigure[IN-1K]{\label{fig:ldm_persoanlization_vis-in1k}\includegraphics[width=0.49\linewidth]{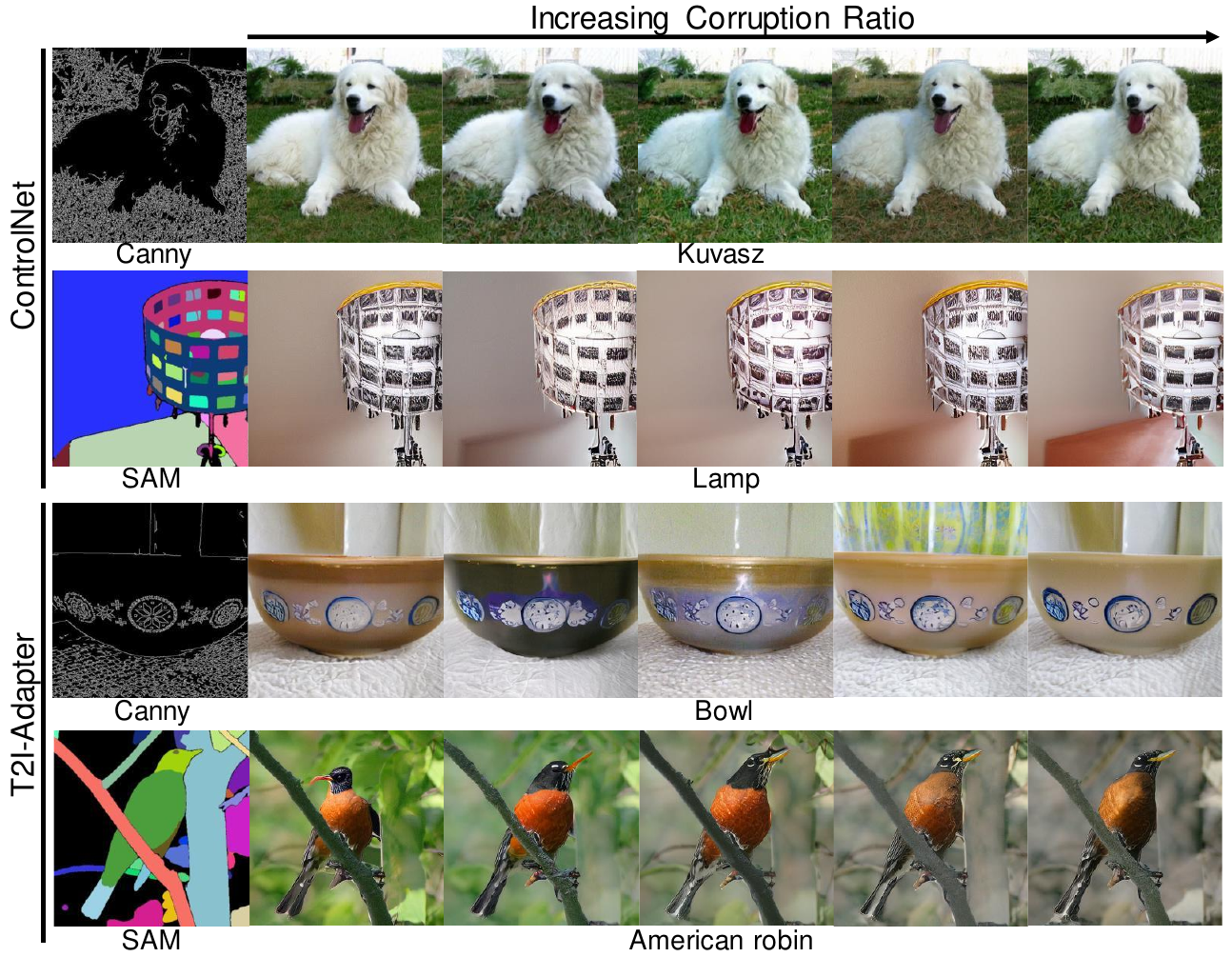}}
        \subfigure[CC3M]{\label{fig:ldm_persoanlization_vis-cc3m}\includegraphics[width=0.49\linewidth]{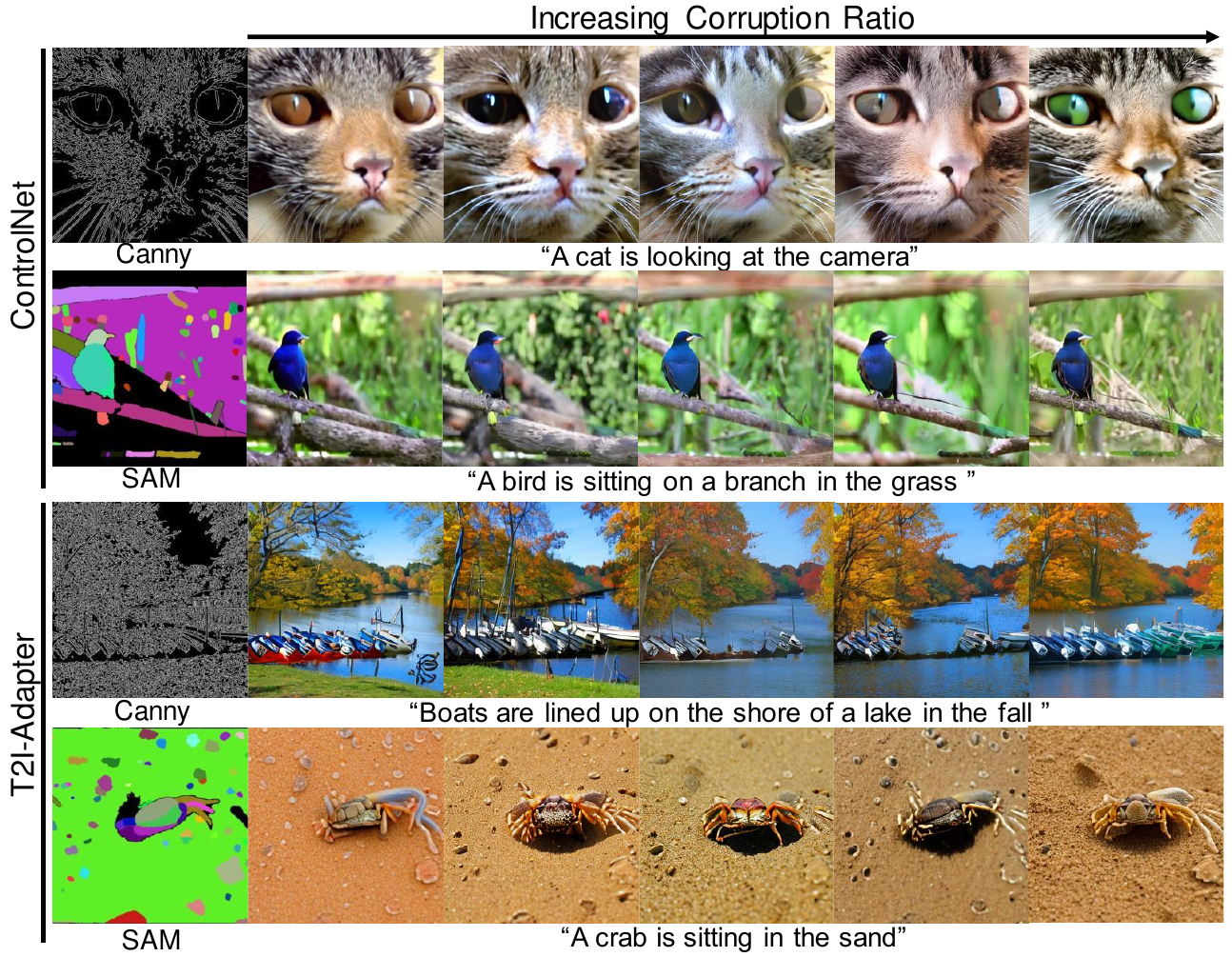}}
\vspace{-0.15in}
\caption{Qualitative evaluation of ControlNet and T2I-Adapter (a) IN-1K and (b) CC3M LDMs. 
}
\vspace{-0.15in}
\label{fig:ldm_persoanlization_vis}
\end{figure}

\textbf{Data Distribution.} We concentrate on the prototypical problem of sampling from Gaussian mixture models (GMMs). Specifically, we consider the distribution of data $\mathbf{x} \in \mathbb{R}^d$ that satisfies:
\begin{align}
   \mathbb{P}(\mathbf{x}) := \sum_{y \in \mathcal{Y}} w_y \mathcal{N}(\boldsymbol{\mu}_y,\mathbf{I}),
\end{align}
where $y$ denote class labels of a finite set $y \in \{1,\dots,|\mathcal{Y}|\}$. 
Given any class, $\mathbf{x}|y$ follows a Gaussian $\mathcal{N}(\boldsymbol{\mu}_y,\mathbf{I})$, and $w_y$ represents the weight of the Gaussian components which satisfies $\sum_{y \in \mathcal{Y}} w_y = 1$.

\textbf{Denoising Networks and Condition Corruption.} Inspired by recent works that also target on GMMs \cite{chen2024learninggaussian,gatmiry2024learning} of DMs, we parameterize the denoising network as a piece-wise linear function:
\begin{align}
\label{eq:piecewise network}
\boldsymbol{\boldsymbol{\epsilon}}_{\theta}(\mathbf{x}_t,y^c) = \sum^{|\mathcal{Y}|}_{k=1}\mathbbm{1}_{y^c=k} \Big(\mathbf{W}^k_t \mathbf{x}_t + \mathbf{V}^k_t \mathbf{c}(y^c)\Big),
\end{align}
where $\mathbf{c}(y^c)$ is the one-hot encoding of corrupted label $y^c$ and $\{\mathbf{W}^k_t,\mathbf{V}^k_t\}^{|\mathcal{Y}|}_{k=1}$ are trainable parameters. Specifically, following a line of existing work \cite{hu2019simple,tang2021detecting,speth2019automated}, we adopt a simpler label-noise model by adding Gaussian perturbation to the label embedding, perturbing the clean condition $\mathbf{c}(y)$ with standard Gaussian noise $\boldsymbol{\xi}$ to obtain $\mathbf{c}(y^c) = \mathbf{c}(y)+ \gamma \boldsymbol{\xi}$. Here, the corruption control parameter $\gamma$ corresponds to the corruption ratio $\eta$ for a more direct noise magnitude control.
While our theoretical framework focuses on Gaussian noise, it can also be extended to distributions such as uniform.
\subsection{Generation Diversity: Clean vs. Corrupted Conditions}
We employ entropy to evaluate the diversity of generated images, following Wu et al. \cite{wu2024theoretical}. 
Higher entropy suggests a wider spread of data, yielding greater diversity in generated images, while lower entropy implies a more concentrated distribution with less diversity.
We present \cref{theorem:diversity}, showing the difference in entropy between generations with corrupted and clean conditions: 
\begin{theorem}
\label{theorem:diversity}
    For any class $k \in \mathcal{Y}$ and sufficiently large length $T$, assuming the norm of corresponding expectation $\|{\boldsymbol{\mu}}_k\|^2_2$ is a constant and the empirical covariance of training data is full rank, let $\mathbf{z}_T$ and ${{\mathbf{z}}}^c_T$ be the generation with clean and corrupted conditions respectively, then it holds that
    \begin{align}
  H({{\mathbf{z}}}^c_T|y=k) -  H({\mathbf{z}}_T|y=k) =  \Theta(\gamma^2d),
    \end{align}
where $\gamma$ is the corruption control parameter and $d$ is the data dimension.
\end{theorem}
The proof is provided in \cref{app:theorem-diversity}. Theorem \ref{theorem:diversity} indicates that for any class $k$, corrupted conditions enhance image diversity by increasing generation entropy. Moreover, with suitable $\gamma$ values, image diversity can grow with noise, aligning with observations in \cref{fig:ldm_pretrain_rmd}.

\subsection{Generation Quality: Clean vs. Corrupted Conditions}
We then analyze why corrupted conditions benefit the quality of generated images, as also observed in \cref{sec:understand-pretrain}.
We employ the $2$-Wasserstein distance as a metric to evaluate the sampling error between the true and the generated distributions, with clean and corrupted conditions. 
A distributed generated closer to the real data distribution indicates better image quality \cite{heusel2017gans}. 
In Theorem \ref{theorem:quality}, we analyze the difference in the quality of data generated by corrupted DMs and clean ones:
\begin{theorem}
\label{theorem:quality}
For any $k \in \mathcal{Y}$ and sufficiently large length $T$, assuming the norm of corresponding expectation $\|{\boldsymbol{\mu}}_k\|^2_2$ is constant,
let $\mathbb{P}$, $\mathbb{Q}_\mathbf{X}$ and ${\mathbb{Q}}^c_\mathbf{X}$ be the ground truth, clean, and corrupted condition distributions over training data $\mathbf{X}$.
Then if $\gamma=O(1/\sqrt{\max_{k}n_k})$, it holds that
\begin{align}
\mathbb{E}_\mathbf{X}\Big[\mathcal{W}^2_2(\mathbb{P},\mathbb{Q}_\mathbf{X}) - \mathcal{W}^2_2(\mathbb{P},{\mathbb{Q}}^c_\mathbf{X})|y=k\Big] = {\Omega}\Big(\frac{\gamma^2d}{n_k}\Big),
\end{align}
  where $\mathcal{W}_2(\cdot,\cdot)$ denotes the $2$-Wasserstein distance between two distributions, $n_k$ is the sample size of $k$-labeled dataset, and $d$ is the data dimension.
\end{theorem}
Here the expectation is taken over the random sample of the training dataset from the data distribution. Detailed proof is shown in \cref{app:theorem-quality}. Theorem \ref{theorem:quality} reveals that for any class $k$, small corruption yields generation distributions closer to the true distribution than clean ones. This partially verifies that the generation quality of the uncorrputly trained diffusion model can be improved by adding slight corruption to the training data. This is also well consistent with our empirical observation in \cref{sec:understand-pretrain}, where the noise we used is approximately $0.04\epsilon$, close to the theoretical noise level of $0.03\epsilon$, showing that the FID of the generated images can be improved with a small corruption. 

\section{Improving Diffusion Models with Conditional Embedding Perturbation} 
\label{sec:cond_perturb}

\subsection{Method}

Our previous analysis demonstrates that slight condition corruption in the pre-training could potentially benefit both the image quality and diversity of DMs, which inspires us to improve the pre-training of DMs using this conclusion.
In practice, it is usually infeasible to directly corrupt the conditions in the pre-training datasets either due to their large-scale nature or difficulties to select which conditions to corrupt. 
Instead, we propose to add the perturbation directly to the \emph{conditional embeddings} of DMs, which is termed \emph{conditional embedding perturbation (CEP)}.
Compared to the fixed proportion of condition corruption in datasets we studied before, CEP adds perturbation to every data instance during training on the fly.
Specifically, CEP slightly modifies the DM objective:
\begin{equation}    \mathcal{L}_{\mathrm{DM}}=\mathbb{E}_{\mathbf{x},y,\boldsymbol{\epsilon} \sim \mathcal{N}(0,\mathbf{I}), t \sim \mathcal{U}(1, T)}\left[\left\|\boldsymbol{\epsilon}-\boldsymbol{\epsilon}_\theta\left(\mathbf{x}_t, t, \mathbf{c}_{\theta}(y) + \boldsymbol{\delta} \right)\right\|_2^2\right], 
\end{equation}
where $\boldsymbol{\delta}$ denotes the perturbation added to conditional embeddings $\mathbf{c}_{\theta}(y)$. 
We simply set the perturbation to Uniform, i.e., $\boldsymbol{\delta} \sim \mathcal{U}\left(-\frac{\gamma}{\sqrt{d}}\mathbf{I}, \frac{\gamma}{\sqrt{d}}\mathbf{I} \right)$, or to Gaussian, i.e., $\boldsymbol{\delta} \sim \mathcal{N} \left(0, \frac{\gamma}{\sqrt{d}}\mathbf{I} \right)$, where the design of the factor $\frac{\gamma}{\sqrt{d}}$ mainly follows previous works \cite{vaswani2017attention,zhu2019freelb,kong2022robust,nukrai2022text,jain2023neftune}, $d$ denotes the dimension of $\mathbf{c}_{\theta}(y)$, and $\gamma$ controls the perturbation magnitude, mimicing the corruption ratio $\eta$.
The main purpose of CEP is to learn better DMs with perturbation on relatively clean and heavily filtered datasets, such as CC3M and IN-1K studied in this paper, but it is also applicable to slightly corrupted datasets.
Recently, Ning et al. \cite{ning2023input} found that adding input perturbations (IP) to latent variables $\mathbf{z}_t$ during the forward process also helps diffusion training by mitigating exposure bias \cite{ning2023elucidating}.
Compared to IP, CEP does not alter the marginal data distribution, but encourages the learned joint distribution to be more diverse.
\vspace{-.1in}
\subsection{Experiments}

\begin{table}[t!]
    \begin{minipage}{.51\textwidth}
      \centering
        \caption{Pre-training results of IN-1K and MS-COCO using diffusion models pre-trained with perturbation. CEP achieves the best results (in bold). 
        }
        \label{tab:cep_pretrain}
         \resizebox{\textwidth}{!}{%
        \begin{tabular}{@{}c|c|cccc@{}}
        \toprule
        Model                     & Perturb. & FID ($\downarrow$)  & IS  ($\uparrow$)    & Precision ($\uparrow$) & Recall ($\uparrow$) \\ \midrule 
        \multirow{4}{*}{\begin{tabular}[c]{@{}c@{}}LDM-4 \cite{rombach2022high}\\ IN-1K \\ ($s=2.0$) \end{tabular}}   & -            & 9.44 & 138.46 & 0.71      & 0.43   \\
                                  & IP           & 9.18     & 141.77       &  0.67         &  0.43      \\
                                  & \cellc CEP-U        & \cellc 7.00 & \cellc 170.73 & \cellc 0.73      & \cellc \textbf{0.45}   \\
                                  & \cellc CEP-G        & \cellc \textbf{6.91} & \cellc \textbf{180.77} & \cellc \textbf{0.76}      & \cellc 0.44   \\ \midrule 
        \multirow{4}{*}{\begin{tabular}[c]{@{}c@{}}DiT-XL/2 \cite{peebles2023scalable}\\ IN-1K \\ ($s=1.75$) \end{tabular}} & -            & 6.76 & 179.67 & 0.74      & 0.46   \\
                                  & IP           &   6.75   &  182.28      &      0.75     &  0.45       \\
                                  & \cellc CEP-U        &   \cellc \textbf{5.51}  &  \cellc \textbf{189.94}     &        \cellc  \textbf{0.77} & \cellc  0.46     \\
                                  & \cellc CEP-G        & \cellc 5.92     & \cellc 185.21       & \cellc   0.75       & \cellc 0.45   \\ \midrule
        \multirow{4}{*}{\begin{tabular}[c]{@{}c@{}}LDM-4 \cite{rombach2022high} \\ CC3M \\ ($s=3.0$) \end{tabular}}   & -            & 19.85 & 30.09 & 0.61      & 0.42   \\
                                  & IP           &   19.48   & 30.17       &  0.59         &  0.42      \\
                                  & \cellc CEP-U        & \cellc \textbf{17.93} & \cellc \textbf{30.77} & \cellc 0.65      & \cellc \textbf{0.41}   \\
                                  & \cellc CEP-G        & \cellc 18.59 & \cellc 30.50 & \cellc \textbf{0.67}      & \cellc 0.39  \\ \midrule 
        \multirow{4}{*}{\begin{tabular}[c]{@{}c@{}}LCM-v1.5 \cite{luo2023latent} \\ CC3M \\ ($s=4.5$) \end{tabular}}  & -            &  23.59 &  39.15     &  0.67  & 0.35 \\
                                  & IP           &  23.63    &   40.07     & 0.65          & 0.35        \\
                                  & \cellc CEP-U        & \cellc \textbf{22.91}     &   \cellc \textbf{40.31}     & \cellc 0.67         &  \cellc   0.35   \\
                                  & \cellc CEP-G        & \cellc 23.40     &  \cellc 40.12      &  \cellc \textbf{0.68}         &   \cellc \textbf{0.36} \\ \bottomrule  
        \end{tabular}%
        }
    \end{minipage}%
    \hfill
    \begin{minipage}{.47\textwidth}
      \centering
        \caption{ControlNet personalization results of IN-100 using LDMs pre-trained with perturbation. CEP achieves the best results (in bold). }
        \label{tab:cep_personalization}
         \resizebox{\textwidth}{!}{%
        \begin{tabular}{@{}c|c|cccc@{}}
        \toprule
        Control                    & Perturb. & FID ($\downarrow$)  & IS  ($\uparrow$)    & Precision ($\uparrow$) & Recall ($\uparrow$) \\ \midrule 
        \multirow{4}{*}{\begin{tabular}[c]{@{}c@{}}IN-1K\\ Canny \\ ($s=2.25$) \end{tabular}}   & -            & 11.59 & 57.01 & 0.82     & 0.61   \\
                                  & IP           &  12.31    & 57.39        &   0.77        &   0.59     \\
                                  & \cellc CEP-U        & \cellc \textbf{11.46}  &  \cellc \textbf{59.29} & \cellc \textbf{0.84}     & \cellc 0.58  \\
                                  & \cellc CEP-G        &  \cellc 11.53 & \cellc 57.59 & \cellc 0.83    & \cellc \textbf{0.61} \\ \midrule 
        \multirow{4}{*}{\begin{tabular}[c]{@{}c@{}} IN-1K \\ SAM \\ ($s=2.25$) \end{tabular}} & -            & 13.74 & 54.52 &  0.79    & 0.49   \\
                                  & IP           & 13.61     & 55.13       &     0.75      &  0.48      \\
                                  & \cellc CEP-U        & \cellc \textbf{12.95}      &  \cellc \textbf{56.68}      & \cellc 0.79           &  \cellc \textbf{0.50}      \\
                                  & \cellc CEP-G        & \cellc 13.44      &      \cellc 56.81  & \cellc \textbf{0.80}         & \cellc  0.49   \\ \midrule
        \multirow{4}{*}{\begin{tabular}[c]{@{}c@{}} CC3M \\ Canny \\ ($s=5.0$) \end{tabular}}   & -            &  40.65 & 32.56 & 0.63     & 0.51  \\
                                  & IP           &  40.12    &  32.43      &      0.62     &  0.52      \\
                                  & \cellc CEP-U        & \cellc 35.91 & \cellc 33.86 & \cellc 0.71      & \cellc 0.51    \\
                                  & \cellc CEP-G        & \cellc \textbf{34.57} & \cellc \textbf{36.59} & \cellc \textbf{0.68}      &  \cellc \textbf{0.53} \\ \midrule 
        \multirow{4}{*}{\begin{tabular}[c]{@{}c@{}} CC3M \\ SAM \\ ($s=4.0$) \end{tabular}}  & -            &  42.64 &   32.00   & 0.63   & 0.51 \\
                                  & IP           & 43.79     & 32.17       &  0.64     & 0.49       \\
                                  & \cellc CEP-U        & \cellc 38.00     & \cellc 32.98       & \cellc 0.67          &  \cellc \cellc 0.51     \\
                                  & \cellc CEP-G        & \cellc \textbf{35.02}     &   \cellc \textbf{35.77}     &  \cellc \textbf{0.67}         & \cellc \textbf{0.53}  \\ \bottomrule  
        \end{tabular}%
        }
    \end{minipage} 
\vspace{-0.25in}
\end{table}

\textbf{Setup}. 
We pre-trained previous class-conditional LDM-4, text-conditional LDM-4, class-conditional DiT-XL/2, and text-conditional LCM-v1.5 with CEP, and compare with IP and clean pre-trained ones.
We use both Uniform and Gaussian perturbation, denoted as CEP-U and CEP-G, respectively. 
We set $\gamma=1$ for all models, with an ablation study with class-conditional LDM-4 with different $\gamma$s.
\begin{wrapfigure}{r}{0.48\textwidth}
\vspace{-.2in}
\begin{center}
    \subfigure[Ablation of $\gamma$]{\label{fig:cep_analysis-ablation}\includegraphics[width=0.49\linewidth]{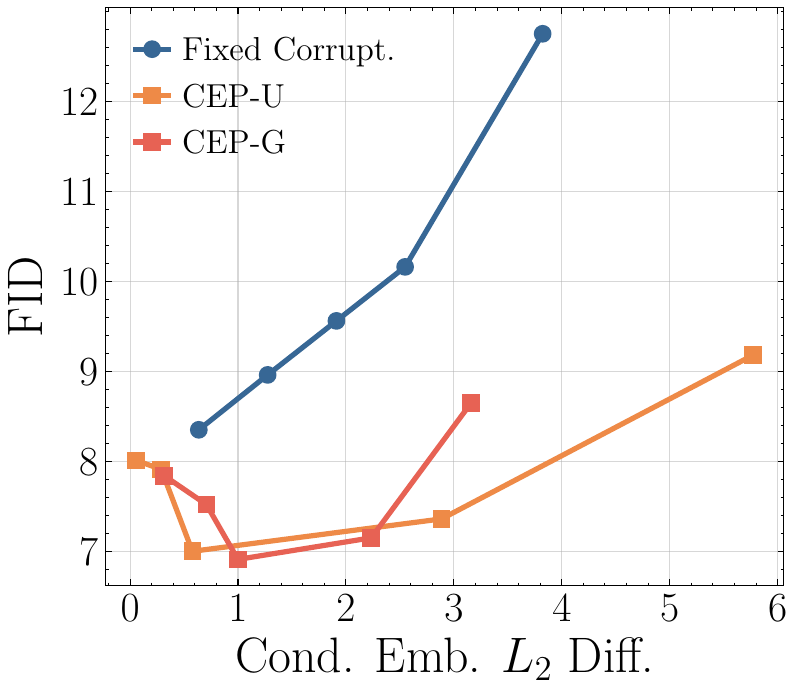}}
    \subfigure[Corrupted IN-1K]{\label{fig:cep_analysis-noisy}\includegraphics[width=0.49\linewidth]{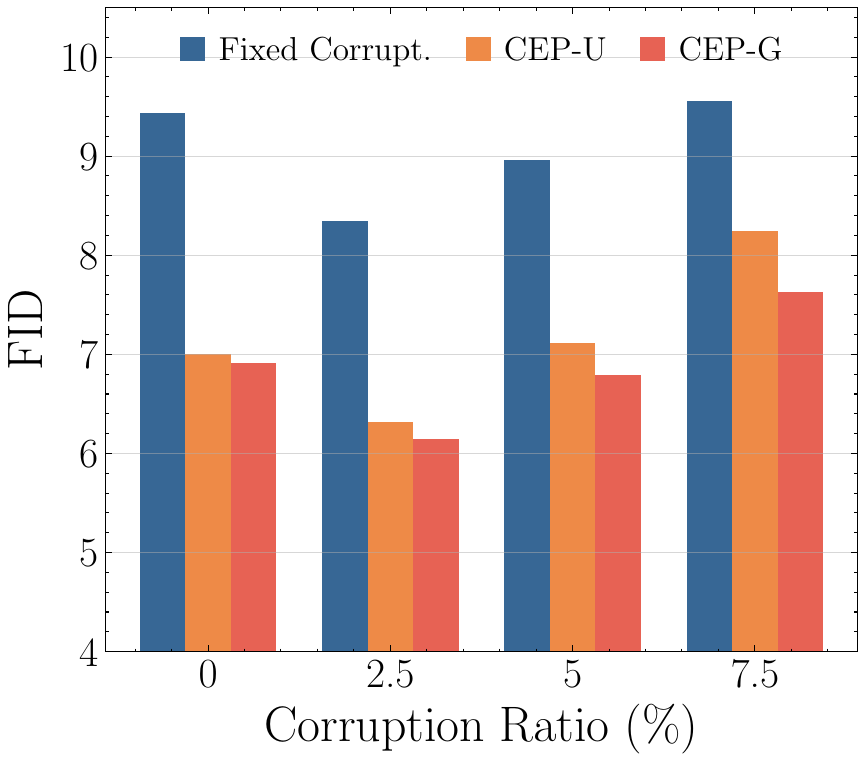}}
\end{center}
\vspace{-0.15in}
\caption{Ablation with LDM-4 IN-1K. (a) FID and average $L_2$ distance of conditional embeddings against clean ones with $\gamma=\{0.1, 0.5, 1.0, 5.0, 10.0\}$, indicated by square points (left to right). We compare with fixed synthetic corruption $\eta=\{2.5, 5, 10, 15\}\%$, shown by circle points. (b) CEP on corrupted IN-1K.}
\vspace{-0.2in}
\label{fig:cep_analysis}
\end{wrapfigure}
We evaluated the pre-trained class-conditional models on IN-1K and and text-conditional models on MS-COCO with FID, IS, Precision, and Recall.
Additionally, we personalize the pre-trained LDMs with ControlNet on IN-100 to validate the effectiveness of CEP pre-training at downstream.

\textbf{Results}. 
We present the pre-training results of CEP in \cref{tab:cep_pretrain}.
CEP significantly and universally improves the performance for different class and text-conditional DMs, e.g., \textbf{2.53} and \textbf{1.25} FID improvement, and \textbf{42.31} and \textbf{10.27} IS improvement of LDM-4 and DiT-XL/2.
CEP also improves precision and recall of DMs.
In contrast, IP only achieves marginal improvement and yields slightly worse precision.
Adopting CEP in pre-training also benefits the personalization tasks, especially for text-conditional LDMs, with FID improvement of \textbf{6.08} and \textbf{7.02} for Canny and SAM spatial control, as shown in \cref{tab:cep_personalization}.
Qualitatively, as shown in \cref{fig:cep_vis}, images generated from DMs with CEP also look more visually appealing and realistic.

The ablation results of $\gamma$ are shown in \cref{fig:cep_analysis-ablation}. 
We also compare the average $L_2$ distance of CEP and fixed corruption against the clean condition embeddings. 
Interestingly, one can observe that CEP achieves a lower FID with more corruption in the embedding space (larger $L_2$ from the clean ones), demonstrating its effectiveness. 
CEP is applicable to slightly corrupted datasets that we may often encounter in practice, as shown in \cref{fig:cep_analysis-noisy}, where it also facilitates the performance significantly.

In addition, we also compare the proposed CEP with traditional regularization methods, such as Dropout \cite{srivastava2014dropout} and Label Smoothing \cite{muller2019does}, and study the effects of fixed and random perturbation during training in \cref{sec:appendix_cep-dropout} and \cref{sec:appendix_cep-fixed}. The results show that CEP is more effective.





\begin{figure}[t!]
\centering
    \subfigure[Pre-training]{\label{fig:cep_vis-pretrain}\includegraphics[width=0.62\linewidth]{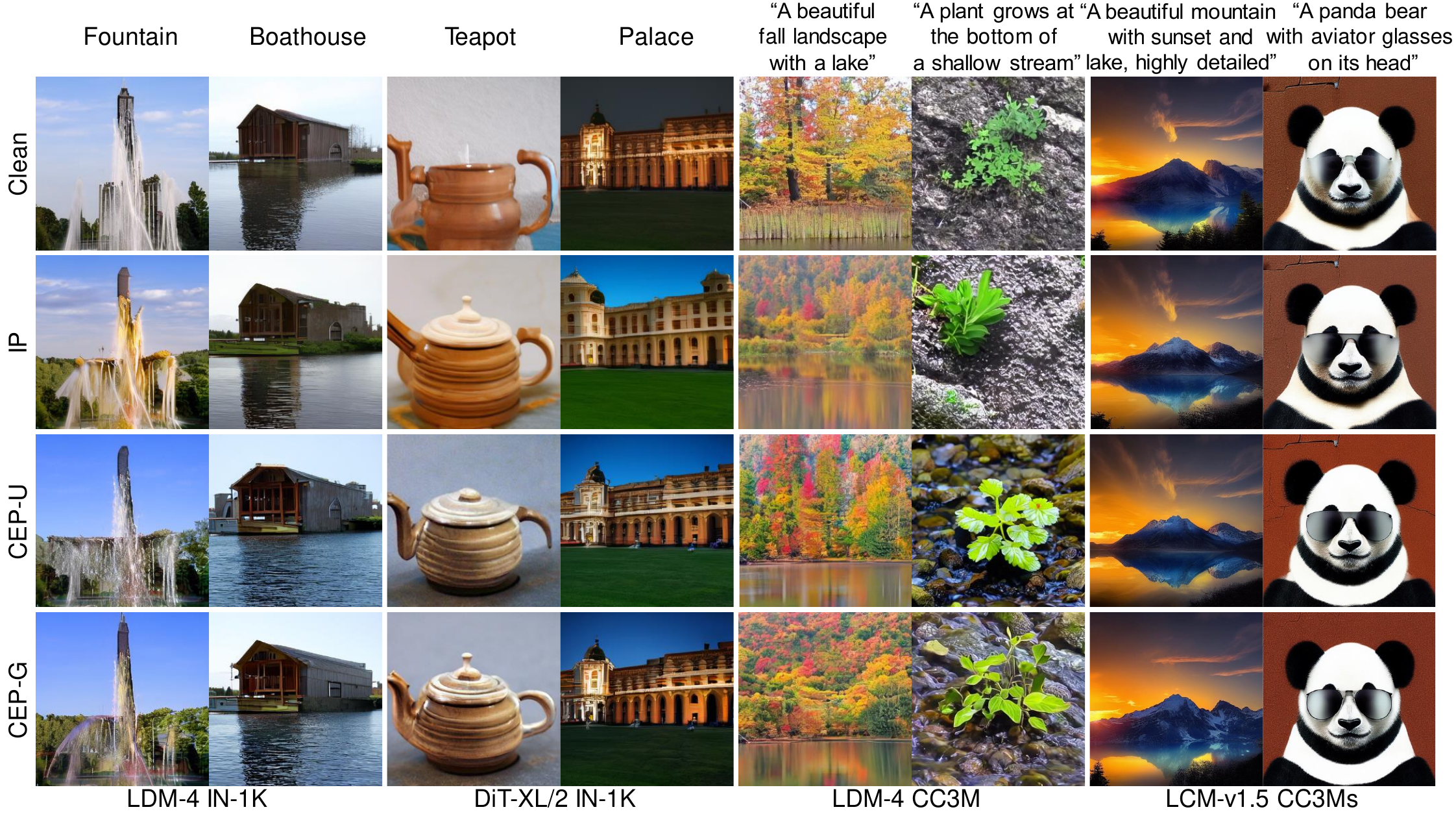}}
    \subfigure[Personalization]{\label{fig:cep_vis-personalization}\includegraphics[width=0.37\linewidth]{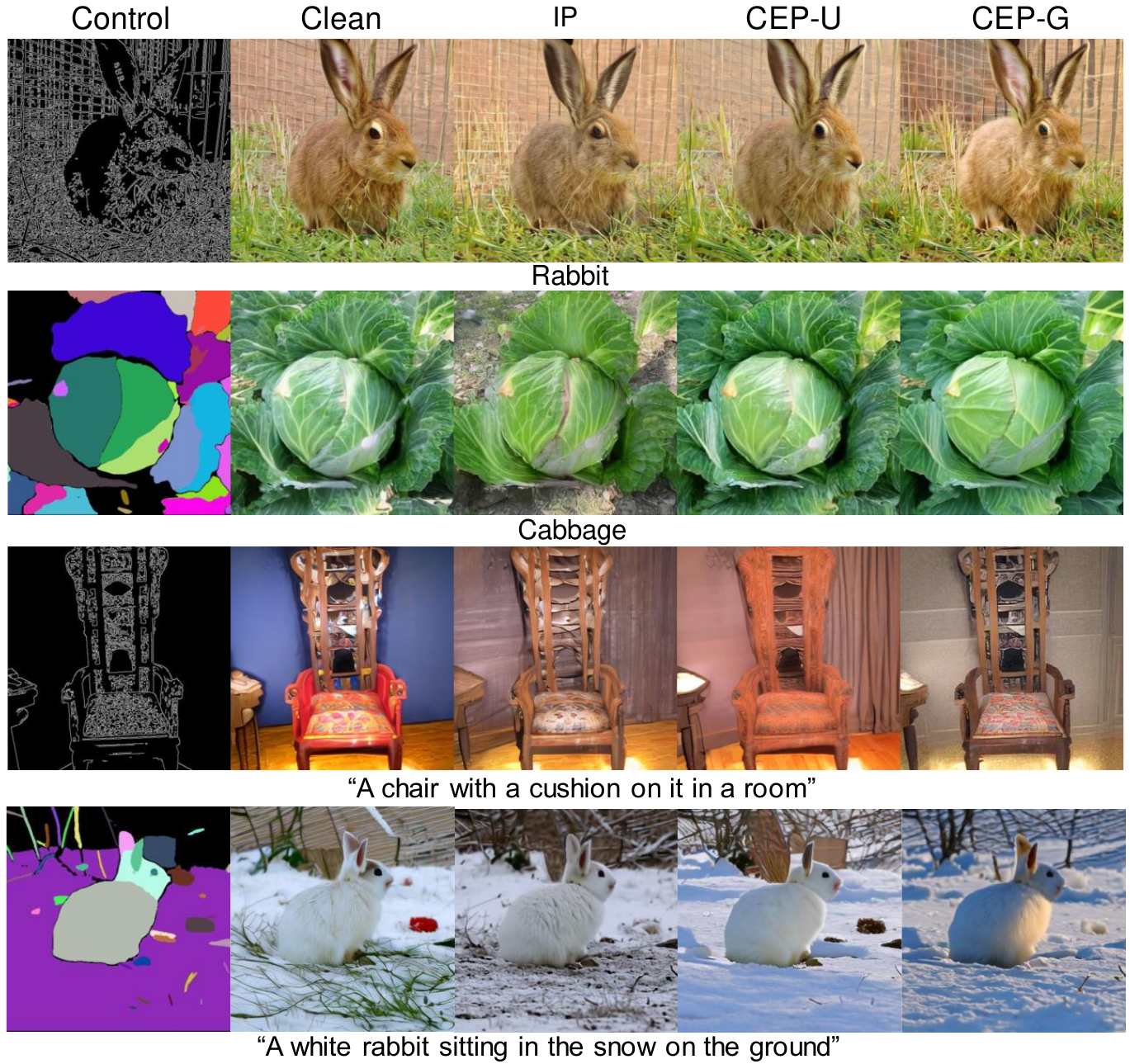}}
\vspace{-0.1in}
\caption{Comparison of DMs pre-trained with CEP against IP and without perturbation. 
}
\vspace{-0.1in}
\label{fig:cep_vis}
\end{figure}

\section{Related Work}
\label{sec:related}


\textbf{Diffusion Models}.
Inspired by thermodynamics, DMs were first proposed by Sohl-Dickstein et al. \cite{sohl2015deep}.
DMs have soon been developed into image generation with a fixed Gaussian noise diffusion process \cite{ho2020denoising,song2020score}. 
Various techniques have then been proposed for more effective and efficient DMs \cite{nichol2021improved,dhariwal2021diffusion,karras2022elucidating}.
One of the most well-known is modeling the diffusion process at the latent space of pre-trained image encoders as a strong prior \cite{van2017neural,esser2021taming}, instead of raw pixels spaces \cite{vahdat2021score,rombach2022high,peebles2023scalable}, which allows for high-quality image generation with affordable inference speed.
Numerous foundational DMs that generate photorealistic images have thus been built \cite{nichol2021glide,ding2021cogview,ding2022cogview2,zheng2024cogview3,gafni2022make,saharia2022photorealistic,ramesh2021zero,stablediffusion,midjourney}.
These powerful models are generally pre-trained on web-crawled billion-scale data with conditions (usually text), which may inevitably contain corruption \cite{gordon2023mismatch,webster2023duplication,somepalli2023understanding,elazar2023s,chen2024catastrophic}.
Recently, consistency models \cite{song2023consistency,song2023improved,luo2023latent} were also developed from DMs, allowing generation with much fewer inference steps. 
These foundational DMs also enabled many downstream applications \cite{zhang2023adding,mou2023t2i,meng2021sdedit,brooks2023instructpix2pix,huang2023region,tumanyan2023plug,voynov2023sketch,huang2024diffstyler,huang2023composer,bashkirova2023masksketch,bar2023multidiffusion,li2023gligen}.
However, the effects of the pre-training corruption on downstream applications remain unknown.

\textbf{Learning with Noise}.
Learning with noise is a long-standing challenge \cite{natarajan2013learning,han2020survey,algan2021image,song2022learning}.
Noisy label learning has been widely studied in classification, from noise correction \cite{han2018co,Han2018CoteachingRT,yu2019does,xia2019anchor,xia2020part,wei2020combating,Li2020DivideMixLW,feng2021can,zhang2021learning,sopliu22w,yang2022estimating,chen2023imprecise,wei2023mitigating} and noise-robust loss functions \cite{Ghosh2017RobustLF,Zhang2018GeneralizedCE, Wang2019SymmetricCE, Ma2020NormalizedLF,Xiao2015LearningFM,Goldberger2016TrainingDN,Liu2020EarlyLearningRP,northcutt2021confidentlearning,ICMLLietal2021}. 
Learning with noise has also been studied in the context of generative models \cite{thekumparampil2019robust,tripathi2020gans,kaneko2021blur,deng2022pcgan}.
Robust GANs and DMs \cite{thekumparampil2018robustness,kaneko2019label,na2024label} alleviated the quality degradation and condition misalignment of training generative models with label noise.
In contrast, we study a more practical scenario, where the models are trained on corrupted pre-training data with a low noise ratio, and then adapted to downstream tasks. 

In fact, more aligned with our work, there are several recent studies on exploring and exploiting the pre-training noise. 
Chen et al. \cite{chen2023understanding,chen2024learning} found that slight label noise in supervised pre-training can be beneficial for in-domain downstream tasks, whereas detrimental for out-of-domain tasks. 
NoisyTune \cite{wu2022noisytune}, NEFTune \cite{jain2023neftune}, and SymNoise \cite{singh2023symnoise} found that introducing noise to the weights and embedding of pre-trained language models can facilitate downstream performance.
Ning et al. \cite{ning2023input} also found that adding perturbation in the forward diffusion process helps reduce the exposure bias of DMs \cite{ning2023elucidating}. 
Similarly, Naderi et al. \cite{naderi2023dynamic} introduced noise into the input of image translation networks for better learning with limited data.
Synthetic data (potentially with corruption) have also been found to be useful in pre-training \cite{fan2023scaling,vasu2024clip}. 
We demonstrate that slight corruption in conditions of the pre-training DMs can also be beneficial at both the pre-training and downstream. 

\section{Conclusion and Limitation}
\label{sec:conclusion}

We presented the first comprehensive study on condition corruption in pre-training of DMs.
Our empirical and theoretical analysis surprisingly demonstrate that slight condition corruption benefits DMs in both the pre-training and downstream adaptation, based on which we proposed CEP as a simple yet general technique that significantly improves the performance of DMs.
We hope our findings could inspire more future work on understanding the pre-training data of foundation models.

This work has the following limitations.
First, due to a lack of computing resources, we cannot study all types of DMs on larger datasets.
Second, the theoretical analysis is based on several assumptions that might be further explored in the future.
Third, the evaluation of image generation remains an open question, and we used most of the existing criteria for fair comparison.

\section*{Disclaimer}
While we study DMs for image generation in this paper, it is important to note that all generations have been selected and verified by human experts to ensure that they are responsible.
Although we release all the pre-trained models under different corruption settings, it is possible that these models will generate inappropriate content due to the scale of pre-training and without alignment with human preferences.
The main purpose of this research is to raise the awareness of the community on data cleaning and corruption in the research of diffusion models.

\section*{Acknowledge}
MS was supported by the Institute for AI and Beyond, UTokyo. DZ was supported by NSFC
62306252, Guangdong NSF 2024A1515012444, and Hong Kong ECS awards 27309624.

\newpage

\bibliography{ref}
\bibliographystyle{unsrt}


\newpage

\appendix

\begin{center}
    \Large{\textbf{Appendix}}
\end{center}

\etocdepthtag.toc{mtappendix}
\etocsettagdepth{mtchapter}{none}
\etocsettagdepth{mtappendix}{subsection}
\tableofcontents
\newpage

\section{Derivations and Proofs}
\label{app:Derivations and Proofs}
In this section, we theoretically investigate the behavior of condition corruption on DMs. Our investigation encompasses the DDIM samplers with continuous-time processes. We focus on how slight conditional embedding corruption can affect the training process of DMs, as well as the consequences on the generative process. As our theoretical analysis indicates, slight conditional embedding corruption will benefit both the generation quality and diversity, which aligns with the experimental conclusions in Section \ref{sec:understand}.

\subsection{Preliminaries}
We start by giving a concise overview of the problem setup. 

\textbf{Data Distribution.} For precise theoretical characterizations, we concentrate on the prototypical problem of sampling from Gaussian mixture models (GMMs). Specifically, we consider that the distribution of the data $\mathbf{x} \in \mathbb{R}^d$ satisfies
\begin{align}
   \mathbb{P}(\mathbf{x}) := \sum_{y \in \mathcal{Y}} w_y \mathcal{N}(\boldsymbol{\mu}_y,\mathbf{I}).
\end{align}
Here, $y$ is denoted as the class labels with a finite set of values $y \in \{1,2,\cdots,|\mathcal{Y}|\}$. Given any class label $y \in \mathcal{Y}$, the data distribution $\mathbf{x}|y$ is a Gaussian with the center and covariance as $(\boldsymbol{\mu}_y,\mathbf{I})$. And the positive $w_y$ represents the weights of the Gaussian components which satisfies $\sum_{y \in \mathcal{Y}} w_y = 1$.

The diffusion model is a two processes framework: a forward process that transforms the target distribution into Gaussian noise, and a reverse process that progressively denoises in order to reconstitute the original target distribution. In this paper, we consider the continuous-time processes and define the forward process as an Ornstein–Uhlenbeck (OU) process:

\textbf{Forward Process.} At time step $t \in [0,T]$, the forward process is
\begin{align}
  d\mathbf{x}_t &= -\mathbf{x}_td_t+\sqrt{2}d{\mathbf{w}_t}, \quad \mathbf{x}_0 = \mathbf{x} \sim \mathbb{P}(\cdot|y),
 \end{align}
where $\mathbf{w}_t$ is a $d$-dimensional standard Brownian motion.

The advantage of considering the forward process as an OU process is that it enables us to directly derive the closed-form expression for the conditional sample distribution at any given time $t$.
 \begin{align}
 \label{eq:ou}
 \mathbf{x}_t|\mathbf{x} \sim \mathcal{N}(r_t \mathbf{x}, \sigma^2_t\mathbf{I}),
 \end{align} 
 where $r_t = e^{-t}$ and $\sigma_t = \sqrt{1-e^{-2t}}$.
 
By the reparameterization trick, $\mathbf{x}_t$ can be represented as
\begin{align}
\label{eq: reparam}
    \mathbf{x}_t = r_t \mathbf{x} + \sigma_t \boldsymbol{\epsilon}.
\end{align}
We explore the widely adopted sampling method, DDIM, augmented with classifier-free guidance. And the associated reverse process is stated as the following ODE implementation:
 
\textbf{Reverse Process.} We write the reverse process in a forward version by switching time direction $t \to T-t$ as
\begin{align}
\label{eq:unguided}
   d\mathbf{z}_t = \Big(\mathbf{z}_t + (1+w)\nabla_{\mathbf{z}_t} \log \mathbb{P}(\mathbf{z}_t|y)-w\nabla_{\mathbf{z}_t} \log \mathbb{P}(\mathbf{z}_t)\Big)dt, \quad \mathbf{z}_0 \sim \mathcal{N}(\mathbf{0},\mathbf{I}),
\end{align}
where $w \geq 0$ is a hyperparameter that controls the strength of the classifier guidance.

Given our primary focus on the impact of corrupted conditional embedding on generation, we simplify the reverse process by setting $w$ to 0 and concentrate solely on the conditional score network, i.e.,
\begin{align}
\label{eq:con-reverse}
   d\mathbf{z}_t = \Big(\mathbf{z}_t + \nabla_{\mathbf{z}_t} \log \mathbb{P}(\mathbf{z}_t|y)\Big)dt.
\end{align}
The remaining task is to estimate the unknown conditional score function $\nabla_{\mathbf{z}_t} \log \mathbb{P}(\mathbf{z}_t|y)$ and unconditional score function $\nabla_{\mathbf{z}_t} \log \mathbb{P}(\mathbf{z}_t|y)$ via training. In the subsequent analysis, we will indicate that by minimizing Equation \eqref{eq:conditional_dm} and optimizing the denoising network $\boldsymbol{\boldsymbol{\epsilon}}$, we can achieve an estimate of the conditional score.

\textbf{Denoising Networks.} Inspired by recent work that also target on GMMs \cite{chen2024learninggaussian,gatmiry2024learning}, we parameterize the denoising networks as the following piecewise linear function:
\begin{align}
\label{eq:piecewise network}
   \boldsymbol{\boldsymbol{\epsilon}}_{\theta}(\mathbf{x}_t,y) = \sum^{|\mathcal{Y}|}_{k=1}\mathbf{1}_{y=k} \Big(\mathbf{W}^k_t \mathbf{x}_t + \mathbf{V}^k_t \mathbf{c}(y)\Big),
\end{align}
where $\mathbf{c}(y)$ is the one-hot encoding of label $y$, $\mathbf{1}_{y}$ is an indicator function and $\{\mathbf{W}^k_t,\mathbf{V}^k_t\}^{|\mathcal{Y}|}_{k=1}$ are trainable parameters.

\textbf{Conditional Embedding Corruption.} We examine the scenario of conditional embedding corruption, wherein the conditional embedding $\mathbf{c}(y)$ is no longer matched with the data $\mathbf{x}$, but instead is perturbed by Gaussian noise. Consequently, the corrupted conditional embedding causes the denoising networks to be as follows:
\begin{align}
\boldsymbol{\boldsymbol{\epsilon}}_{\theta}^c(\mathbf{x}_t,y) = \sum^{|\mathcal{Y}|}_{k=1}\mathbbm{1}_{y=k} \Big(\mathbf{W}^k_t \mathbf{x}^k_t + \mathbf{V}^k_t (\mathbf{c}(y)+\gamma\boldsymbol{\xi})\Big),
\end{align}
where the $d$-dimensional corrupted noise $\boldsymbol{\xi} \sim \mathcal{N}(\mathbf{0},\mathbf{I})$ and $\gamma \geq 0$ serves as the corruption control parameter, mirroring the noise ratio $\eta$ for direct control of noise magnitude.

Based on the aforementioned setup, our ultimate goal is to obtain the closed form of the final generated data distribution by considering the impact of corrupted noise $\boldsymbol{\xi}$ on the training and generative processes. By comparing the differences in data distribution before and after the addition of corrupted noise, we aim to accurately characterize the impact of embedding corruption on image generation and explain the phenomena observed in experiments.

\subsection{The Estimation of Conditional Score}
\subsubsection{Clean Conditions}
\label{sec:clean Conditional Embedding}
We first analyze the case of clean conditional embedding. Note that the denoising networks $\boldsymbol{\boldsymbol{\epsilon}}_{\theta}$ is a piecewise linear function and the training objective can be represented as 
\begin{align}
\label{eq:training-obj}
  &\frac{1}{n} \sum^n_{i=1}\mathbb{E}_{\boldsymbol{\epsilon}}[\|\boldsymbol{\boldsymbol{\epsilon}}_{\theta}(\mathbf{x}_{t,i},y) -\boldsymbol{\epsilon}\|^2_2] = \sum^{|\mathcal{Y}|}_{k=1}\frac{w_k}{n_k} \sum_{y_i=k}\mathbb{E}_{\boldsymbol{\epsilon}}[\|\boldsymbol{\boldsymbol{\epsilon}}_{\theta}(\mathbf{x}_{t,i},y) -\boldsymbol{\epsilon}\|^2_2],
 \end{align}
where the class sample size $n_k:=nw_k$.

We observed that the optimization objective in Equation \eqref{eq:training-obj} can be divided into $|\mathcal{Y}|$ independent sub-problems based on the label $y$. This inspires us to analyze the training and generation processes according to different classes. 

Given any class $k \in \mathcal{Y}$, we present the following lemma for determining the optimal parameters of the corresponding denoising network and the associated conditional score.

\begin{lemma}
\label{lemma:un-score}
\text{(Clean Conditional Embedding).} Given any class $k \in \mathcal{Y}$, the optimal linear denoising network is
\begin{align}
\boldsymbol{\boldsymbol{\epsilon}}_{\theta}(\mathbf{x}_t,y=k) &= \sigma_t\Big(\sigma^2_t\mathbf{I}+r^2_t\mathbf{\Sigma}_k\Big)^{-1}\mathbf{x}_t - r_t\sigma_t\Big(\sigma^2_t\mathbf{I}+r^2_t\mathbf{\Sigma}_k\Big)^{-1}\hat{\boldsymbol{\mu}}_k.
 \end{align}
And the corresponding optimal linear estimation of conditional score $\nabla_{\mathbf{x}_t} \log \mathbb{P}(\mathbf{x}_t|y=k)$ is 
\begin{align}
\label{eq:estimate}
\nabla_{\mathbf{x}_t} \log \mathbb{P}(\mathbf{x}_t|y=k) = -\frac{\boldsymbol{\boldsymbol{\epsilon}}_{\theta}(\mathbf{x}_t,y=k)}{\sigma_t},
\end{align}
 where $\mathbf{\Sigma}_k:=\frac{1}{n_k}\sum^{n_k}_{i=1}\mathbf{x}_{i}\mathbf{x}^\mathsf{T}_{i}-\frac{1}{n^2_k}\sum^{n_k}_{i=1}\mathbf{x}_{i}\sum^{n_k}_{i=1}\mathbf{x}^\mathsf{T}_{i}$ is the empirical covariance of $k$-labeled dataset and $\hat{\boldsymbol{\mu}}_k:=\frac{1}{n_k}\sum^{n_k}_{i=1}\mathbf{x}_{i}$ is the empirical mean of $k$-labeled dataset, $n_k$ is the sample size of $k$-labeled dataset. And $r_t = e^{-t}$, $\sigma_t = \sqrt{1- e^{-2t}}$.
\end{lemma}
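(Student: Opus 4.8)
The plan is to exploit the per-class decomposition already recorded in \eqref{eq:training-obj}. Fix a class $k \in \mathcal{Y}$. Since $\mathbf{e}(k)$ is a fixed one-hot vector, the product $\mathbf{V}^k_t\mathbf{e}(k)$ is just the $k$-th column of $\mathbf{V}^k_t$ and hence an unconstrained vector $\mathbf{b}\in\mathbb{R}^d$; so minimizing the $k$-th summand of \eqref{eq:training-obj} is equivalent to the affine least-squares problem of choosing $(\mathbf{W},\mathbf{b})\in\mathbb{R}^{d\times d}\times\mathbb{R}^d$ to minimize $\frac{1}{n_k}\sum_{y_i=k}\mathbb{E}_{\boldsymbol{\epsilon}}\big[\|\mathbf{W}\mathbf{x}_{t,i}+\mathbf{b}-\boldsymbol{\epsilon}\|_2^2\big]$. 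First I would substitute the reparameterization \eqref{eq: reparam}, $\mathbf{x}_{t,i}=r_t\mathbf{x}_i+\sigma_t\boldsymbol{\epsilon}$, and integrate out $\boldsymbol{\epsilon}\sim\mathcal{N}(\mathbf{0},\mathbf{I})$; using $\mathbb{E}[\boldsymbol{\epsilon}]=\mathbf{0}$ and $\mathbb{E}[\boldsymbol{\epsilon}\boldsymbol{\epsilon}^{\mathsf T}]=\mathbf{I}$ the cross terms cancel and the objective collapses to the deterministic function
\[
\mathcal{L}_k(\mathbf{W},\mathbf{b})=\frac{1}{n_k}\sum_{y_i=k}\big\|r_t\mathbf{W}\mathbf{x}_i+\mathbf{b}\big\|_2^2+\big\|\sigma_t\mathbf{W}-\mathbf{I}\big\|_F^2 .
\]

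Next I would solve this convex quadratic program by first-order conditions. Setting $\nabla_{\mathbf{b}}\mathcal{L}_k=\mathbf{0}$ gives $\mathbf{b}=-r_t\mathbf{W}\hat{\boldsymbol{\mu}}_k$; substituting back, the first term becomes $r_t^2\,\mathrm{tr}\!\big(\mathbf{W}\mathbf{\Sigma}_k\mathbf{W}^{\mathsf T}\big)$ with $\mathbf{\Sigma}_k$ exactly the empirical covariance in the statement, so the reduced objective equals $r_t^2\,\mathrm{tr}(\mathbf{W}\mathbf{\Sigma}_k\mathbf{W}^{\mathsf T})+\sigma_t^2\,\mathrm{tr}(\mathbf{W}\mathbf{W}^{\mathsf T})-2\sigma_t\,\mathrm{tr}(\mathbf{W})+d$. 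Setting its gradient in $\mathbf{W}$ to zero yields $\mathbf{W}\big(r_t^2\mathbf{\Sigma}_k+\sigma_t^2\mathbf{I}\big)=\sigma_t\mathbf{I}$; since $\mathbf{\Sigma}_k$ is positive semidefinite and $\sigma_t^2>0$ for $t>0$, the matrix $\sigma_t^2\mathbf{I}+r_t^2\mathbf{\Sigma}_k$ is invertible, giving $\mathbf{W}^k_t=\sigma_t\big(\sigma_t^2\mathbf{I}+r_t^2\mathbf{\Sigma}_k\big)^{-1}$ and $\mathbf{b}^k_t=-r_t\sigma_t\big(\sigma_t^2\mathbf{I}+r_t^2\mathbf{\Sigma}_k\big)^{-1}\hat{\boldsymbol{\mu}}_k$. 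Convexity of $\mathcal{L}_k$ (a sum of squared affine forms in $(\mathbf{W},\mathbf{b})$ plus the convex term $\|\sigma_t\mathbf{W}-\mathbf{I}\|_F^2$) certifies this is a global minimizer, and assembling $\boldsymbol{\epsilon}_\theta(\mathbf{x}_t,y=k)=\mathbf{W}^k_t\mathbf{x}_t+\mathbf{b}^k_t$ recovers the claimed formula. For \eqref{eq:estimate} I would then invoke the Tweedie/Miyasawa identity for the OU kernel \eqref{eq:ou}, namely $\nabla_{\mathbf{x}_t}\log\mathbb{P}(\mathbf{x}_t\mid y)=-\tfrac{1}{\sigma_t}\mathbb{E}[\boldsymbol{\epsilon}\mid\mathbf{x}_t,y]$, so that the Bayes denoiser and the conditional score differ only by the fixed scalar $-1/\sigma_t$; restricting both sides to the affine hypothesis class identifies the optimal linear score estimate with $-\boldsymbol{\epsilon}_\theta(\mathbf{x}_t,y=k)/\sigma_t$.

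I expect the main obstacle to be the matrix-calculus bookkeeping in the second step: correctly differentiating the trace forms $\mathrm{tr}(\mathbf{W}\mathbf{\Sigma}_k\mathbf{W}^{\mathsf T})$ and $\|\sigma_t\mathbf{W}-\mathbf{I}\|_F^2$ with respect to the matrix $\mathbf{W}$, checking that minimizing out $\mathbf{b}$ for each fixed $\mathbf{W}$ is legitimate, and verifying positive-definiteness of $\sigma_t^2\mathbf{I}+r_t^2\mathbf{\Sigma}_k$ so that the stated inverse genuinely exists. A secondary point worth spelling out is the legitimacy of absorbing $\mathbf{V}^k_t\mathbf{e}(k)$ into a free bias (it holds precisely because $\mathbf{e}(k)$ is the $k$-th standard basis vector), and, if one wants \eqref{eq:estimate} to be more than a definitional convention, a short argument that within the linear class the minimizer of the denoising risk and that of the score-matching risk coincide up to the $-1/\sigma_t$ rescaling.
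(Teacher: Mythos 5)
Your proposal is correct and follows essentially the same route as the paper: absorb $\mathbf{V}^k_t\mathbf{e}(k)$ into a free bias $\mathbf{b}$, integrate out $\boldsymbol{\epsilon}$ so the cross terms vanish, solve the resulting convex quadratic by first-order conditions to get $\mathbf{W}^k_t=\sigma_t(\sigma_t^2\mathbf{I}+r_t^2\mathbf{\Sigma}_k)^{-1}$ and $\mathbf{b}^k_t=-r_t\sigma_t(\sigma_t^2\mathbf{I}+r_t^2\mathbf{\Sigma}_k)^{-1}\hat{\boldsymbol{\mu}}_k$, and pass to the score via the $-1/\sigma_t$ rescaling (the paper phrases this through $\nabla_{\mathbf{x}_t}\log\mathbb{P}(\mathbf{x}_t|\mathbf{x},y)=-\boldsymbol{\epsilon}/\sigma_t$ and the equivalence of the denoising and score-matching objectives, which is the same content as your Tweedie-identity step). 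The only cosmetic difference is that you profile out $\mathbf{b}$ before optimizing $\mathbf{W}$, whereas the paper solves the two stationarity equations jointly.
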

\begin{proof}
Given any class $k$, the training objective is
\begin{align}
\label{eq:training-obj-k}
    \frac{w_k}{n_k} \sum_{y_i=k}\mathbb{E}_{\boldsymbol{\epsilon}}[\|\boldsymbol{\boldsymbol{\epsilon}}(\mathbf{x}_{t,i},y) -\boldsymbol{\epsilon}\|^2_2] =  \frac{w_k}{n_k} \sum_{y_i=k}\mathbb{E}_{\boldsymbol{\epsilon}}[\|\mathbf{W}^k_tr_t \mathbf{x}_{i} + \mathbf{W}^k_t\sigma_t \boldsymbol{\epsilon} +  \mathbf{V}^k_t \mathbf{c}(y) -\boldsymbol{\epsilon}\|^2_2].
\end{align}
Since the weights $w_k$ is fixed in our analysis, we omit the notation $w_k$ without ambiguity in the following contents and for enhanced clarity, we initially omit the subscript/superscript $k$ in $n_k$, $\mathbf{W}^k_t$, $\mathbf{V}^k_t$, $\hat{\boldsymbol{\mu}}_k$ , ${\boldsymbol{\mu}}_k$ and $\mathbf{\Sigma}_k$.  Then Equation \eqref{eq:training-obj-k} can restated as as Equation \eqref{eq:training-obj-k-s} for simplicity.
\begin{align}
\label{eq:training-obj-k-s}
    \frac{1}{n} \sum^{n}_{i=1}\mathbb{E}_{\boldsymbol{\epsilon}}[\|\mathbf{W}_tr_t \mathbf{x}_{i} + \mathbf{W}_t\sigma_t \boldsymbol{\epsilon} +  \mathbf{V}_t \mathbf{c}(y) -\boldsymbol{\epsilon}\|^2_2],
\end{align}
where $\mathbf{x}_{i}$ is labeled as $k$.

This training loss can be further simplified as
\begin{align*}
 &\frac{1}{n} \sum^{n}_{i=1}\mathbb{E}_{\boldsymbol{\epsilon}}[\|\mathbf{W}_tr_t \mathbf{x}_{i} + \mathbf{W}_t\sigma_t \boldsymbol{\epsilon} +  \mathbf{V}_t \mathbf{c}(y) -\boldsymbol{\epsilon}\|^2_2]\\
    &=\frac{1}{n} \sum^{n}_{i=1}\|\mathbf{W}_t r_t \mathbf{x}_{i}+\mathbf{V}_t \mathbf{c}(y)\|^2_2 + \mathbb{E}_{\boldsymbol{\epsilon}}[\|(\mathbf{W}_t\sigma_t-\mathbf{I})\boldsymbol{\epsilon}\|^2_2]\\
   & = \frac{r^2_t}{n}\sum^{n}_{i=1}\mathbf{x}^\mathsf{T}_{i}\mathbf{W}_t^\mathsf{T}\mathbf{W}_t\mathbf{x}_{i}+
   \frac{2r_t}{n}\mathbf{e}^\mathsf{T}(y)\mathbf{V}^\mathsf{T}_t\mathbf{W}_t\sum^{n}_{i=1}\mathbf{x}_{i} + \mathbf{e}^\mathsf{T}(y)\mathbf{V}^\mathsf{T}_t\mathbf{V}_t \mathbf{c}(y)+\mathrm{Tr}\Big((\mathbf{W}_t\sigma_t-\mathbf{I})(\mathbf{W}_t^\mathsf{T}\sigma_t-\mathbf{I})\Big).
 \end{align*}
 
For simplicity, we denote $\mathbf{b}_t:=\mathbf{V}_t \mathbf{c}(y)$ and we get
\begin{align*}
 &\frac{1}{n} \sum^{n}_{i=1}\mathbb{E}_{\boldsymbol{\epsilon}}[\|\mathbf{W}_tr_t \mathbf{x}_{i} + \mathbf{W}_t\sigma_t \boldsymbol{\epsilon} +  \mathbf{V}_t \mathbf{c}(y) -\boldsymbol{\epsilon}\|^2_2]\\
   & = \frac{r^2_t}{n}\sum^{n}_{i=1}\mathbf{x}^\mathsf{T}_{i}\mathbf{W}_t^\mathsf{T}\mathbf{W}_t\mathbf{x}_{i}+
   \frac{2r_t}{n}\mathbf{b}^\mathsf{T}_t\mathbf{W}_t\sum^{n}_{i=1}\mathbf{x}_{i} + \mathbf{b}^\mathsf{T}_t\mathbf{b}_t+\mathrm{Tr}\Big((\mathbf{W}_t\sigma_t-\mathbf{I})(\mathbf{W}_t^\mathsf{T}\sigma_t-\mathbf{I})\Big).
 \end{align*}
Then, we define the loss function
\begin{align*}
  J(\mathbf{W}_t,\mathbf{b}_t) = \frac{r^2_t}{n}\sum^{n}_{i=1}\mathbf{x}^\mathsf{T}_{i}\mathbf{W}_t^\mathsf{T}\mathbf{W}_t\mathbf{x}_{i}+
   \frac{2r_t}{n}\mathbf{b}^\mathsf{T}_t\mathbf{W}_t\sum^{n}_{i=1}\mathbf{x}_{i} + \mathbf{b}^\mathsf{T}_t\mathbf{b}_t+\mathrm{Tr}\Big((\mathbf{W}_t\sigma_t-\mathbf{I})(\mathbf{W}_t^\mathsf{T}\sigma_t-\mathbf{I})\Big).
 \end{align*}
The optimal $\mathbf{W}^*_t$ and $\mathbf{b}^*_t$ can be obtained by taking gradient to $J(\mathbf{W}_t,\mathbf{b}_t)$ such that,
\begin{align*}
\mathbf{0} &= \nabla_{\mathbf{W}_t}J(\mathbf{W}_t,\mathbf{b}_t)
= \frac{2r^2_t}{n_k}\mathbf{W}_t\sum^{n}_{i=1}\mathbf{x}_{i}\mathbf{x}^\mathsf{T}_{i}+\frac{2r_t}{n}\mathbf{b}_t\sum^{n}_{i=1}\mathbf{x}^\mathsf{T}+2(\sigma^2_t\mathbf{W}_t-\sigma_t\mathbf{I}),\\
\mathbf{0} &= \nabla_{\mathbf{b}_t}J(\mathbf{W}_t,\mathbf{b}_t)
=\frac{2r_t}{n}\mathbf{W}_t\sum^{n}_{i=1}\mathbf{x}_{i}+2\mathbf{b}_t.
 \end{align*}
And the optimal $\mathbf{W}^*_t$ and $\mathbf{b}^*_t$ is,
\begin{align*}
\mathbf{W}^*_t = \sigma_t\Big(\sigma^2_t\mathbf{I}+r^2_t\mathbf{\Sigma}\Big)^{-1}, \quad
\mathbf{b}^*_t =\mathbf{V}^*_t \mathbf{c}(y) = 
-r_t\sigma_t\Big(\sigma^2_t\mathbf{I}+r^2_t\mathbf{\Sigma}\Big)^{-1}\hat{\boldsymbol{\mu}}.
 \end{align*}
 where $\mathbf{\Sigma}:=\frac{1}{n}\sum^{n}_{i=1}\mathbf{x}_{i}\mathbf{x}^\mathsf{T}_{i}-\frac{1}{n^2_k}\sum^{n}_{i=1}\mathbf{x}_{i}\sum^{n}_{i=1}\mathbf{x}^\mathsf{T}_{i}$ is the empirical covariance of $k$-labeled dataset and $\hat{\boldsymbol{\mu}}:=\frac{1}{n}\sum^{n}_{i=1}\mathbf{x}_{i}$ is the empirical mean of $k$-labeled dataset.
 
Based on above analyze, for class $k$, we then have the following optimal denoising network as the linear estimator of noise $\boldsymbol{\epsilon}$,
\begin{align*}
\boldsymbol{\boldsymbol{\epsilon}}_{\theta}(\mathbf{x}_t,y=k) &= \sigma_t\Big(\sigma^2_t\mathbf{I}+r^2_t\mathbf{\Sigma}\Big)^{-1}\mathbf{x}_t - r_t\sigma_t\Big(\sigma^2_t\mathbf{I}+r^2_t\mathbf{\Sigma}\Big)^{-1}\hat{\boldsymbol{\mu}}.
 \end{align*}
Given any class $k$, we derive
\begin{align*}
  \nabla_{\mathbf{x}_t} \log \mathbb{P}(\mathbf{x}_t|\mathbf{x},y=k) = - \frac{(\mathbf{x}_t-r_t \mathbf{x})}{\sigma^2_t} = - \frac{\boldsymbol{\epsilon}}{\sigma_t},
 \end{align*} 
where data $\mathbf{x}$ and $\mathbf{x}_t$ are labeled with $k$.

Therefore, the linear estimator of  $\nabla_{\mathbf{x}_t} \log \mathbb{P}(\mathbf{x}_t|\mathbf{x},y=k)$ given $k$-labeled data $\mathbf{x}_t$ is 
\begin{align*}
-\Big(\sigma^2_t\mathbf{I}+r^2_t\mathbf{\Sigma}\Big)^{-1}\mathbf{x}_t + r_t\Big(\sigma^2_t\mathbf{I}+r^2_t\mathbf{\Sigma}\Big)^{-1}\hat{\boldsymbol{\mu}}.
 \end{align*}
Since the estimation of $\nabla_{\mathbf{x}_t} \log \mathbb{P}(\mathbf{x}_t|\mathbf{x},y=k)$ and $\nabla_{\mathbf{x}_t} \log \mathbb{P}(\mathbf{x}_t|y=k)$ are equivalent in optimization, the optimal linear estimator $\nabla_{\mathbf{x}_t} \log \mathbb{P}(\mathbf{x}_t|y=k)$ also can be 
\begin{align*}
\nabla_{\mathbf{x}_t} \log \mathbb{P}(\mathbf{x}_t|y=k)& = -\frac{\boldsymbol{\boldsymbol{\epsilon}}_{\theta}(\mathbf{x}_t,y=k)}{\sigma_t}\\
&=-\Big(\sigma^2_t\mathbf{I}+r^2_t\mathbf{\Sigma}\Big)^{-1}\mathbf{x}_t + r_t\Big(\sigma^2_t\mathbf{I}+r^2_t\mathbf{\Sigma}\Big)^{-1}\hat{\boldsymbol{\mu}}.
 \end{align*}
The proof is completed.
\end{proof}
Note that the ground truth functions of conditional score given the label $k$ is 
\begin{align}
\label{eq:ground truth}
   \nabla_{\mathbf{x}_t} \log \mathbb{P}(\mathbf{x}_t|y=k) = -\mathbf{x}_t + r_t\boldsymbol{\mu}_k.
\end{align}
We note the similarities in form and coefficients between the optimal linear estimator \eqref{eq:estimate} and ground truth \eqref{eq:ground truth}; they both are linear combinations of ${\boldsymbol{\mu}}_k / \hat{\boldsymbol{\mu}}_k$ and $\mathbf{x}_t$. Specifically, when the empirical covariance $\mathbf{\Sigma}_k$ equals the population covariance $\mathbf{I}$ and the empirical mean $\hat{\boldsymbol{\mu}}_k$ matches the expected value ${\boldsymbol{\mu}}_k$, the estimated conditional score in Equation \eqref{eq:estimate} coincides with the true conditional score in Equation \eqref{eq:ground truth}.
\subsubsection{Corrupted Conditions}
\label{sec:Corrupted Conditional Embedding}
The same analytical approach as in Section \ref{sec:clean Conditional Embedding} will be applied to the scenario where the conditional embedding is perturbed. Given the class $k$, we propose the following Lemma \ref{lemma:co-score} to describe the optimal linear denoising network and the conditional score estimator with the corrupted conditional embedding $(\mathbf{c}(y)+\gamma \boldsymbol{\xi})$ where $\boldsymbol{\xi}$ is the standard Gaussian noise.

\begin{lemma}
\label{lemma:co-score}
\text{(Corrupted Conditional Embedding).} Given any class $k \in \mathcal{Y}$,
the optimal linear denoising network is
\begin{equation}
\resizebox{.9\textwidth}{!}{$
\begin{split}
\boldsymbol{\boldsymbol{\epsilon}}^c_{\theta}(\mathbf{x}_t,y=k) &= \sigma_t\Big(\sigma^2_t\mathbf{I}+r^2_t\mathbf{\Sigma}_k+\frac{r^2_t\gamma^2}{1+\gamma^2}\|\hat{\boldsymbol{\mu}}_k\|^2_2\mathbf{I}\Big)^{-1}\mathbf{x}_t-\frac{r_t}{1+\gamma^2}\sigma_t\Big(\sigma^2_t\mathbf{I}+r^2_t\mathbf{\Sigma}_k+\frac{r^2_t\gamma^2}{1+\gamma^2}\|\hat{\boldsymbol{\mu}}_k\|^2_2\mathbf{I}\Big)^{-1}\hat{\boldsymbol{\mu}}_k.
\end{split}
$}
\end{equation}
 
And the corresponding optimal linear estimation of conditional score $\nabla_{\mathbf{x}_t} \log \mathbb{P}^c(\mathbf{x}_t|y=k)$ is
\begin{align}
\label{eq:estimate-c}
\nabla_{\mathbf{x}_t} \log \mathbb{P}^c(\mathbf{x}_t|y=k) = -\frac{\boldsymbol{\boldsymbol{\epsilon}}^c_{\theta}(\mathbf{x}_t,y=k)}{\sigma_t}
\end{align}
 where $\mathbf{\Sigma}_k:=\frac{1}{n_k}\sum^{n_k}_{i=1}\mathbf{x}_{i}\mathbf{x}^\mathsf{T}_{i}-\frac{1}{n^2_k}\sum^{n_k}_{i=1}\mathbf{x}_{i}\sum^{n_k}_{i=1}\mathbf{x}^\mathsf{T}_{i}$ is the empirical covariance of $k$-labeled dataset and $\hat{\boldsymbol{\mu}}_k:=\frac{1}{n_k}\sum^{n_k}_{i=1}\mathbf{x}_{i}$ is the empirical mean of $k$-labeled dataset, $n_k$ is the sample size of $k$-labeled dataset, and $r_t = e^{-t}$, $\sigma_t = \sqrt{1- e^{-2t}}$.
\end{lemma}
\begin{proof}
For enhanced clarity, we initially omit the subscript/superscript $k$ in $n_k$, $\mathbf{W}^k_t$, $\mathbf{V}^k_t$, $\hat{\boldsymbol{\mu}}_k$ , ${\boldsymbol{\mu}}_k$ and $\mathbf{\Sigma}_k$.  

    For any class $k$ and standard Gaussian noise $\boldsymbol{\xi}$, we consider the following training loss
\begin{align}
\label{eq:training-obj-k-c}
    \frac{1}{n} \sum^{n}_{i=1}\mathbb{E}_{\boldsymbol{\xi}}\mathbb{E}_{\boldsymbol{\epsilon}}[\|\mathbf{W}_tr_t \mathbf{x}_{i} + \mathbf{W}_t\sigma_t \boldsymbol{\epsilon} +  \mathbf{V}_t (\mathbf{c}(y) + \gamma \boldsymbol{\xi}) -\boldsymbol{\epsilon}\|^2_2].
\end{align}
And we further optimize the training loss as
\begin{align*}
 &\frac{1}{n} \sum^{n}_{i=1}\mathbb{E}_{\boldsymbol{\xi}}\mathbb{E}_{\boldsymbol{\epsilon}}[\|\mathbf{W}_tr_t \mathbf{x}_{i} + \mathbf{W}_t\sigma_t \boldsymbol{\epsilon} +  \mathbf{V}_t (\mathbf{c}(y) + \gamma \boldsymbol{\xi}) -\boldsymbol{\epsilon}\|^2_2]\\
    &=\frac{1}{n} \sum^{n}_{i=1}\|\mathbf{W}_t r_t \mathbf{x}_{i}+\mathbf{V}_t \mathbf{c}(y) + \gamma \mathbf{V}_t \boldsymbol{\xi}\|^2_2 + \mathbb{E}_{\boldsymbol{\epsilon}}[\|(\mathbf{W}_t\sigma_t-\mathbf{I})\boldsymbol{\epsilon}\|^2_2]\\
   & = \frac{r^2_t}{n}\sum^{n}_{i=1}\mathbf{x}^\mathsf{T}_{i}\mathbf{W}_t^\mathsf{T}\mathbf{W}_t\mathbf{x}_{i}+
   \frac{2r_t}{n}\mathbf{e}^\mathsf{T}(y)\mathbf{V}^\mathsf{T}_t\mathbf{W}_t\sum^{n}_{i=1}\mathbf{x}_{i} + \mathbf{e}^\mathsf{T}(y)\mathbf{V}^\mathsf{T}_t\mathbf{V}_t \mathbf{c}(y)\\&+\gamma^2\mathrm{Tr}(\mathbf{V}_t\mathbf{V}^\mathsf{T}_t)+\mathrm{Tr}\Big((\mathbf{W}_t\sigma_t-\mathbf{I})(\mathbf{W}_t^\mathsf{T}\sigma_t-\mathbf{I})\Big).
 \end{align*}
Similarly, we get the optimal $\mathbf{W}^*_t$ and $\mathbf{b}^*_t$ is,
\begin{align*}
&\mathbf{W}^*_t = \sigma_t\Big(\sigma^2_t\mathbf{I}+r^2_t\mathbf{\Sigma}+\frac{r^2_t\gamma^2}{1+\gamma^2}\|\hat{\boldsymbol{\mu}}\|^2_2\mathbf{I}\Big)^{-1}, \\
&\mathbf{b}^*_t =\mathbf{V}^*_t \mathbf{c}(y) = 
-\frac{r_t}{1+\gamma^2}\sigma_t\Big(\sigma^2_t\mathbf{I}+r^2_t\mathbf{\Sigma}+\frac{r^2_t\gamma^2}{1+\gamma^2}\|\hat{\boldsymbol{\mu}}\|^2_2\mathbf{I}\Big)^{-1}\hat{\boldsymbol{\mu}}.
 \end{align*}
 where $\mathbf{\Sigma}:=\frac{1}{n}\sum^{n}_{i=1}\mathbf{x}_{i}\mathbf{x}^\mathsf{T}_{i}-\frac{1}{n^2_k}\sum^{n}_{i=1}\mathbf{x}_{i}\sum^{n}_{i=1}\mathbf{x}^\mathsf{T}_{i}$ is the empirical covariance and $\hat{\boldsymbol{\mu}}:=-\frac{1}{n}\sum^{n}_{i=1}\mathbf{x}_{i}$ is the empirical mean of $k$-labeled dataset.
 
 So for any class $k$, we get the following optimal denoising network as the linear estimator of noise $\boldsymbol{\epsilon}$,
 \begin{equation*}
\resizebox{.9\textwidth}{!}{$
\begin{split}
\boldsymbol{\boldsymbol{\epsilon}}^c_{\theta}(\mathbf{x}_t,y=k) &= \sigma_t\Big(\sigma^2_t\mathbf{I}+r^2_t\mathbf{\Sigma}_k+\frac{r^2_t\gamma^2}{1+\gamma^2}\|\hat{\boldsymbol{\mu}}_k\|^2_2\mathbf{I}\Big)^{-1}\mathbf{x}_t-\frac{r_t}{1+\gamma^2}\sigma_t\Big(\sigma^2_t\mathbf{I}+r^2_t\mathbf{\Sigma}_k+\frac{r^2_t\gamma^2}{1+\gamma^2}\|\hat{\boldsymbol{\mu}}_k\|^2_2\mathbf{I}\Big)^{-1}\hat{\boldsymbol{\mu}}_k.
\end{split}
$}
\end{equation*}
 The corresponding optimal linear estimator $\nabla_{\mathbf{x}_t} \log \mathbb{P}^c(\mathbf{x}_t|y=k)$ is
\begin{equation*}
\resizebox{.9\textwidth}{!}{$
\begin{split}
\nabla_{\mathbf{x}_t} \log \mathbb{P}^c(\mathbf{x}_t|y=k) &= -\frac{\boldsymbol{\boldsymbol{\epsilon}}_{\theta}(\mathbf{x}_t,y=k)}{\sigma_t}\\
&=-\Big(\sigma^2_t\mathbf{I}+r^2_t\mathbf{\Sigma}+\frac{r^2_t\gamma^2}{1+\gamma^2}\|\hat{\boldsymbol{\mu}}\|^2_2\mathbf{I}\Big)^{-1}\mathbf{x}_t
+\frac{r_t}{1+\gamma^2}\Big(\sigma^2_t\mathbf{I}+r^2_t\mathbf{\Sigma}+\frac{r^2_t\gamma^2}{1+\gamma^2}\|\hat{\boldsymbol{\mu}}\|^2_2\mathbf{I}\Big)^{-1}\hat{\boldsymbol{\mu}}.
\end{split}
$}
\end{equation*}
 
\end{proof}
\subsection{The Distribution of Generation}
Given the class $k$, after getting the optimal linear estimator of conditional score $\nabla_{\mathbf{x}_t} \log \mathbb{P}(\mathbf{x}_t|y=k)$, we can accurately characterize the reverse process and derive the closed form of generation distribution.

\subsubsection{Clean Conditions}
\label{sec:dis clean Conditional Embedding}
According to Lemma \ref{lemma:un-score}, we first replace the conditional score $\nabla_{\mathbf{x}_t} \log \mathbb{P}(\mathbf{x}_t|y=k)$ in Equation \eqref{eq:con-reverse} with the optimal linear estimator. Given the class $k$, we get
\begin{align}
\label{eq:con-reverse-estimate}
   d\mathbf{z}_t = \Big(\mathbf{z}_t -\Big(\sigma^2_t\mathbf{I}+r^2_t\mathbf{\Sigma}_k\Big)^{-1}\mathbf{z}_t + r_t\Big(\sigma^2_t\mathbf{I}+r^2_t\mathbf{\Sigma}_k\Big)^{-1}\hat{\boldsymbol{\mu}}_k\Big)dt.
\end{align}
Assume the empirical covariance $\mathbf{\Sigma}_k$ is full rank, since the empirical covariance is diagonalizable and semi-positive, we decompose $\mathbf{\Sigma}_k$ as 
\begin{align}
\label{eq:decompsed}
\mathbf{\Sigma}_k=\mathbf{U}\mathbf{\Lambda}\mathbf{U}^{\mathsf{T}},
\end{align}
where $\mathbf{U} = [\mathbf{u}_1|\cdots|\mathbf{u}_d]$ is an orthogonal matrix whose columns are the real, orthonormal eigenvectors of $\mathbf{\Sigma}_k$ and $\mathbf{\Lambda} = \text{diag}(\lambda_1, \ldots, \lambda_d) $ is a diagonal matrix whose entries are the eigenvalues arranged in descending order of $\mathbf{\Sigma}_k$, i.e., $1\geq \lambda_1 \geq \cdots \geq \lambda_d > 0$.

Therefore, problem \eqref{eq:con-reverse-estimate} can be decomposed into $d$ independent sub-problem as
\begin{align}
\label{eq:con-reverse-estimate-sub}
   \mathbf{u}^{\mathsf{T}}_id\mathbf{z}_t = \Big(1 - \frac{1}{\sigma^2_t+r^2_t\lambda_i}\Big)\mathbf{u}^{\mathsf{T}}_i\mathbf{z}_tdt+\frac{r_t}{\sigma^2_t+r^2_t\lambda_i}\mathbf{u}^{\mathsf{T}}_i\hat{\boldsymbol{\mu}}_kdt,
\end{align}
where $\mathbf{u}_i$ is $i$-th eigenvector and $\lambda_i$ is its corresponding eigenvalue.

By analyzing Equation \eqref{eq:con-reverse-estimate-sub}, we can obtain the expectation and variance of the final generation distribution separately, thereby inferring the distribution of the ultimately generated data as outlined in the subsequent lemma.
\begin{lemma}
\label{lemma:un-dis}
\text{(Clean Conditional Embedding).}
For any class $k$, the distribution of the generated data $\mathbf{z}_T$ satisfies    
\begin{align}
\label{eq:un-gen}
 \lim_{T\rightarrow \infty}\mathbf{z}_T \sim \mathcal{N} ( \hat{\boldsymbol{\mu}}_k, \mathbf{\Sigma}_k),
\end{align}
 where $\mathbf{\Sigma}_k:=\frac{1}{n_k}\sum^{n_k}_{i=1}\mathbf{x}_{i}\mathbf{x}^\mathsf{T}_{i}-\frac{1}{n^2_k}\sum^{n_k}_{i=1}\mathbf{x}_{i}\sum^{n_k}_{i=1}\mathbf{x}^\mathsf{T}_{i}$ is the empirical covariance of $k$-labeled dataset and $\hat{\boldsymbol{\mu}}_k:=\frac{1}{n_k}\sum^{n_k}_{i=1}\mathbf{x}_{i}$ is the empirical mean of $k$-labeled dataset, $n_k$ is the sample size of $k$-labeled dataset.
\end{lemma}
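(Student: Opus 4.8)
The plan is to solve the $d$ decoupled scalar ODEs in \eqref{eq:con-reverse-estimate-sub} in closed form and then read off the limiting mean and covariance of $\mathbf{z}_T$. Two preliminary reductions make this clean. First, since the reverse process is obtained by the time switch $t\mapsto T-t$, the coefficients $r_t,\sigma_t$ appearing in \eqref{eq:con-reverse-estimate}--\eqref{eq:con-reverse-estimate-sub} must be evaluated at the \emph{matching} forward time $T-t$; that is, in the scalar equation for $a_i(t):=\mathbf{u}_i^{\mathsf{T}}\mathbf{z}_t$ we have $r_{T-t}=e^{-(T-t)}$ and $\sigma_{T-t}^{2}=1-e^{-2(T-t)}$, so the relevant coefficient is $c_i(t):=\sigma_{T-t}^{2}+r_{T-t}^{2}\lambda_i=1-(1-\lambda_i)e^{-2(T-t)}$, which stays bounded below by $\lambda_i>0$ by the full-rank assumption, so the ODE has no singularity on $[0,T]$. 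Second, each scalar equation is a first-order \emph{affine linear} ODE with deterministic coefficients, and $\mathbf{z}_0\sim\mathcal{N}(\mathbf{0},\mathbf{I})$ makes the projections $\mathbf{u}_i^{\mathsf{T}}\mathbf{z}_0$ i.i.d.\ $\mathcal{N}(0,1)$; hence $a_i(t)$ is an affine function of $a_i(0)$ alone, the family $\{a_i(t)\}_{i}$ stays independent, and $\mathbf{z}_T$ is Gaussian. It therefore suffices to compute $\lim_{T\to\infty}\mathbb{E}[a_i(T)]$ and $\lim_{T\to\infty}\mathrm{Var}(a_i(T))$ for each $i$, exactly as the remark preceding the lemma suggests.

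Writing $m_i:=\mathbf{u}_i^{\mathsf{T}}\hat{\boldsymbol{\mu}}_k$, I would first obtain the homogeneous factor $\exp\!\big(\int_0^t(1-1/c_i(s))\,ds\big)$: substituting $u=e^{-2(T-s)}$ and using the partial fraction $\tfrac{1}{u(1-(1-\lambda_i)u)}=\tfrac1u+\tfrac{1-\lambda_i}{1-(1-\lambda_i)u}$ gives $\int_0^t(1-1/c_i(s))\,ds=\tfrac12\ln\!\big(c_i(t)/c_i(0)\big)$, i.e.\ homogeneous solution $\sqrt{c_i(t)/c_i(0)}$ (which equals $1$ at $t=0$, as it must). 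Variation of parameters then reduces the particular solution to $\int_0^t e^{-(T-s)}c_i(s)^{-3/2}\,ds$; the substitution $v=e^{-(T-s)}$ together with the elementary antiderivative $\int(1-\beta^{2}v^{2})^{-3/2}\,dv=v/\sqrt{1-\beta^{2}v^{2}}$ with $\beta^{2}=1-\lambda_i$ evaluates this integral, and after simplification one arrives at the closed form
\begin{align*}
a_i(t)=\sqrt{\tfrac{c_i(t)}{c_i(0)}}\,a_i(0)\;+\;m_i\,e^{-(T-t)}\;-\;m_i\sqrt{\tfrac{c_i(t)}{c_i(0)}}\,e^{-T}.
\end{align*}

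Evaluating at $t=T$, where $c_i(T)=\lambda_i$, gives $a_i(T)=\sqrt{\lambda_i/c_i(0)}\,a_i(0)+m_i\big(1-\sqrt{\lambda_i/c_i(0)}\,e^{-T}\big)$. Since $c_i(0)=1-(1-\lambda_i)e^{-2T}\to1$ and $e^{-T}\to0$ as $T\to\infty$, this converges to $\sqrt{\lambda_i}\,a_i(0)+m_i$, so $\mathbf{u}_i^{\mathsf{T}}\mathbf{z}_T$ converges in distribution to $\mathcal{N}(m_i,\lambda_i)$. Recombining over the orthonormal basis $\{\mathbf{u}_i\}$ and using independence across $i$, the mean of $\mathbf{z}_T$ tends to $\sum_i m_i\mathbf{u}_i=\hat{\boldsymbol{\mu}}_k$ and its covariance to $\sum_i\lambda_i\mathbf{u}_i\mathbf{u}_i^{\mathsf{T}}=\mathbf{U}\mathbf{\Lambda}\mathbf{U}^{\mathsf{T}}=\mathbf{\Sigma}_k$, which is exactly \eqref{eq:un-gen}. (When $\lambda_i=1$ one has $c_i\equiv1$ and $\beta=0$; the same closed form applies verbatim and still yields $\mathcal{N}(m_i,1)$.)

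The main obstacle is not any single hard estimate but getting the bookkeeping exactly right: pinning down that $r,\sigma$ in the reversed ODE are evaluated at forward time $T-t$ rather than $t$ (using $t$ instead spoils the mean, producing a spurious factor $\arcsin\!\sqrt{1-\lambda_i}/\sqrt{1-\lambda_i}$ while leaving the variance correct), and picking the two substitutions that turn both integrals elementary. Conceptually the whole computation amounts to the standard fact that the probability-flow ODE driven by the exact score of the Gaussian $\mathcal{N}(\hat{\boldsymbol{\mu}}_k,\mathbf{\Sigma}_k)$ maps its forward-time-$T$ marginal back to $\mathcal{N}(\hat{\boldsymbol{\mu}}_k,\mathbf{\Sigma}_k)$; the only genuine content is that the initialization $\mathcal{N}(\mathbf{0},\mathbf{I})$ differs from that marginal by an amount that vanishes as $T\to\infty$, which is precisely what the limits $c_i(0)\to1$ and $e^{-T}\to0$ quantify.
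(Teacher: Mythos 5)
Your proposal is correct and follows essentially the same route as the paper: decompose along the eigenbasis of $\mathbf{\Sigma}_k$, compute the integrating factor $\sqrt{c_i(t)/c_i(0)}$ and the same particular integral, and let $T\to\infty$; your finite-$T$ mean and variance coefficients agree exactly with the paper's $e_i$ and $v_i$. The only difference is presentational: you obtain the affine solution map by variation of parameters directly (which also makes the Gaussianity claim and the evaluation of $r,\sigma$ at forward time $T-t$ explicit), whereas the paper derives the same formulas via Euler discretization, telescoping, and the limit $h\to 0$.
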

\begin{proof}
For enhanced clarity, we initially remove the subscript $k$ in $\hat{\boldsymbol{\mu}}_k$, ${\boldsymbol{\mu}}_k$ and $\mathbf{\Sigma}_k$. 
    In order to obtain the solution to Equation \eqref{eq:con-reverse-estimate-sub} and achieve the closed form of the generation distribution, we first examine the discrete solution of Equation \eqref{eq:con-reverse-estimate-sub}. 
    
   Given any class $k$, we consider the discretization Euler-Maruyama scheme which is widely used in existing work \cite{chen2023improved,chen2022sampling,gao2023wasserstein} and discretize the interval $[0, T]$ into $N$ discretization points, 
\begin{align}
\label{eq:con-reverse-estimate-sub-dis}
  \mathbf{u}^{\mathsf{T}}_i \mathbf{z}_{(j+1)h}
  &=\mathbf{u}^{\mathsf{T}}_i \mathbf{z}_{jh} + \Big(1 - \frac{1}{\sigma^2_{t_j}+r^2(t_j)\lambda_i}\Big)\mathbf{u}^{\mathsf{T}}_i\mathbf{z}_jh+\frac{r(t_j)}{\sigma^2_{t_j}+r^2(t_j)\lambda_i}\mathbf{u}^{\mathsf{T}}_i\hat{\boldsymbol{\mu}}h,
\end{align}
where $h := \frac{T}{N}$ is the step size and $t_j = jh$.

According to Equation \eqref{eq:con-reverse-estimate-sub-dis}, $\mathbf{u}^{\mathsf{T}}_i \mathbf{z}_{(j+1)h}$ can be regarded as a linear transformation of the initial distribution, which is a standard Gaussian distribution. Therefore, $\mathbf{u}^{\mathsf{T}}_i \mathbf{z}_{(j+1)h}$ still satisfies a Gaussian distribution. The remaining task is analyze the mean and variance of the $\mathbf{u}^{\mathsf{T}}_i \mathbf{z}_{(j+1)h}$  separately.

\textbf{Expectation.} By Equation \eqref{eq:con-reverse-estimate-sub-dis}, we have
\begin{align*}
  \mathbb{E}(\mathbf{u}^{\mathsf{T}}_i \mathbf{z}_{(j+1)h})
  &=\Bigg(1 + \Big(1 - \frac{1}{\sigma^2_{t_j}+r^2(t_j)\lambda_i}\Big)h\Bigg)\mathbb{E}(\mathbf{u}^{\mathsf{T}}_i \mathbf{z}_{jh})+\frac{r(t_j)}{\sigma^2_{t_j}+r^2(t_j)\lambda_i}\mathbf{u}^{\mathsf{T}}_i\hat{\boldsymbol{\mu}}h.
\end{align*}
By telescoping, we get the discretization solution
\begin{equation*}
\resizebox{.9\textwidth}{!}{$
\begin{split}
 \mathbb{E}(\mathbf{u}^{\mathsf{T}}_i \mathbf{z}_{Nh})
  &=\prod^{N-1}_{k=0}\Bigg(1 + \Big(1 - \frac{1}{\sigma^2_{t_k}+r^2(t_k)\lambda_i}\Big)h\Bigg)\mathbb{E}(\mathbf{u}^{\mathsf{T}}_i \mathbf{z}_{0})\\
  &+\sum^{N-1}_{k=0}\frac{r(t_k)}{\sigma^2_{t_k}+r^2(t_k)\lambda_i}\mathbf{u}^{\mathsf{T}}_i\hat{\boldsymbol{\mu}}h\prod^{N-1}_{j=k+1}\Bigg(1 + \Big(1 - \frac{1}{\sigma^2_{t_j}+r^2(t_j)\lambda_i}\Big)h\Bigg).
\end{split}
$}
\end{equation*}
Since the initial distribution of reverse process is  $\mathbf{z}_0 \sim \mathcal{N}(\mathbf{0},\mathbf{I})$, thus $\mathbb{E}(\mathbf{u}^{\mathsf{T}}_i \mathbf{z}_{0}) = 0$. We further have 
\begin{align}
\label{eq:estimate-sub-dis-2}
  \mathbb{E}(\mathbf{u}^{\mathsf{T}}_i \mathbf{z}_{Nh})
  &=\sum^{N-1}_{k=0}\frac{r(t_k)}{\sigma^2_{t_k}+r^2(t_k)\lambda_i}\mathbf{u}^{\mathsf{T}}_i\hat{\boldsymbol{\mu}}h\prod^{N-1}_{j=k+1}\Bigg(1 + \Big(1 - \frac{1}{\sigma^2_{t_j}+r^2(t_j)\lambda_i}\Big)h\Bigg).
\end{align}

By limiting $h \to 0$,  we can use the identity $1 + hx \simeq \exp(hx)$. Then we get the continous version of Equation \eqref{eq:estimate-sub-dis-2} that when $h \to 0$, we have
\begin{align*}
  \mathbb{E}(\mathbf{u}^{\mathsf{T}}_i \mathbf{z}_{T})
  &=\int^{T}_{0}\frac{r_t}{\sigma^2_{t}+r^2_t\lambda_i}\Bigg(\exp{\int^T_t 1 - \frac{1}{\sigma^2_{r}+r^2(r)\lambda_i}dr\Bigg)}
 \mathbf{u}^{\mathsf{T}}_i\hat{\boldsymbol{\mu}}dt\notag\\
 &=\int^{T}_{0}\frac{e^{t-T}}{1+e^{2(t-T)}(\lambda_i-1)}\Bigg(\exp{\int^T_t 1 - \frac{1}{1+e^{2(r-T)}(\lambda_i-1)}dr\Bigg)}
\mathbf{u}^{\mathsf{T}}_i\hat{\boldsymbol{\mu}}dt\notag\\
 &=\int^{T}_{0}\frac{e^{t-T}\sqrt{\lambda_i}}{[1+e^{2(t-T)}(\lambda_i-1)]^{\frac{3}{2}}}\mathbf{u}^{\mathsf{T}}_i\hat{\boldsymbol{\mu}}dt\notag\\
 &=\sqrt{\lambda_i} \frac{e^{t-T}}{\sqrt{1+e^{2(t-T)}(\lambda_i-1)}}\bigg|_0^T\mathbf{u}^{\mathsf{T}}_i\hat{\boldsymbol{\mu}}\notag\\
 & = \Big(1-\frac{\sqrt{\lambda_i}e^{-T}}{\sqrt{1+(\lambda_i-1)e^{-2T}}}\Big)\mathbf{u}^{\mathsf{T}}_i\hat{\boldsymbol{\mu}}.
\end{align*}
Hence, we have 
\begin{align*}
  \mathbb{E}(\mathbf{z}_{T})
  & = \mathbf{U}\text{diag}(e_1, e_2, \ldots, e_d) \mathbf{U}^T\hat{\mathbf{\mu}},
\end{align*}
where $e_i = 1-\frac{\sqrt{\lambda_i}e^{-T}}{\sqrt{1+(\lambda_i-1)e^{-2T}}}$.

When $T \to \infty$, we get
\begin{align*}
e_i = 1-\frac{\sqrt{\lambda_i}e^{-T}}{\sqrt{1+(\lambda_i-1)e^{-2T}}} \to 1.
\end{align*}
And the expectation of generation distribution is the empirical mean, i.e.,
\begin{align*}
  \mathbb{E}(\mathbf{z}_{T})
  & = \hat{\mathbf{\mu}}.
\end{align*}
\textbf{Variance.} Similarly, by Equation \eqref{eq:con-reverse-estimate-sub-dis}, we have the variance
\begin{align*}
  \mathrm{Var}(\mathbf{u}^{\mathsf{T}}_i \mathbf{z}_{(j+1)h})
  &=\Bigg(1 + \Big(1 - \frac{1}{\sigma^2_{t_j}+r^2(t_j)\lambda_i}\Big)h\Bigg)^2 \mathrm{Var}(\mathbf{u}^{\mathsf{T}}_i \mathbf{z}_{jh}).
\end{align*}
By telescoping, we get the discretization solution
\begin{align*}
  \mathrm{Var}(\mathbf{u}^{\mathsf{T}}_i \mathbf{z}_{Nh})
  &=\prod^{N-1}_{k=0}\Bigg(1 + \Big(1 - \frac{1}{\sigma^2_{t_k}+r^2(t_k)\lambda_i}\Big)h\Bigg)^2\mathrm{Var}(\mathbf{u}^{\mathsf{T}}_i \mathbf{z}_{0}).
\end{align*}
Since $\mathrm{Var}(\mathbf{u}^{\mathsf{T}}_i \mathbf{z}_{0}) = \mathbf{u}^{\mathsf{T}}_i\mathrm{Var}(\mathbf{z}_{0})\mathbf{u}_i = 1$, the variance at time $T$ is 
\begin{align*}
  \mathrm{Var}(\mathbf{u}^{\mathsf{T}}_i \mathbf{z}_{Nh})
  &=\prod^{N-1}_{k=0}\Bigg(1 + \Big(1 - \frac{1}{\sigma^2_{t_k}+r^2(t_k)\lambda_i}\Big)h\Bigg)^2.
\end{align*}
When $h \to 0$ and use the identity $(1 + hx)^2 \simeq \exp(2hx)$, we get 
\begin{align*}
  \mathrm{Var}(\mathbf{u}^{\mathsf{T}}_i \mathbf{z}_{T})
  &= \exp{\int^T_0 2\Big(1 - \frac{1}{\sigma^2_{t}+r^2_t\lambda_i}\Big)}dt\\
 & = \exp{\int^T_0 2\Big(1 - \frac{1}{1+(\lambda_i-1)e^{2(t-T)}}\Big)}dt\\
 & = \frac{\lambda_i}{1+(\lambda_i-1)e^{-2T}}.
\end{align*}
Hence, we have 
\begin{align*}
   \mathrm{Var}(\mathbf{z}_{T})
  & = \mathbf{U}\text{diag}(v_1, v_2, \ldots, v_d) \mathbf{U}^T.
\end{align*}
where $v_i = \frac{\lambda_i}{1+(\lambda_i-1)e^{-2T}}$.

When $T \to \infty$, we get
\begin{align*}
v_i = \frac{\lambda_i}{1+(\lambda_i-1)e^{-2T}} \to \lambda_i.
\end{align*}
And the expectation of generation distribution is the empirical mean, i.e.,
\begin{align*}
  \mathrm{Var}(\mathbf{z}_{T})
  & = {\mathbf{\Sigma}}.
\end{align*}
We complete the proof.
\end{proof}
\subsubsection{Corrupted Conditions}
\label{sec:dis Corrupted Conditional Embedding}
We then discuss the distribution of generated data when the conditional embedding is perturbed by noise. Using the same method to derive the data distribution under the corrupted conditional embedding setting, we conclude the following Lemma \ref{lemma:co-dis}.
\begin{lemma}
\label{lemma:co-dis}
\text{(Corrupted Conditional Embedding).}
For any class $k \in \mathcal{Y}$, the distribution of generation ${\mathbf{z}}^c_T$ is    
\begin{align}
\label{eq:un-gen}
  \lim_{T\rightarrow \infty}{\mathbf{z}}^c_T \sim \mathcal{N} (\frac{\hat{\boldsymbol{\mu}}_k}{1+\gamma^2}, \mathbf{\Sigma}_k+\frac{\gamma^2}{1+\gamma^2}\|\hat{\boldsymbol{\mu}}_k\|^2_2 \mathbf{I}),
\end{align}
 where $\mathbf{\Sigma}_k:=\frac{1}{n_k}\sum^{n_k}_{i=1}\mathbf{x}_{i}\mathbf{x}^\mathsf{T}_{i}-\frac{1}{n^2_k}\sum^{n_k}_{i=1}\mathbf{x}_{i}\sum^{n_k}_{i=1}\mathbf{x}^\mathsf{T}_{i}$ is the empirical covariance of $k$-labeled dataset and $\hat{\boldsymbol{\mu}}_k:=\frac{1}{n_k}\sum^{n_k}_{i=1}\mathbf{x}_{i}$ is the empirical mean of $k$-labeled dataset, $n_k$ is the sample size of $k$-labeled dataset. And $\gamma \geq 0$ is the corruption control parameter.
\end{lemma}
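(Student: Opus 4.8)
The plan is to mimic the argument of Lemma~\ref{lemma:un-dis}, keeping track of the two modifications that the corrupted score estimator of Lemma~\ref{lemma:co-score} introduces. First I would substitute
\[
\nabla_{\mathbf{z}_t}\log\mathbb{P}^c(\mathbf{z}_t|y=k) = -\Big(\sigma^2_t\mathbf{I}+r^2_t\mathbf{\Sigma}_k+c_t\mathbf{I}\Big)^{-1}\mathbf{z}_t + \frac{r_t}{1+\eta^2}\Big(\sigma^2_t\mathbf{I}+r^2_t\mathbf{\Sigma}_k+c_t\mathbf{I}\Big)^{-1}\hat{\boldsymbol{\mu}}_k,
\]
with $c_t := \tfrac{r^2_t\eta^2}{1+\eta^2}\|\hat{\boldsymbol{\mu}}_k\|^2_2$, into the reverse ODE~\eqref{eq:con-reverse}. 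The key observation is that $r^2_t\mathbf{\Sigma}_k + c_t\mathbf{I} = r^2_t\big(\mathbf{\Sigma}_k + \gamma\mathbf{I}\big)$ where $\gamma := \tfrac{\eta^2}{1+\eta^2}\|\hat{\boldsymbol{\mu}}_k\|^2_2 \ge 0$, so the effective covariance $\tilde{\mathbf{\Sigma}}_k := \mathbf{\Sigma}_k + \gamma\mathbf{I}$ is diagonalized by the same orthogonal $\mathbf{U}$ that diagonalizes $\mathbf{\Sigma}_k$, with eigenvalues $\tilde\lambda_i = \lambda_i + \gamma$, and it remains positive definite. Projecting onto this eigenbasis exactly as in \eqref{eq:con-reverse-estimate-sub}, the reverse process decouples into $d$ scalar linear ODEs
\[
\mathbf{u}^{\mathsf{T}}_i d\mathbf{z}_t = \Big(1 - \frac{1}{\sigma^2_t+r^2_t\tilde\lambda_i}\Big)\mathbf{u}^{\mathsf{T}}_i\mathbf{z}_t\,dt + \frac{1}{1+\eta^2}\cdot\frac{r_t}{\sigma^2_t+r^2_t\tilde\lambda_i}\mathbf{u}^{\mathsf{T}}_i\hat{\boldsymbol{\mu}}_k\,dt,
\]
i.e.\ precisely the uncorrupted equations with the substitution $\lambda_i\mapsto\tilde\lambda_i$ and the $\hat{\boldsymbol{\mu}}_k$-drift rescaled by $1/(1+\eta^2)$; equivalently they are the uncorrupted equations with $\mathbf{\Sigma}_k$ replaced by $\tilde{\mathbf{\Sigma}}_k$ and $\hat{\boldsymbol{\mu}}_k$ by $\hat{\boldsymbol{\mu}}_k/(1+\eta^2)$, since $\tfrac{r_t}{1+\eta^2}(\sigma^2_t\mathbf{I}+r^2_t\tilde{\mathbf{\Sigma}}_k)^{-1}\hat{\boldsymbol{\mu}}_k = r_t(\sigma^2_t\mathbf{I}+r^2_t\tilde{\mathbf{\Sigma}}_k)^{-1}\big(\tfrac{\hat{\boldsymbol{\mu}}_k}{1+\eta^2}\big)$.

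Next I would reuse the discretize--telescope--take-limit computation of Lemma~\ref{lemma:un-dis} on each scalar ODE. Since $\mathbf{z}^c_T$ is an affine image of the Gaussian initial condition $\mathbf{z}_0\sim\mathcal{N}(\mathbf{0},\mathbf{I})$ and the coordinates $\mathbf{u}^{\mathsf{T}}_i\mathbf{z}^c_T$ stay decoupled and independent, $\mathbf{z}^c_T$ is Gaussian and it suffices to track its mean and covariance coordinatewise. For the mean, the same Euler--Maruyama telescoping, the $h\to 0$ limit, and the antiderivative used in \eqref{eq:estimate-sub-con} give
\[
\mathbb{E}(\mathbf{u}^{\mathsf{T}}_i\mathbf{z}^c_T) = \frac{1}{1+\eta^2}\Big(1 - \frac{\sqrt{\tilde\lambda_i}\,e^{-T}}{\sqrt{1+(\tilde\lambda_i-1)e^{-2T}}}\Big)\mathbf{u}^{\mathsf{T}}_i\hat{\boldsymbol{\mu}}_k \xrightarrow{T\to\infty} \frac{1}{1+\eta^2}\mathbf{u}^{\mathsf{T}}_i\hat{\boldsymbol{\mu}}_k,
\]
hence $\mathbb{E}(\mathbf{z}^c_T)\to \mathbf{U}\mathbf{U}^{\mathsf{T}}\hat{\boldsymbol{\mu}}_k/(1+\eta^2) = \hat{\boldsymbol{\mu}}_k/(1+\eta^2)$. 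For the variance the drift-bias term does not contribute, so the identical computation yields
\[
\mathrm{Var}(\mathbf{u}^{\mathsf{T}}_i\mathbf{z}^c_T) = \frac{\tilde\lambda_i}{1+(\tilde\lambda_i-1)e^{-2T}} \xrightarrow{T\to\infty} \tilde\lambda_i = \lambda_i + \gamma,
\]
so that $\mathrm{Var}(\mathbf{z}^c_T)\to \mathbf{U}\,\mathrm{diag}(\lambda_i+\gamma)\,\mathbf{U}^{\mathsf{T}} = \mathbf{\Sigma}_k + \gamma\mathbf{I} = \mathbf{\Sigma}_k + \tfrac{\eta^2}{1+\eta^2}\|\hat{\boldsymbol{\mu}}_k\|^2_2\mathbf{I}$. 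Combining the two limits gives $\lim_{T\to\infty}\mathbf{z}^c_T\sim\mathcal{N}\big(\tfrac{\hat{\boldsymbol{\mu}}_k}{1+\eta^2}, \mathbf{\Sigma}_k+\tfrac{\eta^2}{1+\eta^2}\|\hat{\boldsymbol{\mu}}_k\|^2_2\mathbf{I}\big)$, as claimed.

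The one point that needs genuine care is the integral identity $\int_t^T\!\big(1 - \tfrac{1}{1+(\tilde\lambda_i-1)e^{2(r-T)}}\big)dr = \tfrac12\log\tfrac{\tilde\lambda_i}{1+(\tilde\lambda_i-1)e^{2(t-T)}}$ underlying \eqref{eq:estimate-sub-con}: in the uncorrupted lemma the eigenvalues satisfied $\lambda_i\le 1$, whereas here $\tilde\lambda_i = \lambda_i+\gamma$ may exceed $1$. I would therefore note explicitly that this identity, and the convergence of every integral involved, require only $1 + (\tilde\lambda_i-1)e^{2(r-T)} > 0$ on $[0,T]$, which holds because $\tilde\lambda_i > 0$; consequently the derivation of Lemma~\ref{lemma:un-dis} carries over unchanged to any positive-definite effective covariance. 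Everything else is the routine bookkeeping of pushing the constants $\gamma$ and $1/(1+\eta^2)$ through the already-established computation, so I expect no substantive obstacle beyond this check.
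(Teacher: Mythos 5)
Your proposal follows essentially the same route as the paper: substitute the corrupted score estimator from Lemma~\ref{lemma:co-score} into the reverse ODE, absorb the corruption term into an effective covariance $\mathbf{\Sigma}_k+\frac{\eta^2}{1+\eta^2}\|\hat{\boldsymbol{\mu}}_k\|^2_2\mathbf{I}$ diagonalized by the same $\mathbf{U}$, and rerun the discretize--telescope--limit computation of Lemma~\ref{lemma:un-dis} with $\lambda_i\mapsto\lambda_i+\gamma$ and the drift rescaled by $1/(1+\eta^2)$, which is exactly what the paper does. Your added check that the integral identity only needs $1+(\tilde\lambda_i-1)e^{2(r-T)}>0$ (since $\tilde\lambda_i$ may now exceed $1$) is a small but worthwhile point of rigor the paper leaves implicit.
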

\begin{proof}
To improve clarity, we initially disregard the subscript $k$ in $\hat{\boldsymbol{\mu}}_k$ , ${\boldsymbol{\mu}}_k$ and $\mathbf{\Sigma}_k$. Building upon the proof details presented in Lemma \ref{lemma:un-dis} and the optimal conditional score estimation outlined in Lemma \ref{lemma:co-score}, we arrive at the subsequent findings:

\textbf{Expectation.} Similarly, we derive
\begin{equation*}
\resizebox{.98\textwidth}{!}{$
\begin{split}
  \mathbb{E}(\mathbf{u}^{\mathsf{T}}_i {\mathbf{z}}^c_{T})
  &=\frac{1}{1+\gamma^2}\int^{T}_{0}\frac{r_t}{\sigma^2_{t}+r^2_t(\lambda_i+\frac{\gamma^2}{1+\gamma^2}\|\hat{\boldsymbol{\mu}}\|^2_2)}\Bigg(\exp{\int^T_t 1 - \frac{1}{\sigma^2_{r}+r^2(r)(\lambda_i+\frac{\gamma^2}{1+\gamma^2}\|\hat{\boldsymbol{\mu}}\|^2_2)}dr\Bigg)}
 \mathbf{u}^{\mathsf{T}}_i\hat{\boldsymbol{\mu}}dt\notag\\
 &=\frac{1}{1+\gamma^2}\int^{T}_{0}\frac{e^{t-T}}{1+e^{2(t-T)}(\lambda_i-1+\frac{\gamma^2}{1+\gamma^2}\|\hat{\boldsymbol{\mu}}\|^2_2))}\Bigg(\exp{\int^T_t 1 - \frac{1}{1+e^{2(r-T)}(\lambda_i-1+\frac{\gamma^2}{1+\gamma^2}\|\hat{\boldsymbol{\mu}}\|^2_2))}dr\Bigg)}
 \mathbf{u}^{\mathsf{T}}_i\hat{\boldsymbol{\mu}}dt\notag\\
 &=\frac{1}{1+\gamma^2}\int^{T}_{0}\frac{e^{t-T}\sqrt{\lambda_i+\frac{\gamma^2}{1+\gamma^2}\|\hat{\boldsymbol{\mu}}\|^2_2)}}{[1+e^{2(t-T)}(\lambda_i-1+\frac{\gamma^2}{1+\gamma^2}\|\hat{\boldsymbol{\mu}}\|^2_2)]^{\frac{3}{2}}}\mathbf{u}^{\mathsf{T}}_i\hat{\boldsymbol{\mu}}dt\notag\\
 &=\frac{1}{1+\gamma^2}\sqrt{\lambda_i+\frac{\gamma^2}{1+\gamma^2}\|\hat{\boldsymbol{\mu}}\|^2_2} \frac{e^{t-T}}{\sqrt{1+e^{2(t-T)}(\lambda_i-1+\frac{\gamma^2}{1+\gamma^2}\|\hat{\boldsymbol{\mu}}\|^2_2)}}\bigg|_0^T\mathbf{u}^{\mathsf{T}}_i\hat{\boldsymbol{\mu}}\notag\\
 & =\frac{1}{1+\gamma^2} \Big(1-\frac{\sqrt{\lambda_i+\frac{\gamma^2}{1+\gamma^2}\|\hat{\boldsymbol{\mu}}\|^2_2}e^{-T}}{\sqrt{1+(\lambda_i-1+\frac{\gamma^2}{1+\gamma^2}\|\hat{\boldsymbol{\mu}}\|^2_2)e^{-2T}}}\Big)\mathbf{u}^{\mathsf{T}}_i\hat{\boldsymbol{\mu}}.
\end{split}
$}
\end{equation*}

Hence, we have 
\begin{align*}
  \mathbb{E}({\mathbf{z}}^c_{T})
  & = \mathbf{U}\text{diag}({e}^c_1, {e}^c_2, \ldots, {e}^c_d) \mathbf{U}^T\hat{\mathbf{\mu}},
\end{align*}
where ${e}^c_i = \frac{1}{1+\gamma^2} \Big(1-\frac{\sqrt{\lambda_i+\frac{\gamma^2}{1+\gamma^2}\|\hat{\boldsymbol{\mu}}\|^2_2}e^{-T}}{\sqrt{1+(\lambda_i-1+\frac{\gamma^2}{1+\gamma^2}\|\hat{\boldsymbol{\mu}}\|^2_2)e^{-2T}}}\Big)$.

When $T \to \infty$, we get
\begin{align*}
{e}^c_i = \frac{1}{1+\gamma^2} \Big(1-\frac{\sqrt{\lambda_i+\frac{\gamma^2}{1+\gamma^2}\|\hat{\boldsymbol{\mu}}\|^2_2}e^{-T}}{\sqrt{1+(\lambda_i-1+\frac{\gamma^2}{1+\gamma^2}\|\hat{\boldsymbol{\mu}}\|^2_2)e^{-2T}}}\Big) \to \frac{1}{1+\gamma^2}.
\end{align*}
And the expectation of generation distribution is
\begin{align*}
  \mathbb{E}({\mathbf{z}}^c_{T})
  & = \frac{1}{1+\gamma^2}\hat{\mathbf{\mu}}
\end{align*}
\textbf{Variance.} We get
\begin{align*}
  \mathrm{Var}(\mathbf{u}^{\mathsf{T}}_i {\mathbf{z}}^c_{T})
  &= \exp{\int^T_0 2\Big(1 - \frac{1}{\sigma^2_{t}+r^2_t(\lambda_i+\frac{\gamma^2}{1+\gamma^2}\|\hat{\boldsymbol{\mu}}\|^2_2}\Big)}dt\\
 & = \exp{\int^T_0 2\Big(1 - \frac{1}{1+(\lambda_i-1+\frac{\gamma^2}{1+\gamma^2}\|\hat{\boldsymbol{\mu}}\|^2_2)e^{2(t-T)}}\Big)}dt\\
 & = \frac{\lambda_i+\frac{\gamma^2}{1+\gamma^2}\|\hat{\boldsymbol{\mu}}\|^2_2}{1+(\lambda_i-1+\frac{\gamma^2}{1+\gamma^2}\|\hat{\boldsymbol{\mu}}\|^2_2)e^{-2T}}.
\end{align*}
Hence, we have 
\begin{align*}
   \mathrm{Var}({\mathbf{z}}^c_{T})
  & = \mathbf{U}\text{diag}({v}^c_1, {v}^c_2, \ldots, {v}^c_d) \mathbf{U}^T,
\end{align*}
where ${v}^c_i =\frac{\lambda_i+\frac{\gamma^2}{1+\gamma^2}\|\hat{\boldsymbol{\mu}}\|^2_2}{1+(\lambda_i-1+\frac{\gamma^2}{1+\gamma^2}\|\hat{\boldsymbol{\mu}}\|^2_2)e^{-2T}}$.

When $T \to \infty$, we get
\begin{align*}
{v}^c_i = \frac{\lambda_i+\frac{\gamma^2}{1+\gamma^2}\|\hat{\boldsymbol{\mu}}\|^2_2}{1+(\lambda_i-1+\frac{\gamma^2}{1+\gamma^2}\|\hat{\boldsymbol{\mu}}\|^2_2)e^{-2T}} \to \lambda_i+\frac{\gamma^2}{1+\gamma^2}\|\hat{\boldsymbol{\mu}}\|^2_2.
\end{align*}
And the expectation of generation distribution is the empirical mean, i.e.,
\begin{align*}
  \mathrm{Var}({\mathbf{z}}^c_{T})
  & = {\mathbf{\Sigma}} + \frac{\gamma^2}{1+\gamma^2}\|\hat{\boldsymbol{\mu}}\|^2_2 \mathbf{I}.
\end{align*}
\end{proof}

\subsection{Diversity and Quality: Clean vs. Corrupted Conditions}
\subsubsection{Generation Diversity}
\label{app:theorem-diversity}
Given any class $k$, building on previous work \cite{wu2024theoretical}, we consider entropy to measure the diversity of generated images. In particular, let $\mathbb{P}$ and $\mathbb{P}^c$ be the probability densities for the generated data using clean and corrupted conditional embeddings respectively, we have the following for $\mathbf{z}_t$ and $\mathbf{z}^c_t$
\begin{align}
    H({\mathbf{z}}_T|y=k) &: = -\int \mathbb{P}({\mathbf{z}}|y=k)\log \mathbb{P}({\mathbf{z}}|y=k) d{\mathbf{z}},\\
    H({\mathbf{z}}^c_T|y=k) &: = -\int \mathbb{P}^c({\mathbf{z}}|y=k)\log \mathbb{P}^c({\mathbf{z}}|y=k) d{\mathbf{z}}.
\end{align}

We propose the following theorem to describe the difference between these two conditional differential entropy.
\begin{theorem}
\textbf{(Restatement of Theorem \ref{theorem:diversity})}
    For any class $k \in \mathcal{Y}$, assuming the norm of corresponding expectation $\|{\boldsymbol{\mu}}_k\|^2_2$ is a constant and the empirical covariance of training data is full rank, let $\mathbf{z}_T$ and ${{\mathbf{z}}}^c_T$ be the generation featuring clean and corrupted conditions respectively, then it holds that
    \begin{align}
  H({{\mathbf{z}}}^c_T|y=k) -  H({\mathbf{z}}_T|y=k) =  \Theta(\gamma^2d),
    \end{align}
where $\gamma$ is the corruption control parameter and $d$ is the data dimension.
\end{theorem}

\begin{proof}
For the sake of clarity, we begin by omitting the subscript $k$ from $\hat{\boldsymbol{\mu}}_k$ , ${\boldsymbol{\mu}}_k$ and $\mathbf{\Sigma}_k$. Given any class $k$, since both the clean generation $\mathbf{z}_T$ and the corrupted generation ${\mathbf{z}}^c_T$ follow multivariate Gaussian distributions, we can derive the closed-form expression for the difference in their differential entropy by Lemma \ref{lemma:un-dis} and Lemma \ref{lemma:co-dis} as follows
\begin{align*}
    H({\mathbf{z}}^c_T|y=k) - H(\mathbf{z}_T|y=k) &= \frac{1}{2} \log |\mathbf{\Sigma}+\frac{\gamma^2}{1+\gamma^2}\|\hat{\boldsymbol{\mu}}\|^2_2 \mathbf{I}| - \frac{1}{2} \log |\mathbf{\Sigma}|\\
    & = \frac{1}{2} \sum^d_{i=1}\log \Big(1 + \frac{\gamma^2}{(1+\gamma^2)\lambda_i}\|\hat{\boldsymbol{\mu}}\|^2_2\Big),
\end{align*}
where $\mathbf{\Sigma}:=\frac{1}{n}\sum^{n}_{i=1}\mathbf{x}_{i}\mathbf{x}^\mathsf{T}_{i}-\frac{1}{n^2_k}\sum^{n}_{i=1}\mathbf{x}_{i}\sum^{n}_{i=1}\mathbf{x}^\mathsf{T}_{i}$ is the empirical covariance of $k$-labeled dataset and $\hat{\boldsymbol{\mu}}:=\frac{1}{n}\sum^{n}_{i=1}\mathbf{x}_{i}$ is the empirical mean of $k$-labeled dataset.

When the noise ratio $\gamma$ is small and $\lambda_i=\omega(\gamma)$,
\begin{align*}
   \log \Big(1 + \frac{\gamma^2}{(1+\gamma^2)\lambda_i}\|\hat{\boldsymbol{\mu}}\|^2_2\Big) = \frac{\gamma^2}{\lambda_i}\|\hat{\boldsymbol{\mu}}\|^2_2 +     \mathcal{O} (\gamma^2).
\end{align*}



Therefore,
 \begin{align*}
    H({\mathbf{z}}^c_T|y=k) - H(\mathbf{z}_T|y=k)
    & = \frac{1}{2} \sum^d_{i=1}\log \Big(1 + \frac{\gamma^2}{(1+\gamma^2)\lambda_i}\|\hat{\boldsymbol{\mu}}\|^2_2\Big) = \Theta(\gamma^2d).
\end{align*}
\end{proof}

\subsubsection{Generation Quality}
\label{app:theorem-quality}
Before starting proving that slight noise is beneficial to the quality of generation, we first introduce the lemmas required for the proof.
\begin{lemma}
\label{lemma:expectation of variance}
Given $\mathbf{x}_1, \cdots, \mathbf{x}_n$ independent and all distributed as a Gaussian $\mathcal{N}(\boldsymbol{\mu},\mathbf{I})$. Then, 
\begin{align}
\mathbb{E}(\mathrm{Var}(\mathbf{x}_i|\mathbf{x}_1+\cdots+\mathbf{x}_n = \mathbf{z})) = \frac{n-1}{n}\mathbf{I}.
\end{align}
\end{lemma}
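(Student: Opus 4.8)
The plan is to avoid computing the conditional density of $\mathbf{x}_i$ altogether and instead apply the law of total variance. Write $S := \mathbf{x}_1 + \cdots + \mathbf{x}_n$. The decomposition
\[
\mathrm{Var}(\mathbf{x}_i) = \mathbb{E}\big[\mathrm{Var}(\mathbf{x}_i \mid S)\big] + \mathrm{Var}\big(\mathbb{E}[\mathbf{x}_i \mid S]\big)
\]
reduces the claim to identifying the two terms on the right. Since each $\mathbf{x}_i \sim \mathcal{N}(\boldsymbol{\mu},\mathbf{I})$, the left-hand side is simply $\mathbf{I}$, so it remains to evaluate $\mathrm{Var}(\mathbb{E}[\mathbf{x}_i \mid S])$ and then solve for $\mathbb{E}[\mathrm{Var}(\mathbf{x}_i \mid S)]$.

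For the conditional mean I would argue by exchangeability rather than by direct calculation: the joint law of $(\mathbf{x}_1,\dots,\mathbf{x}_n)$ is invariant under permutations, so the random vectors $\mathbb{E}[\mathbf{x}_1 \mid S], \dots, \mathbb{E}[\mathbf{x}_n \mid S]$ are all almost surely equal; since they sum to $\mathbb{E}[S \mid S] = S$, we get $\mathbb{E}[\mathbf{x}_i \mid S] = S/n$. By independence $S \sim \mathcal{N}(n\boldsymbol{\mu}, n\mathbf{I})$, hence $\mathrm{Var}(\mathbb{E}[\mathbf{x}_i \mid S]) = \mathrm{Var}(S/n) = \tfrac{1}{n^2}\cdot n\mathbf{I} = \tfrac{1}{n}\mathbf{I}$. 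Substituting back, $\mathbb{E}[\mathrm{Var}(\mathbf{x}_i \mid S)] = \mathbf{I} - \tfrac{1}{n}\mathbf{I} = \tfrac{n-1}{n}\mathbf{I}$, which is exactly the stated identity.

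As a cross-check I would also note the direct route: $(\mathbf{x}_i, S)$ is jointly Gaussian with $\mathrm{Cov}(\mathbf{x}_i, S) = \mathbf{I}$ and $\mathrm{Var}(S) = n\mathbf{I}$, so the Gaussian conditioning formula gives $\mathrm{Var}(\mathbf{x}_i \mid S = \mathbf{z}) = \mathbf{I} - \mathbf{I}(n\mathbf{I})^{-1}\mathbf{I} = \tfrac{n-1}{n}\mathbf{I}$ already before taking the expectation — the conditional covariance of a Gaussian does not depend on the conditioning value $\mathbf{z}$, so the outer expectation is vacuous. There is no genuinely hard step here; the only points requiring a little care are justifying $\mathbb{E}[\mathbf{x}_i \mid S] = S/n$ cleanly via the permutation symmetry (instead of a messy calculation) and keeping the matrix bookkeeping consistent.
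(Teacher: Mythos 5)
Your proof is correct and follows essentially the same route as the paper: both apply the law of total variance and identify $\mathbb{E}[\mathbf{x}_i \mid S] = S/n$ via the permutation symmetry of the $\mathbf{x}_j$'s. Your added cross-check via the Gaussian conditioning formula is a nice bonus but does not change the substance of the argument.
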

\begin{proof}
The expectation is
\begin{align*}
    &\mathbb{E}(\mathbf{x}_i|\mathbf{x}_1+\cdots+\mathbf{x}_n = \mathbf{z})\\
    &=\mathbb{E}( \mathbf{z}-\mathbf{x}_1-\cdots-\mathbf{x}_{i-1}-\mathbf{x}_{i+1}-\cdots,\mathbf{x}_{n}|\mathbf{x}_1+\cdots+\mathbf{x}_n = \mathbf{z})\\
    &=\mathbf{z} - (n-1) \mathbb{E}(\mathbf{x}_i|\mathbf{x}_1+\cdots+\mathbf{x}_n = \mathbf{z}).
\end{align*}
Therefore,
\begin{align*}
   \mathbb{E}(\mathbf{x}_i|\mathbf{x}_1+\cdots+\mathbf{x}_n = \mathbf{z}) = \frac{\mathbf{z}}{n}.
\end{align*}
By the law of total variance
\begin{align*}
  \mathbb{E}(\mathrm{Var}(\mathbf{x}_i|\mathbf{x}_1+\cdots+\mathbf{x}_n = \mathbf{z})) &= \mathrm{Var}(\mathbf{x}_i) - \mathrm{Var}( \mathbb{E}(\mathbf{x}_i|\mathbf{x}_1+\cdots+\mathbf{x}_n = \mathbf{z}))\\
  & = \frac{n-1}{n}\mathbf{I}.
\end{align*}
\end{proof}
We consider the Wasserstein distance to measure the distance between the generation distribution and the true data distribution. A smaller Wasserstein distance implies a closer proximity between the generated data distribution and the true data distribution, thereby indicating better generation quality

Given class $k$, we define $2$-Wasserstein distance between the true data distribution $\mathbf{x}|y=k$ and the clean generation ${\mathbf{z}}_T$ as $d:=\mathcal{W}_2\Big(\mathcal{N}(\boldsymbol{\mu}_k,\mathbf{I}),\mathcal{N} (\hat{\boldsymbol{\mu}}_k, \mathbf{\Sigma}_k)\Big)$. Similarly, the Wasserstein distance between the true data distribution $\mathbf{x}|y=k$ and the corrupted generation ${\mathbf{z}}^c_t$ is denoted as $d_c:=\mathcal{W}_2\Big(\mathcal{N}(\boldsymbol{\mu}_k,\mathbf{I}),\mathcal{N} (\frac{\hat{\boldsymbol{\mu}}_k}{1+\gamma^2}, \mathbf{\Sigma}_k+\frac{\gamma^2}{1+\gamma^2}\|\hat{\boldsymbol{\mu}}_k\|^2_2 \mathbf{I})\Big)$. 
\begin{theorem}
\textbf{(Restatement of Theorem \ref{theorem:quality})}
For any class $k \in \mathcal{Y}$, assume the norm of corresponding expectation $\|{\boldsymbol{\mu}}_k\|^2_2$ is a constant,
let $\mathbb{P}$, $\mathbb{Q}_\mathbf{X}$ and $\mathbb{Q}^c_\mathbf{X}$ be the ground truth, clean, and corrupted condition distributions, respectively, where $\mathbf{X}$ represents the collection of training data points. If $\gamma=O(1/\sqrt{\max_{k}n_k})$, it holds that
\begin{align}
\mathbb{E}_\mathbf{X}\Big[\mathcal{W}^2_2(\mathbb{P},\mathbb{Q}_\mathbf{X}) - \mathcal{W}^2_2(\mathbb{P},\mathbb{Q}^c_\mathbf{X})|y=k\Big] = {\Omega}\Big(\frac{\gamma^2d}{n_k}\Big),
\end{align}
  where $\mathcal{W}^2_2(\cdot,\cdot)$ denotes the quadratic $2$-Wasserstein distance between two distributions, $n_k$ is the sample size of $k$-labeled dataset and $d$ is the data dimension.
\end{theorem}

\begin{proof}
To express more clearly, we first  omit the subscript $k$ of $n_k$, $\hat{\boldsymbol{\mu}}_k$ , ${\boldsymbol{\mu}}_k$ and $\mathbf{\Sigma}_k$. According to Lemma \ref{lemma:un-dis} and Lemma \ref{lemma:co-dis}, we then can directly derive the closed form of Wasserstein distance between two Gaussian as
 \begin{itemize}
     \item The squared Wasserstein distance between true data distribution and clean generation distribution
     \begin{align*}
d^2 & =\|\hat{\boldsymbol{\mu}}-\boldsymbol{\mu}\|^2_2+\mathrm{Tr}(\mathbf{I})+\mathrm{Tr}(\mathbf{\Sigma})-2\mathrm{Tr}(\mathbf{\Sigma}^{\frac{1}{2}})\\
&= \|\hat{\boldsymbol{\mu}}-\boldsymbol{\mu}\|^2_2 + d + \sum^d_{i=1} \lambda_i - 2 \sum^d_{i=1} \sqrt{\lambda_i}.
\end{align*}
\item The squared Wasserstein distance between true data distribution and corrupted generation distribution
\begin{equation*}
\resizebox{.9\textwidth}{!}{$
\begin{split}
 d^2_c & =\Big\|\frac{\hat{\boldsymbol{\mu}}}{1+\gamma^2}-\boldsymbol{\mu}\Big\|^2_2+\mathrm{Tr}(\mathbf{I})+\mathrm{Tr}\Big(\mathbf{\Sigma}+\frac{\gamma^2}{1+\gamma^2}\|\hat{\boldsymbol{\mu}}\|^2_2  \mathbf{I})\Big)-2\mathrm{Tr}\Big((\mathbf{\Sigma}+\frac{\gamma^2}{1+\gamma^2}\|\hat{\boldsymbol{\mu}}\|^2_2 \mathbf{I}))^{\frac{1}{2}}\Big)\\
& = \Big\|\frac{\hat{\boldsymbol{\mu}}}{1+\gamma^2}-\boldsymbol{\mu}\Big\|^2_2 + d + \sum^d_{i=1} \Big(\lambda_i+\frac{\gamma^2}{1+\gamma^2}\|\hat{\boldsymbol{\mu}}\|^2_2\Big)  - 2 \sum^d_{i=1} \sqrt{\lambda_i+\frac{\gamma^2}{1+\gamma^2}\|\hat{\boldsymbol{\mu}}\|^2_2}.
\end{split}
$}
\end{equation*}
 \end{itemize}
The difference in squared Wasserstein distance is
\begin{equation*}
\resizebox{.9\textwidth}{!}{$
\begin{split}
d^2 - d^2_c & =-\Big\|\frac{\hat{\boldsymbol{\mu}}}{1+\gamma^2}-\boldsymbol{\mu}\Big\|^2_2 + \|\hat{\boldsymbol{\mu}}-\boldsymbol{\mu}\|^2_2 - \sum^d_{i=1} \frac{\gamma^2}{1+\gamma^2}\|\hat{\boldsymbol{\mu}}\|^2_2 + 2 \sum^d_{i=1} \Big(\sqrt{\lambda_i+\frac{\gamma^2}{1+\gamma^2}\|\hat{\boldsymbol{\mu}}\|^2_2} - \sqrt{\lambda_i}\Big).
\end{split}
$}
\end{equation*}
We consider the expectation error to reduce the randomness of $\hat{\boldsymbol{\mu}}$ as
\begin{equation*}
\resizebox{.9\textwidth}{!}{$
\begin{split}
  \mathbb{E}[d^2 - d^2_c] & = \underbrace{-\mathbb{E}\left[\left\|\frac{\hat{\boldsymbol{\mu}}}{1+\gamma^2}-\boldsymbol{\mu}\right\|^2_2\right] + \mathbb{E}\left[\left\|\hat{\boldsymbol{\mu}}-\boldsymbol{\mu}\right\|^2_2\right]}_{\text{errors caused by the mean}} +\underbrace{\sum^d_{i=1}\mathbb{E}\Bigg[2\Big(\sqrt{\lambda_i+\frac{\gamma^2}{1+\gamma^2}\|\hat{\boldsymbol{\mu}}\|^2_2} - \sqrt{\lambda_i}\Big)-\frac{\gamma^2}{1+\gamma^2}\|\hat{\boldsymbol{\mu}}\|^2_2\Bigg]}_{\text{errors caused by the variance}}.
\end{split}
$}
\end{equation*}

We notice that the expectation error can be decomposed into two parts. The error of the first part being caused by the difference in means between the generated distribution and the true distribution. Since the distribution of empirical mean is $\hat{\boldsymbol{\mu}} \sim \mathcal{N}(\boldsymbol{\mu}, \frac{1}{n}\mathbf{I})$ where $n_k$ is the sample size of $k$-labeled dataset, we get
\begin{align*}
-\mathbb{E}\left[\left\|\frac{\hat{\boldsymbol{\mu}}}{1+\gamma^2}-\boldsymbol{\mu}\right\|^2_2\right] + \mathbb{E}\left[\left\|\hat{\boldsymbol{\mu}}-\boldsymbol{\mu}\right\|^2_2\right]
&= -\frac{d}{n(1+\gamma^2)^2}-(\frac{\gamma^2}{1+\gamma^2})^2\|\boldsymbol{\mu}\|^2_2+\frac{d}{n}\\
&\ge-\frac{d}{n}+\frac{2d\gamma^2}{n}-o(\gamma^4)+\frac{d}{n}\\
&=\frac{2d\gamma^2}{n}-o(\gamma^4).
\end{align*}
The second part is attributable to the difference in covariance.
\begin{align*}
&\sum^d_{i=1}\mathbb{E}\Bigg[2\Big(\sqrt{\lambda_i+\frac{\gamma^2}{1+\gamma^2}\|\hat{\boldsymbol{\mu}}\|^2_2} - \sqrt{\lambda_i}\Big)-\frac{\gamma^2}{1+\gamma^2}\|\hat{\boldsymbol{\mu}}\|^2_2\Bigg]\\
&\geq \sum^d_{i=1}\mathbb{E}\Bigg[\gamma^2\frac{\|\hat{\boldsymbol{\mu}}\|^2_2}{\sqrt{\lambda_i}}-\gamma^2\|\hat{\boldsymbol{\mu}}\|^2_2-o(\gamma^4)\Bigg]\\
&=\gamma^2\sum^d_{i=1}\mathbb{E}\Big[(\frac{1}{\sqrt{\lambda_i}}-1)\|\hat{\boldsymbol{\mu}}\|^2_2\Big]-o(\gamma^4d).
\end{align*}
By the law of total expectation 
\begin{align*}
   \sum^d_{i=1} \mathbb{E}\Big[(\frac{1}{\sqrt{\lambda_i}}-1)\|\hat{\boldsymbol{\mu}}\|^2_2\Big] & = \sum^d_{i=1}\mathbb{E}\Big[\|\hat{\boldsymbol{\mu}}\|^2_2\mathbb{E}\Big(\frac{1}{\sqrt{\lambda_i}}|\hat{\boldsymbol{\mu}}\Big)\Big]-d \\
    & \overset{(a)}{\geq} \mathbb{E}\Big[\sum^d_{i=1}\frac{\|\hat{\boldsymbol{\mu}}\|^2_2}{\sqrt{\mathbb{E}(\lambda_i|\hat{\boldsymbol{\mu}})}}\Big]-d\\
    & \overset{(b)}{ \geq} \mathbb{E}\Bigg[\frac{\|\hat{\boldsymbol{\mu}}\|^2_2}{\sqrt{\sum^d_{i=1}\mathbb{E}(\lambda_i|\hat{\boldsymbol{\mu}})}}\Bigg]-d.
\end{align*}
(a) and (b) achieve by the Jensen's inequality given $\frac{1}{\sqrt{\lambda_i}}$ is a convex function

By Lemma \ref{lemma:expectation of variance}, we derive
\begin{align*}
    \mathrm{Tr}\Big(\mathbb{E}\Big[\mathrm{Var}(\mathbf{x}|\hat{\boldsymbol{\mu}})\Big]\Big)&=\mathrm{Tr}\Big(\mathbb{E}\Big[\mathbf{\Sigma}|\hat{\boldsymbol{\mu}}\Big]\Big)=\mathrm{Tr}\Big(\mathbf{U}\mathbb{E}\Big[\mathbf{\Lambda}|\hat{\boldsymbol{\mu}}\Big]\mathbf{U}^{\mathsf{T}}\Big)=\sum^d_{i=1}\mathbb{E}(\lambda_i|\hat{\boldsymbol{\mu}}) = \frac{n-1}{n}d.
\end{align*}
Hence, we get 
\begin{align*}
   \sum^d_{i=1} \mathbb{E}\Bigg[(\frac{1}{\sqrt{\lambda_i}}-1)\|\hat{\boldsymbol{\mu}}\|^2_2\Bigg] & \geq \mathbb{E}\Big[\frac{\|\hat{\boldsymbol{\mu}}\|^2_2}{\sqrt{\sum^d_{i=1}\mathbb{E}(\lambda_i|\hat{\boldsymbol{\mu}})}}\Big]-d\\
   & = \Big(\sqrt{\frac{n}{n-1}} - 1\Big)d \mathbb{E}[\|\hat{\boldsymbol{\mu}}\|^2_2]\\
   & =\Big(\sqrt{\frac{n}{n-1}} - 1\Big)\Big(\frac{d^2}{n}+d\|{\boldsymbol{\mu}}\|^2_2\Big).
\end{align*}
The second part is 
\begin{align*}
&\sum^d_{i=1}\mathbb{E}\Bigg[2\Big(\sqrt{\lambda_i+\frac{\gamma^2}{1+\gamma^2}\|\hat{\boldsymbol{\mu}}\|^2_2} - \sqrt{\lambda_i}\Big)-\frac{\gamma^2}{1+\gamma^2}\|\hat{\boldsymbol{\mu}}\|^2_2\Bigg]\\
&\geq \gamma^2 \Big(\sqrt{\frac{n}{n-1}} - 1\Big)(\frac{d^2}{n}+d\|{\boldsymbol{\mu}}\|^2_2)-o(\gamma^4d).
\end{align*}
Therefore, with small noise ratio $\gamma$, we then can conclude that 
\begin{equation*}
\resizebox{.9\textwidth}{!}{$
\begin{split}
 \mathbb{E}[d^2 - d^2_c] & = -\mathbb{E}\left[\left\|\frac{\hat{\boldsymbol{\mu}}}{1+\gamma^2}-\boldsymbol{\mu}\right\|^2_2\right] + \mathbb{E}\left[\left\|\hat{\boldsymbol{\mu}}-\boldsymbol{\mu}\right\|^2_2\right]+\sum^d_{i=1}\mathbb{E}\Bigg[2\Big(\sqrt{\lambda_i+\frac{\gamma^2}{1+\gamma^2}\|\hat{\boldsymbol{\mu}}\|^2_2} - \sqrt{\lambda_i}\Big)-\frac{\gamma^2}{1+\gamma^2}\|\hat{\boldsymbol{\mu}}\|^2_2\Bigg]\\
&\geq \frac{2d\gamma^2}{n}+\gamma^2 \Big(\sqrt{\frac{n}{n-1}} - 1\Big)\Big(\frac{d^2}{n}+d\|{\boldsymbol{\mu}}\|^2_2\Big)-o(\gamma^4d).
\end{split}
$}
\end{equation*}

Noting that $\sqrt{n/(n-1)}-1 = -\Theta(1/n)$ when $n$ is large, then if the corruption level $\gamma$ satisfies $\gamma = O(1/\sqrt{n})$, for any class $k$, we have
\begin{align*}
    \mathbb{E}[d^2 - d^2_c] = {\Omega}\Big(\frac{\gamma^2d}{n}\Big).
\end{align*}
This completes the proof.
\end{proof}

\section{Details of Condition Corruption, Model Training and Evaluation}
\label{sec:appendix_pretrain}

In this section, we provide detailed training setup of each diffusion models we studied in the main paper, the synthetic corruption for IN-1K and CC3M, the annotation process of IN-100, and the evaluation metrics we adopted.

\subsection{Synthetic Condition Corruption}
\label{sec:appendix_pretrain-corruption}

We mainly studied four types of condition corruption in this paper, with two datasets. 
For IN-1K, we used random symmetric and asymmetric condition corruption.
For CC3M, we adopted text swapping and LLM re-writing corruption. 
We used several levels of corruption $\eta = \{0, 2.5, 7.5, 10, 15, 20\}$\%.  

\textbf{Symmetric Condition Corruption for IN-1K}.
To introduce symmetric condition corruption in IN-1K according to a corruption ratio $\eta$, we randomly sample a $(\mathbf{x}, y)$ pair from the dataset, and flip $y$ to another class according to the class prior in IN-1K to obtain $y^c$, until the ratio of $y^c$ satisfies $\eta$.

\textbf{Asymmetric Condition Corruption for IN-1K}.
For asymmetric condition corruption of IN-1K, we first find the class overlap between IN-1K and CIFAR-100 using WordNet \cite{fellbaum1998wordnet}, denoted as $\mathcal{Y}_{\mathrm{IN-1K}}^{\mathrm{C-100}}$. 
Then, we randomly sample $(\mathbf{x}, y)$ from the data subset whose $y$ satisfies $y \in \mathcal{Y}_{\mathrm{IN-1K}}^{\mathrm{C-100}}$ and flip $y$ into the remaining classes of the overlapped set $\mathcal{Y}_{\mathrm{IN-1K}}^{\mathrm{C-100}} / y$.

\textbf{Text Swapping Condition Corruption for CC3M}.
For CC3M, where $y$ is text captions for the images, we randomly sample two pairs and swap the text of these two pairs to introduce condition corruptions. 
This mainly follows Chen et al. \cite{chen2023understanding}, where very disruptive corruption is introduced.

\textbf{LLM Text Condition Corruption for CC3M}.
Text swapping corruption may not be common in practice for image-text datasets. 
Instead, we may encounter captions that have unmatched entities or partially unmatched sentences with the images.
To study the text corruption in a more realistic scenarios, we use GPT-4 and prompt it to re-write the captions to introduce corruptions. 
We pre-define 5 levels of corruption in the prompt, and randomly sample a level as input to GPT-4. 





\subsection{Automatic ImageNet-100 Annotation}
\label{sec:appendix_pretrain-in100_anno}

Here, we present the details of annotate ImageNet-100 for personalization of LDMs using ControlNet and T2I-Adapters. A few examples of the annotated images and captions are shown in \cref{fig:in100_anno}.

\begin{figure}
    \centering
    \includegraphics[width=0.5\linewidth]{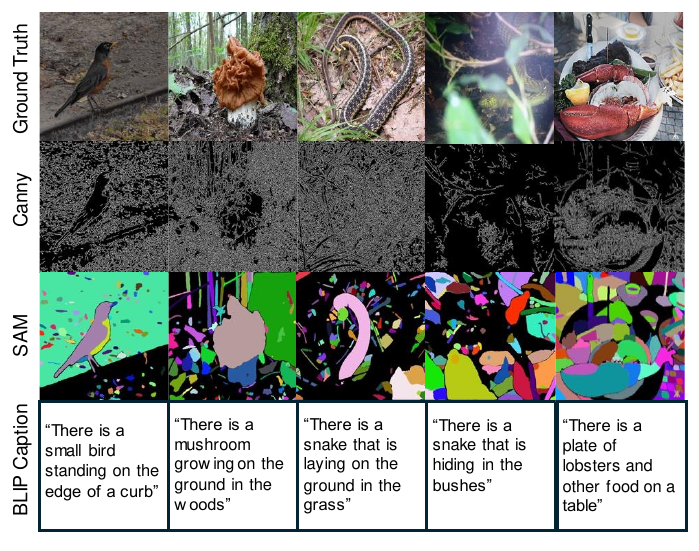}
    \caption{Annotation examples of IN-100.}
    \label{fig:in100_anno}
\end{figure}

\textbf{Canny Edge}. 
For canny edge, we directly use the Canny detector from OpenCV to annotate the images. We set the low threshold and high threshold of canny detector to 100 and 200 respectively. 

\textbf{Segmentation Mask from SAM}. 
We use SAM to annotate segmentation masks from IN-100 images. We directly use the colormap of the segmentation masks as input control to ControlNet and T2I-Adapters. 

\textbf{Captions from BLIP}. We use BLIP captioning model to generate captions for IN-100 for adapting text-conditional LDMs. 

\subsection{LDM Pre-training Setup}
\label{sec:appendix_pretrain-ldm_setup}

The pre-training setup of LDM-4 mainly follows Rombach et al. \cite{rombach2022high}, as shown in \cref{tab:appendix_ldm_setup}.
For LDM-4 models, we use a VQ-VAE \cite{esser2021taming} with a down-sampling factor of $4$ and a latent space with shape $64\times64\times3$. 
It also has a vocabulary size of $8196$.

\begin{table}[h]
\centering
\caption{Hyper-parameters of IN-1K class-conditional and CC3M text-conditional LDMs.}
\label{tab:appendix_ldm_setup}
\resizebox{0.6 \textwidth}{!}{%
\begin{tabular}{@{}lcc@{}}
\toprule
                    & IN-1K $256\times256$      & CC3M  $256\times256$ \\ \midrule
Down-sampling Factor & 4              & 4        \\
Latent Shape        &  $64\times64\times3$              &  $64\times64\times3$        \\
Vocabulary Size     & 8192           & 8192     \\
Diffusion Steps     & 1000           & 1000     \\
Noise Schedule      & Linear         & Linear   \\
U-Net Param. Size   & 400M           & 400M     \\
Condition Net       & Class Embedder & BERT     \\
Channels            & 192            & 192      \\
Channel Multipler   & 1,2,3,5        & 1,2,3,5  \\
Number of Heads     & 1              & 1        \\
Batch Size          & 64             & 64       \\
Training Iter.      & 178K           & 396K     \\
Learning Rate       & 1$e$-4           & 1$e$-4    \\ \bottomrule
\end{tabular}%
}
\end{table}

\textbf{IN-1K}. The hyper-parameters of training IN-1K class-conditional LDMs are summarized as follows. We use a U-Net with channels of 192 and channel multipliers of $1, 2, 3, 5$ as the denoising network backbone. 
We use class embedding, i.e., embedding layer, for computing the embeddings of class labels. 
The conditional embedding is injected to the U-Net with cross-attention. 
We use DDPM with linear schedule of 1000 steps.
The batch size is set to 64 per GPU, and the learning rate is set to 1$e$-4
Training IN-1K LDMs for 178K iterations takes about 2.5 days on 8 NVIDIA A100.

\textbf{CC3M}. We use a same U-Net as denoising network backbone. We adjust the training iterations to 396K iterations for CC3M, which takes 7.5 days to train on 8 NVIDIA A100. 
We use a pre-trained BERT model (bert-base-uncased) for the conditional embeddings, and it is fully trainable.

\subsection{DiT Pre-training Setup}
\label{sec:appendix_pretrain-dit_setup}

\begin{table}[h]
\centering
\caption{Hyper-parameters of IN-1K class-conditional DiT-XL/2.}
\label{tab:appendx_dit_setup}
\resizebox{0.45 \textwidth}{!}{%
\begin{tabular}{@{}lc@{}} 
\toprule
                & DiT-XL/2 IN-1K $256\times256$  \\ \midrule
Down-sampling Factor & 8 \\
Latent Shape    &       $32\times32\times4$          \\
Vocabulary Size &      16384          \\
Params.          & 675M           \\
Training Iters.  & 400K           \\
Batch Size      & 32             \\
Learning Rate   & 1e-4          \\ \bottomrule
\end{tabular}%
}
\end{table}

We pre-train DiT-XL/2 on IN-1K follows Peebles et al. \cite{peebles2023scalable}.
The hyper-parameters are shown in \cref{tab:appendx_dit_setup}.
We train DiT-XL/2 for 400K training iterations using a per GPU batch size of 32 on 8 NVIDIA A100, which takes around 2.5 days.
Compared to LDM-4, DiT-XL/2 used a fine-tuned VQ-VAE with a down-sampling factor of $8$,  a latent space of shape $32\times32\times4$, and a vocabulary size of $16384$.
DiT-XL/2 has a denoising network backbone based on Transformer architecture and uses Adaptive LayerNorm, initialized with zeros, for injecting the conditional information. 

\subsection{LCM Pre-training Setup}
\label{sec:appendix_pretrain-lcm_setup}

LCM distills the pre-trained Stable Diffusion models to enable faster inference with fewer steps. 
We choose Stable Diffusion v1.5 as the teacher model and conduct distillation for 35K iterations, which takes 1.5 days on 8 NVIDIA A100 GPUs. 
We use a learning rate of 1$e$-5.

\subsection{ControlNet and T2I-Adapter Personalization Setup}
\label{sec:appendix_pretrain-adapt_setup}

We use the implementation of ControlNet and T2I-adapter of Diffusers \cite{von-platen-etal-2022-diffusers} for downstream personalization tasks.
Default learning rate and batch size from Diffusers are used for these two methods, and we set the training epochs for IN-100 as 10. 
On 4 NVIDIA V100 GPUs, training ControlNet and T2I-Adapter with LDM-4 takes about 6 hours.

\subsection{Evaluation Metrics}
\label{sec:appendix_pretrain-metric}

We introduce the details of metrics we used to evaluate the diffusion models here.
Due to the known difficulties of evaluating generative models, we adopt most of the existing criteria to evaluate the models we have trained.

\textbf{Fr\'{e}chet Inception Distance (FID)} \cite{heusel2017gans}. 
FID measures the distance between real and generated images in the feature space of an ImageNet-1K pre-trained classifier \cite{szegedy2016rethinking}, indicating the similarity and fidelity of the generated images to real images.

\textbf{sFID} \cite{nash2021generating}. 
sFID utilizes the mid-level features of the inception network \cite{szegedy2016rethinking}, which are more sensitive to spatial variability. 

\textbf{Inception Score (IS)} \cite{salimans2016improved}. 
IS also measures the fidelity and diversity of generated images. 
It consists of two parts: the first part measures whether each image belongs confidently to a single class of an ImageNet-1K pre-trained image classifier \cite{szegedy2016rethinking} and the second part measures how well the generated images capture diverse classes.

\textbf{Precision and Recall} \cite{kynkaanniemi2019improved}. 
The real and generated images are first converted to non-parametric representations of the manifolds using k-nearest neighbors, on which the Precision and Recall can be computed.
Precision is the probability that a random generated image from estimated generated data manifolds falls within the support of the manifolds of estimated real data distribution.
Recall is the probability that a random real image falls within the support of generated data manifolds. 
Thus, precision measures the general quality and fidelity of the generated images, and the recall measures the coverage and diversity of the generated images.

\textbf{Top-1\% Relative Mahalanobis Distance (RMD) Score} \cite{cui2024learning}.
RMD score measures the sample complexity and difficulty. 
It is defined as the difference between the Mahalanobis distances of a sample induced by the class-specific and class-agnostic Gaussian distributed estimated from the generated data. 
Given the dataset $\{(\mathbf{x}_i, y_i)\}_{i \in [N]}$, we first compute the features using the CLIP ViT-B-16 encoder from the images as $G(\mathbf{x})$.
The class-specific Gaussian distribution is then estimated:
\begin{equation}
\begin{split}
\mathbb{P}(G(\mathbf{x}) \mid y=k)&=\mathcal{N}\left(G(\mathbf{x}) \mid \boldsymbol{\mu}_k, \mathbf{\Sigma}\right) \\
\boldsymbol{\mu}_k &= \frac{1}{N_k} \sum_{i: y_i=k} G\left(\mathbf{x}_i\right) \\ 
\mathbf{\Sigma} &= \frac{1}{N} \sum_k \sum_{i: y_i=k}\left(G\left(\mathbf{x}_i\right)-\boldsymbol{\mu}_k\right)\left(G\left(\mathbf{x}_i\right)-\boldsymbol{\mu}_k\right)^{\top}.
\end{split}
\end{equation}
The class-agnostic Gaussian distribution is estimated over all data as;
\begin{equation}
\begin{split}
\mathbb{P}(G(\mathbf{x}))&=\mathcal{N}\left(G(\mathbf{x}) \mid \boldsymbol{\mu}_{\mathrm{agn}}, \mathbf{\Sigma}_{\mathrm{agn}}\right), \\
\boldsymbol{\mu}_{\mathrm{agn}}&=\frac{1}{N} \sum_i^N G\left(\mathbf{x}_i\right), \\
\mathbf{\Sigma}_{\mathrm{agn}}&=\frac{1}{N} \sum_i^N\left(G\left(\mathbf{x}_i\right)-\boldsymbol{\mu}_{\mathrm{agn}}\right)\left(G\left(\mathbf{x}_i\right)-\boldsymbol{\mu}_{\mathrm{agn}}\right)^{\top}.
\end{split}
\end{equation}
The RMD is defined as:
\begin{equation}
\begin{split}
\mathcal{R} \mathcal{M} \mathcal{D}\left(\mathbf{x}_i, y_i\right)&=\mathcal{M}\left(\mathbf{x}_i, y_i\right)-\mathcal{M}_{\mathrm{agn}}\left(x_i\right) \\
\mathcal{M}\left(\mathbf{x}_i, y_i\right)&=-\left(G\left(\mathbf{x}_i\right)-\boldsymbol{\mu}_{y_i}\right)^{\top} \mathbf{\Sigma}^{-1}\left(G\left(\mathbf{x}_i\right)-\boldsymbol{\mu}_{y_i}\right) \\
\mathcal{M}_{\mathrm{agn}}\left(\mathbf{x}_i\right) &=-\left(G\left(\mathbf{x}_i\right)-\boldsymbol{\mu}_{\mathrm{agn}}\right)^{\top} \mathbf{\Sigma}_{\textrm{agn }}^{-1}\left(G\left(\mathbf{x}_i\right)-\boldsymbol{\mu}_{\mathrm{agn}}\right)
\end{split}
\end{equation}
We compute the RMD score for all generated images, and report only the top-$1\%$ of them.

\textbf{Average Top-5 $L_2$ Distances}. 
As an additional metric of sample diversity, we compute the $L_2$ distance of each generated image with the top-5 nearest neighbor training images. 
To reduce computation requirement of searching over the raw pixel space, we use the CLIP ViT-B-16 image encoder \cite{radford2021learning} to transform images into the feature space before calculating the $L_2$ distance. 
This metric measures the distance of generated samples with training images, as a proxy evaluation of diversity and memorization.

\textbf{TopPR F1} \cite{kim2024topp}.
TopPR is a set of reliable evaluation metrics with statistically consistent estimates of generated data and real data.
We use the F1 score, computed from the TopPR Precision and Recall as an additional metric to evaluate the general quality and diversity of generated images.

\textbf{CLIP Score} \cite{hessel2021clipscore}. 
CLIP score measures the cosine similarity between the CLIP embedding of an image-text pair. 
It is widely used as a metric to evaluate the fidelity and alignment of the generated images and the conditional text prompts \cite{rombach2022high}.

\textbf{Memorization Ratio} \cite{carlini2023extracting}.
We compute the memorization ratio as the percentage of generated images whose $L_2$ distances with their nearest neighbor training images are below a pre-defined threshold. 
We compute the distances in the feature space of CLIP ViT-B-16 image encoder \cite{radford2021learning} due to the massive size and resolution of the training images and set the threshold as $0.12$.
Although there are several studies using the distance comparison between the first and second nearest neighbor as a reflection of memorization \cite{gu2023memorization,yoon2023diffusion}, we found that this metric is not effective for large-scale datasets.

\textbf{Entropy}. 
We compute the entropy metric within the latent space of the pre-trained VQ-VAE \cite{van2017neural,esser2021taming}. 
Since LDMs (and DiT) learn the data distribution from the latent space of VQ-VAE, we can compute the sample entropy $\mathbb{E}[H(\mathbf{x})]$ using the generated and flatten latent vector $\mathbf{x} \in \mathbb{R}^{HW \times D}$ and the codebook $\mathcal{C} \in \mathbb{R}^{C \times D}$ of VQ-VAE, where $H$ and $W$ are the height and weights of the original latent vectors, $D$ indicates the dimension of the latent space, and $C$ denotes the number of embeddings of the codebook. 
We compute the probability of each latent vector as $\softmax \left( \|  {\mathbf{x}} - \mathcal{C} \|_2^2 / \tau \right) $, where $\tau$ is a temperature parameter controlling the sharpness of the probability.  We compute entropy as:
\begin{equation}
    \mathbb{E}[H(\mathbf{x})] = \frac{1}{NHW} \sum_i^N \sum_j^{HW} \sum_k^C \softmax \left( \|  \mathbf{x}_{(i,j)} - \mathcal{C} \|_2^2 / \tau \right)     
\end{equation}

\section{Full Results of Pre-training Evaluation}
\label{sec:appendix_full_pretrain}

In this section, we present all results of our pre-training evaluation, over different diffusion models, including LDM-4, DiT-XL/2, and LCM-v1.5, and various types of condition corruption.

\subsection{Quantitative Results}

We present the full evaluation results of IN-1K class-conditional LDMs and CC3M text-conditional LDMs in \cref{fig:full_ldm_in1k_pretrain} and \cref{fig:full_ldm_cc3m_pretrain} respectively.
All the results are computed from using a set of guidance scales. 
For IN-1K LDMs, we use $s \in \{1.5, 1.75, 2.0, 2.25, 2.5, 3.0, 3.5, 4.0, 4.5, 5.5, 6.0, 6.5, 7.0, 7.5, 8.0, 9.0, 10.0\}$. 
For CC3M LDMs, we use $s \in \{1.5, 2.0, 2.5, 2.75, 3.0, 3.25, 3.5, 4.0, 5.0, 6.0, 7.0, 7.5, 8.0, 10.0\}$
For all the metrics, including FID, IS, Precision, Recall, sFID, TopPR F1, and CLIP score, we can all observe that slight condition corruption makes LDMs perform better, with improved image quality and diversity.
We also observe that, when there is condition corruption in the dataset, the memorization ratio based on $L_2$ distances actually decreases, in line with observations as in Gu et al. \cite{gu2023memorization}.

\begin{figure}[t!]
\centering
    \hfill
    \subfigure[FID]{\label{fig:full_ldm_in1k_pretrain_fid}\includegraphics[width=0.24\linewidth]{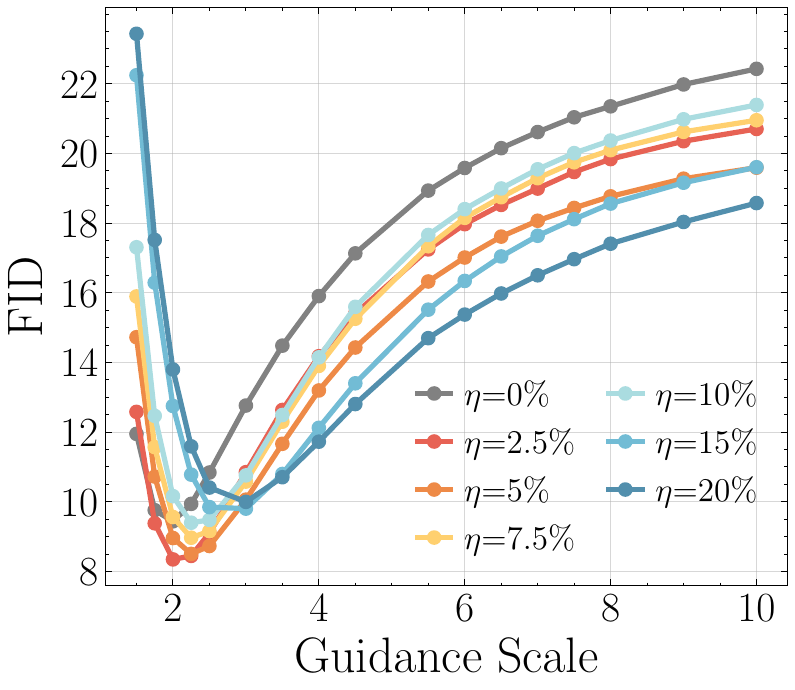}}
    \hfill
    \subfigure[IS]{\label{fig:full_ldm_in1k_pretrain_is}\includegraphics[width=0.24\linewidth]{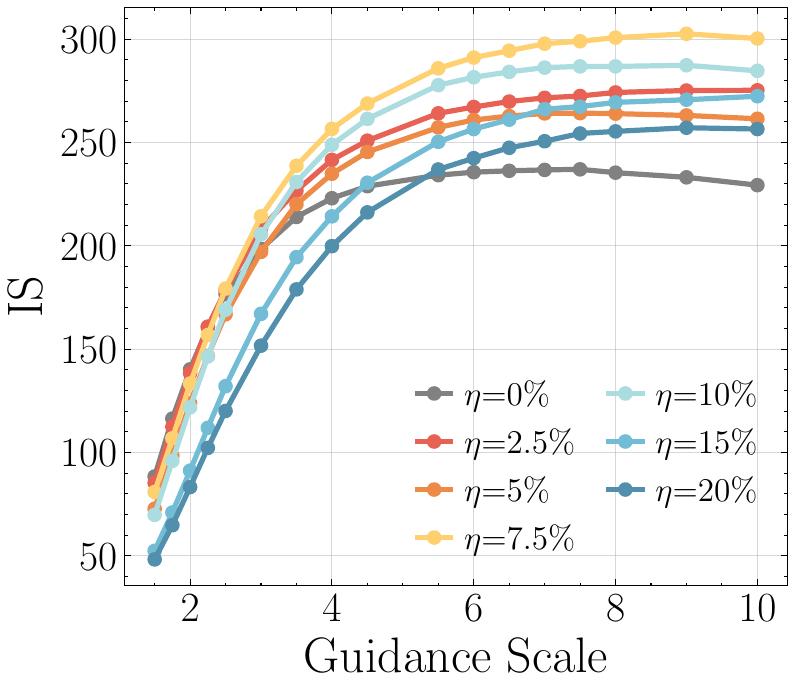}}
    \hfill
    \subfigure[Precision]{\label{fig:full_ldm_in1k_pretrain_prec}\includegraphics[width=0.24\linewidth]{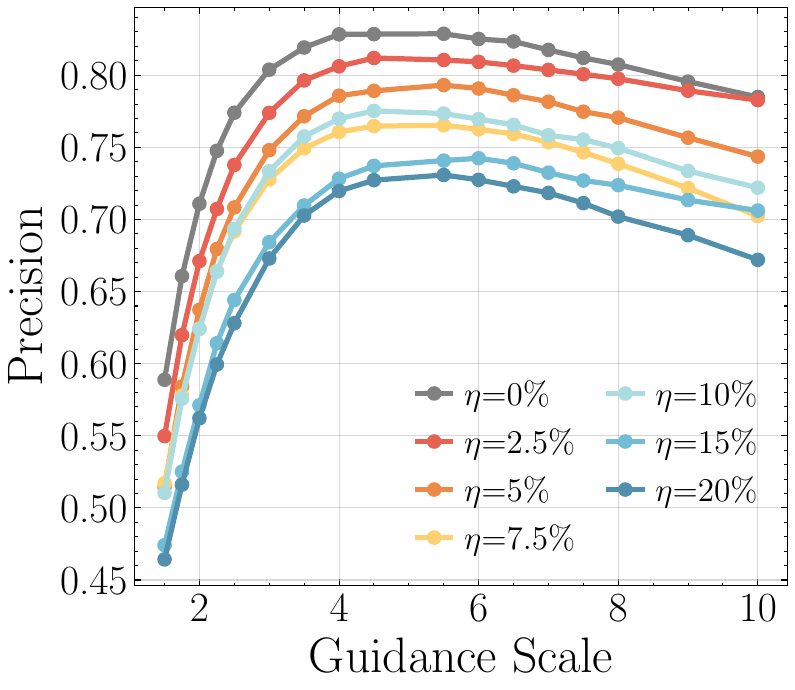}}
    \hfill
    \subfigure[Recall]{\label{fig:full_ldm_in1k_pretrain_recall}\includegraphics[width=0.24\linewidth]{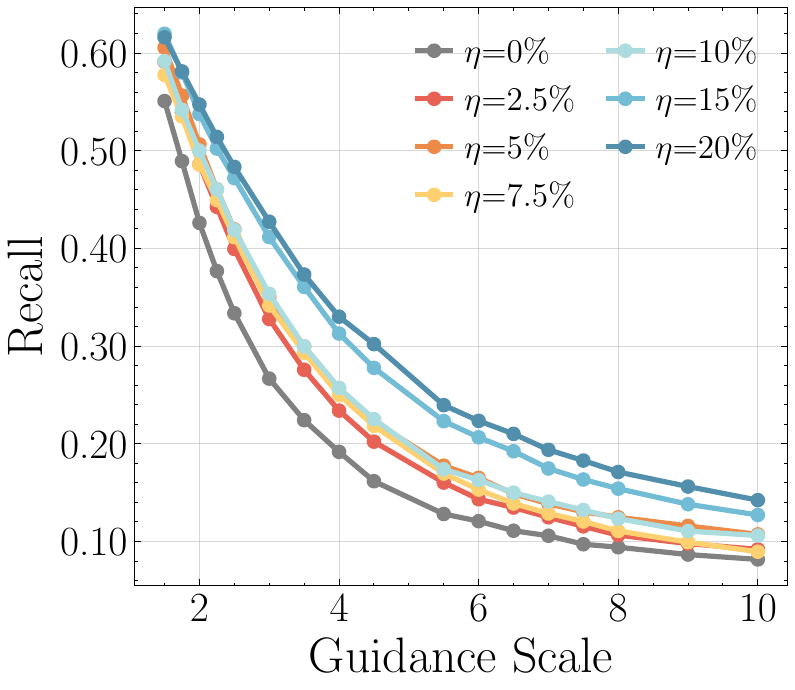}}
    \hfill
    \\
    \hfill
    \subfigure[sFID]{\label{fig:full_ldm_in1k_pretrain_sfid}\includegraphics[width=0.24\linewidth]{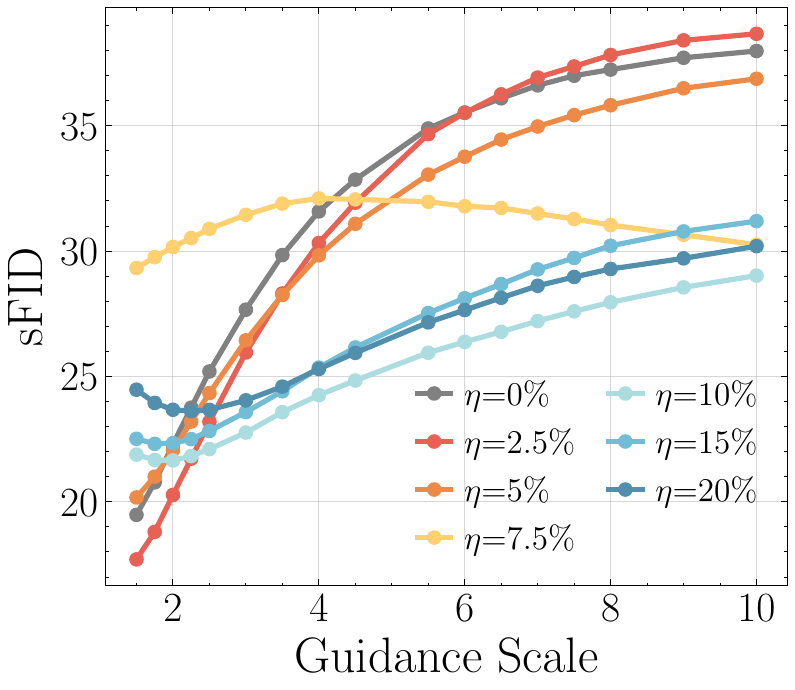}}
    \hfill
    \subfigure[TopPR F1]{\label{fig:full_ldm_in1k_pretrain_toppr}\includegraphics[width=0.24\linewidth]{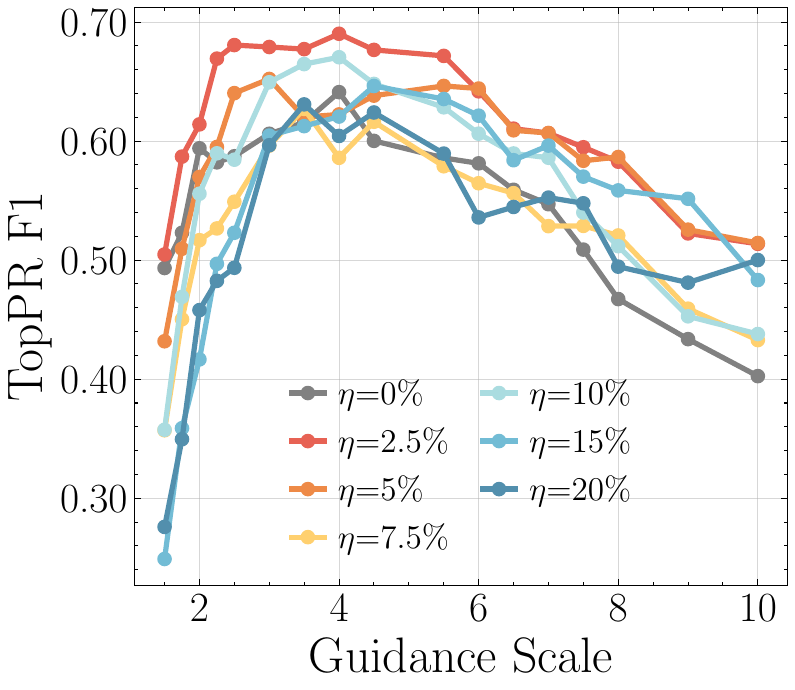}}
    \hfill
    \subfigure[Top-$1\%$ RMD]{\label{fig:full_ldm_in1k_pretrain_rmd}\includegraphics[width=0.24\linewidth]{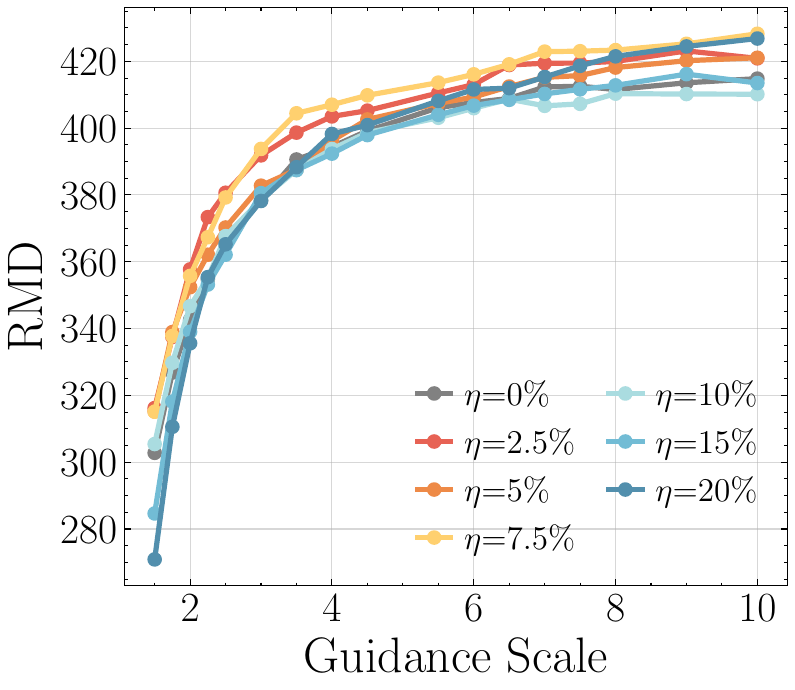}}
    \hfill
    \subfigure[Mem. Ratio]{\label{fig:full_ldm_in1k_pretrain_mem}\includegraphics[width=0.24\linewidth]{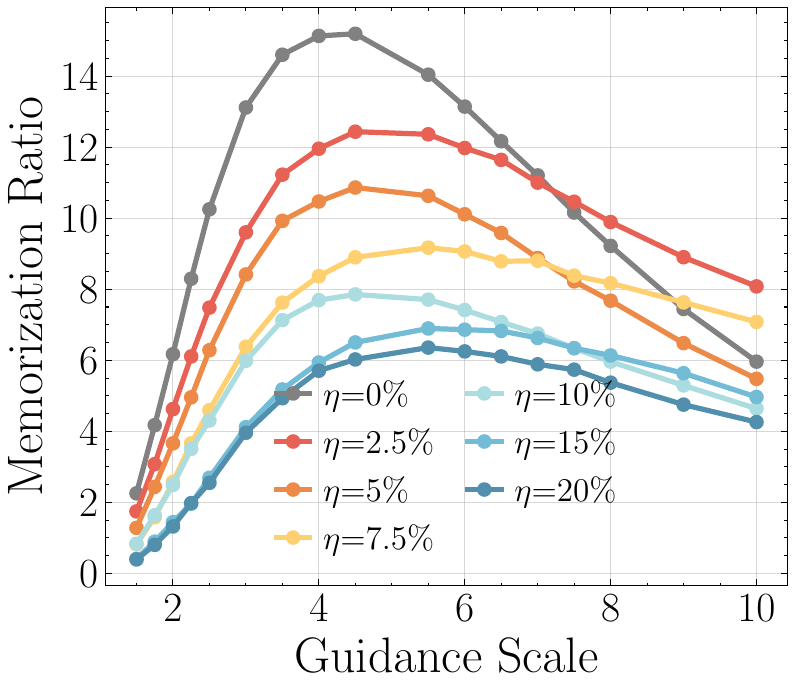}}
    \hfill
    \subfigure[Avg $L_2$ Dist.]{\label{fig:full_ldm_in1k_pretrain_l2}\includegraphics[width=0.24\linewidth]{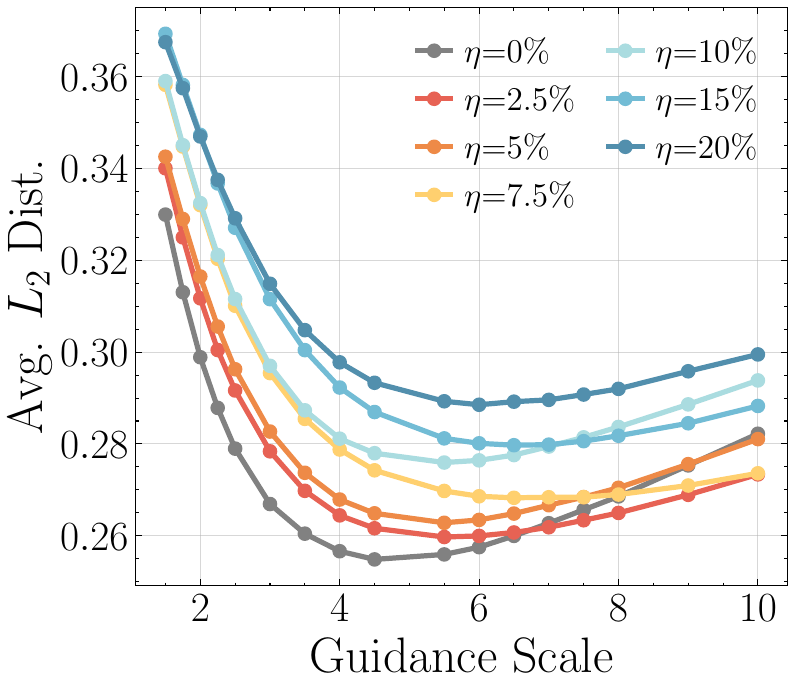}}
    \subfigure[CLIP Score]{\label{fig:full_ldm_in1k_pretrain_cs}\includegraphics[width=0.24\linewidth]{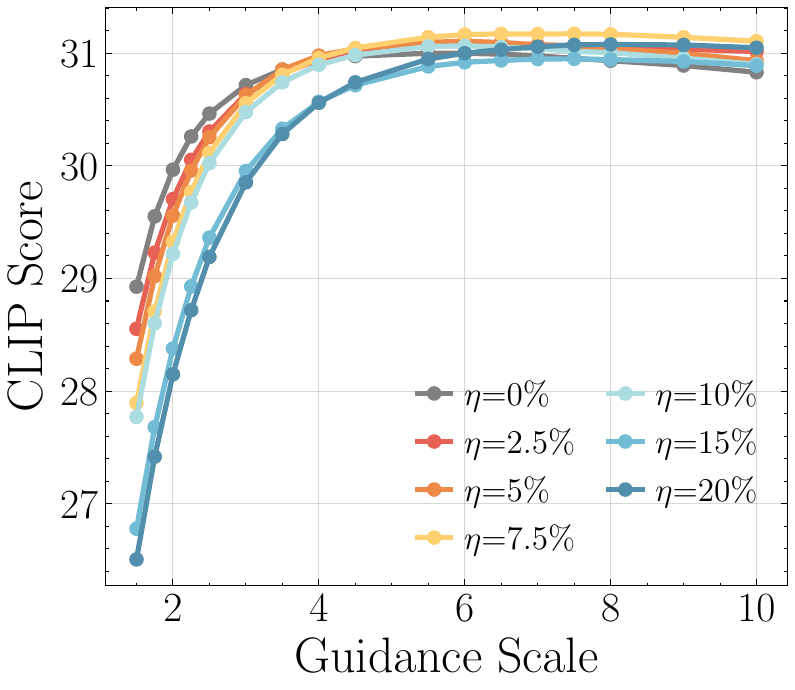}}
    \hfill
\caption{Qualitative evaluation results of 50K images generated by class-conditional LDMs pre-trained on ImageNet-1K with synthetic corruptions. The images are generated with various guidance scales using 1K class conditions and compared with 50K validation images of ImageNet-1K. 
} 
\label{fig:full_ldm_in1k_pretrain}
\end{figure}

\begin{figure}[h]
\centering
    \hfill
    \subfigure[FID]{\label{fig:full_ldm_cc3m_pretrain_fid}\includegraphics[width=0.24\linewidth]{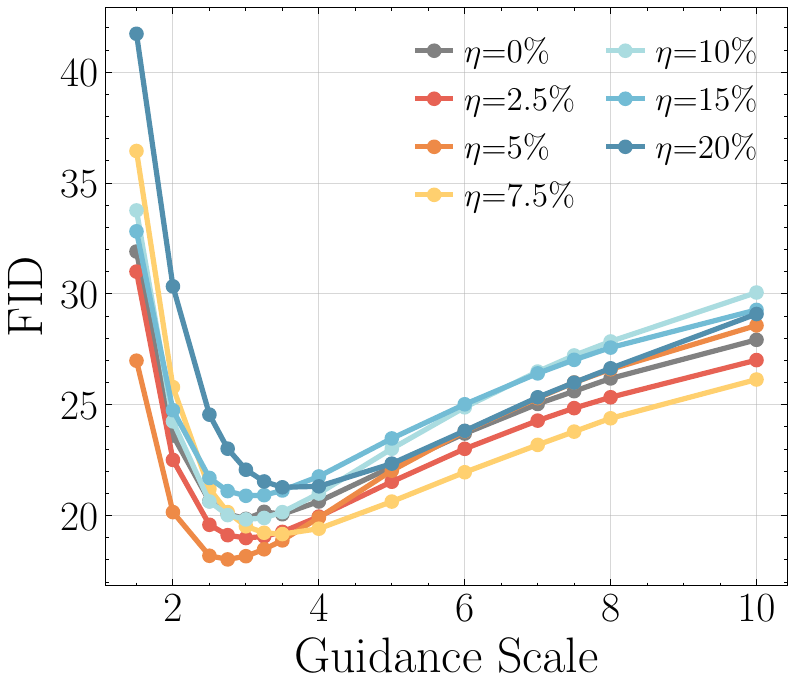}}
    \hfill
    \subfigure[IS]{\label{fig:full_ldm_cc3m_pretrain_is}\includegraphics[width=0.24\linewidth]{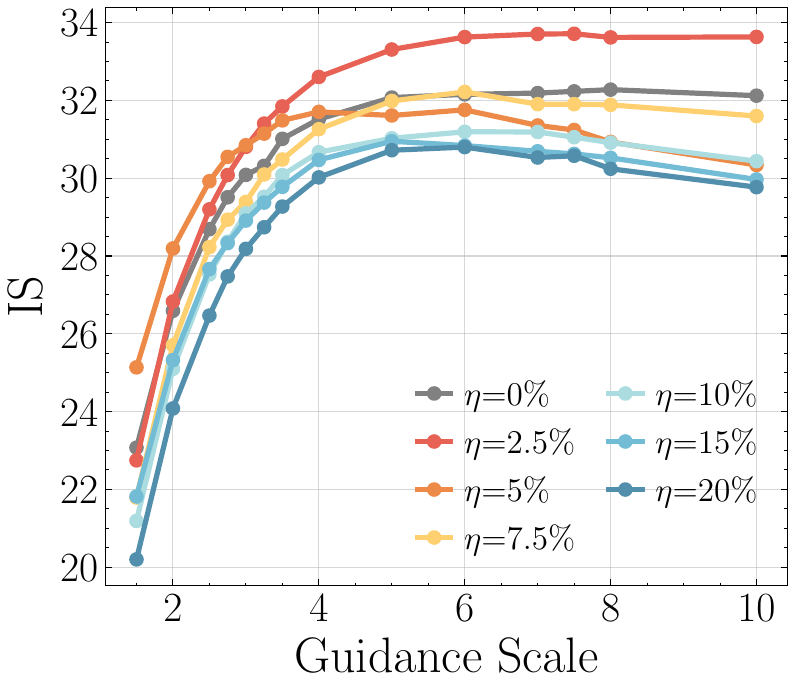}}
    \hfill
    \subfigure[Precision]{\label{fig:full_ldm_cc3m_pretrain_prec}\includegraphics[width=0.24\linewidth]{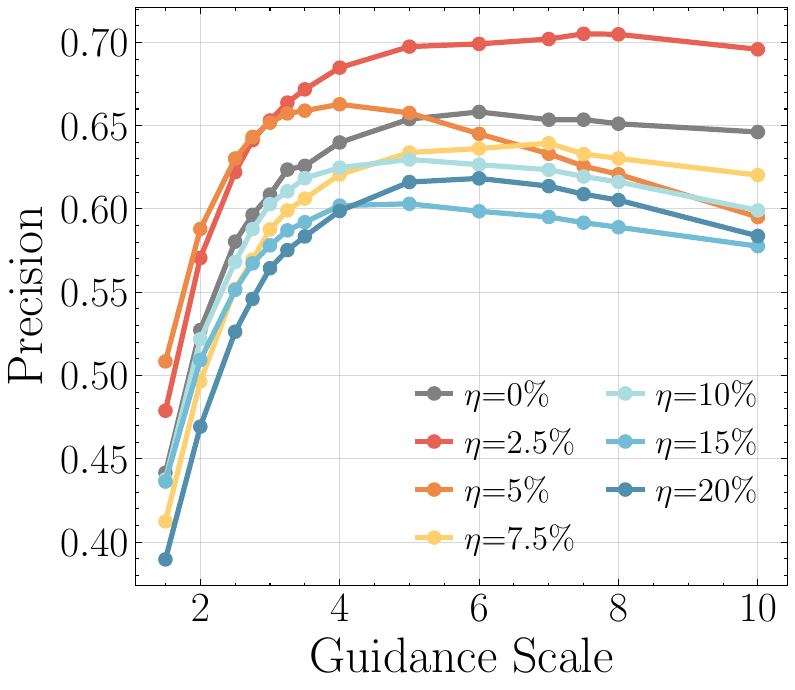}}
    \hfill
    \subfigure[Recall]{\label{fig:full_ldm_cc3m_pretrain_recall}\includegraphics[width=0.24\linewidth]{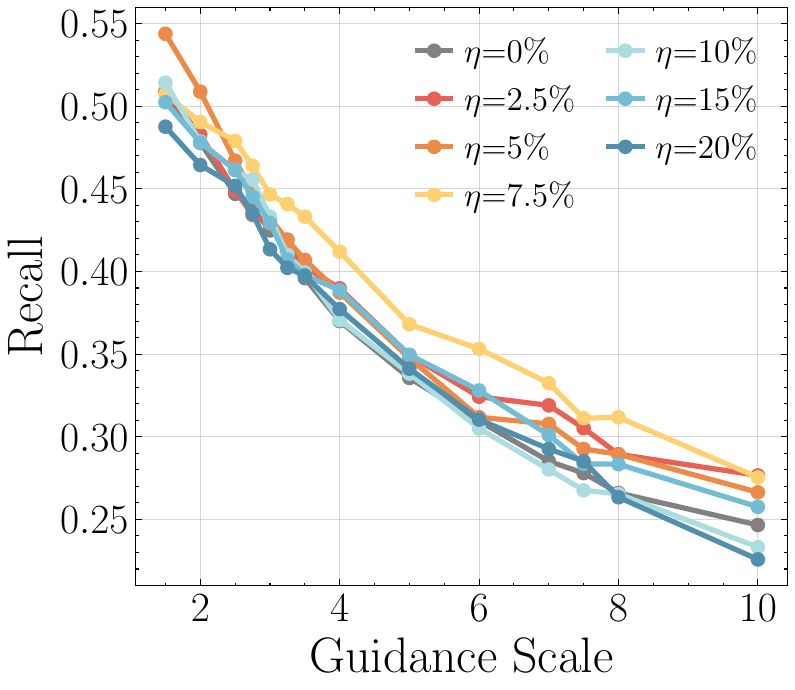}}
    \hfill
    \\
    \hfill
    \subfigure[sFID]{\label{fig:full_ldm_cc3m_pretrain_sfid}\includegraphics[width=0.24\linewidth]{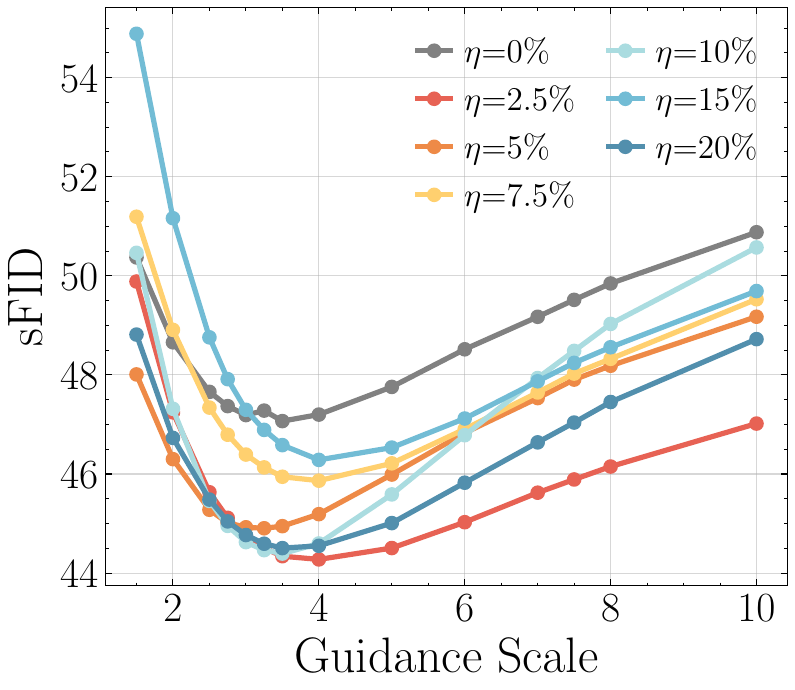}}
    \hfill
    \subfigure[TopPR F1]{\label{fig:full_ldm_cc3m_pretrain_toppr}\includegraphics[width=0.24\linewidth]{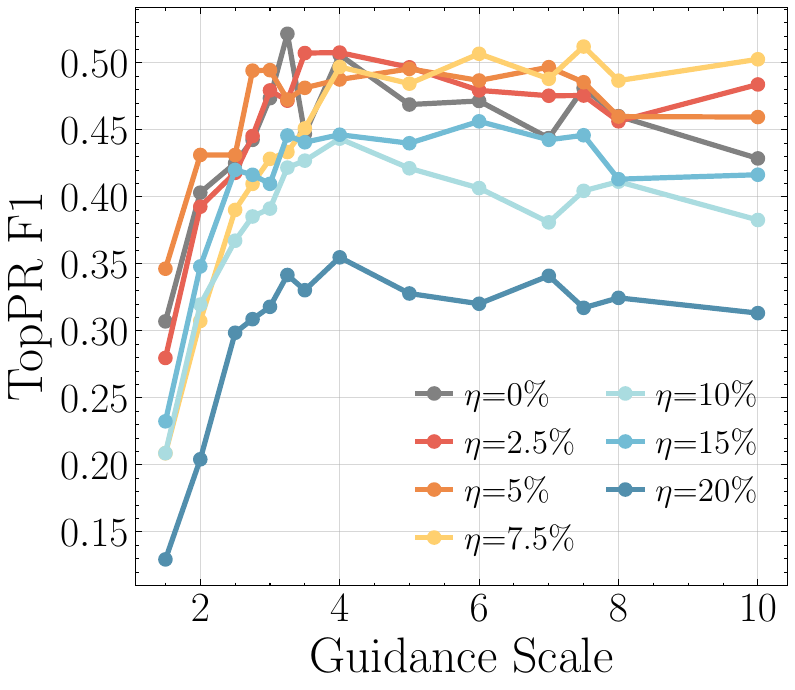}}
    \hfill
    \subfigure[Top-$1\%$ RMD]{\label{fig:full_ldm_cc3m_pretrain_rmd}\includegraphics[width=0.24\linewidth]{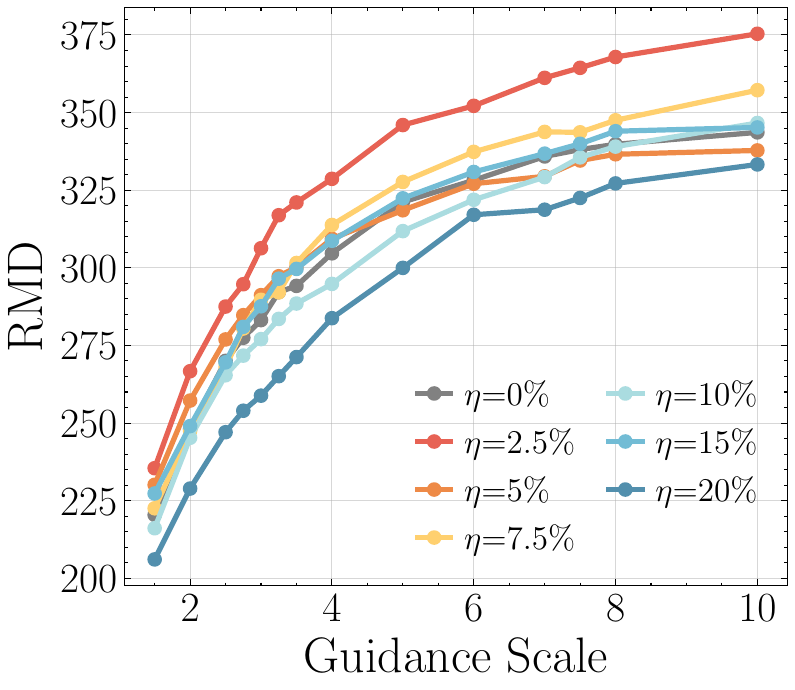}}
    \hfill
    \subfigure[Mem. Ratio]{\label{fig:full_ldm_cc3m_pretrain_mem}\includegraphics[width=0.24\linewidth]{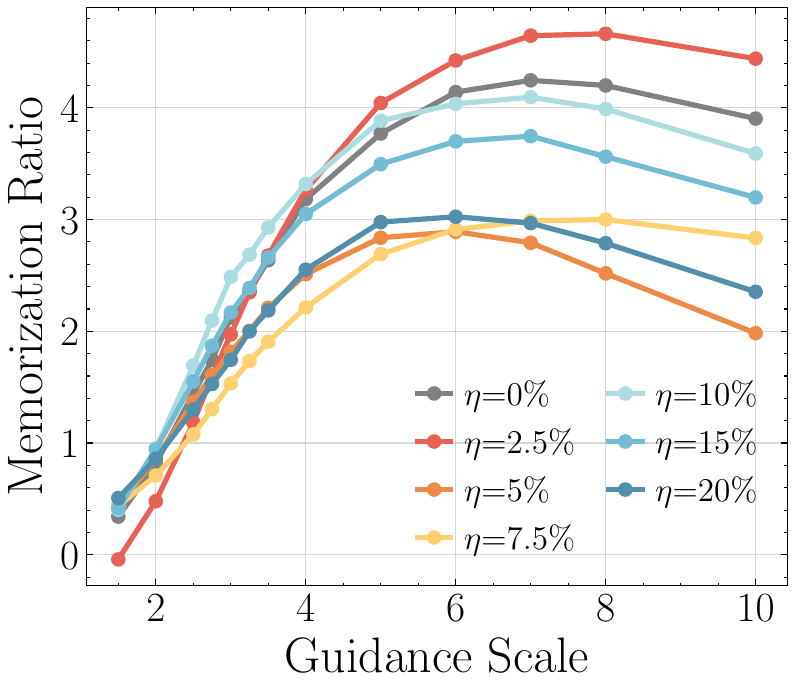}}
    \hfill 
    \subfigure[Avg $L_2$ Dist.]{\label{fig:full_ldm_cc3m_pretrain_l2}\includegraphics[width=0.24\linewidth]{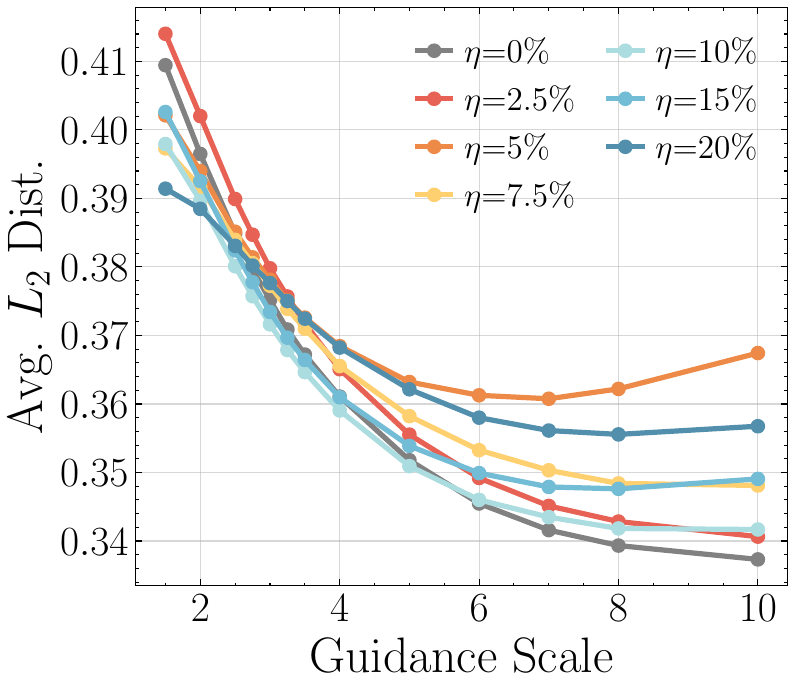}}
    \subfigure[CLIP Score]{\label{fig:full_ldm_cc3m_pretrain_cs}\includegraphics[width=0.24\linewidth]{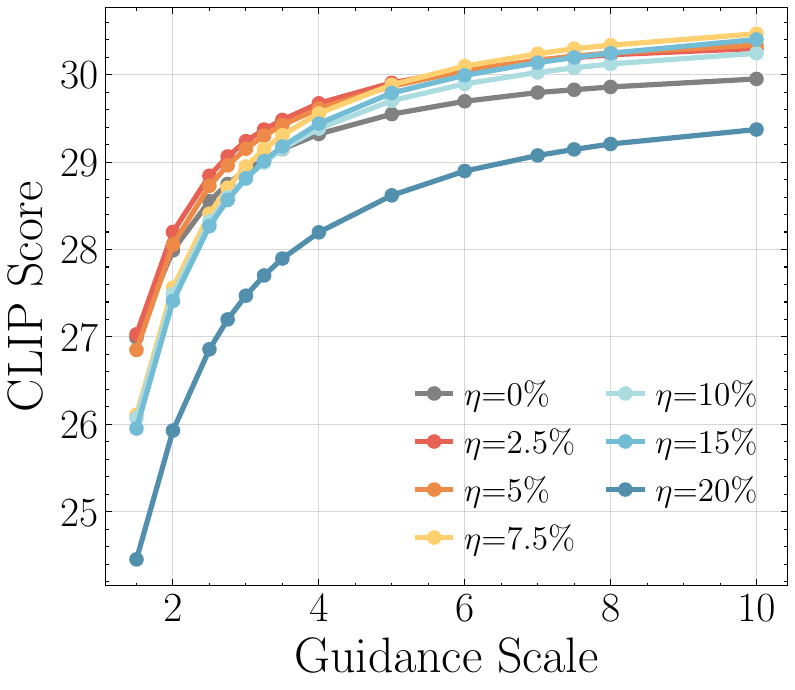}}
    \hfill
\caption{Qualitative evaluation results of 50K images generated by class-conditional LDMs pre-trained on CC3M with synthetic corruptions. The images are generated with various guidance scales using 5K text conditions from MS-COCO and compared with validation images of MS-COCO. 
} 
\label{fig:full_ldm_cc3m_pretrain}
\end{figure}

By default we use DPM scheduler for generating the images with 50 inference steps. 
But we also study the generation of DDIM scheduler with 250 inference steps, as adopted in \cite{rombach2022high}.
Due to the computation cost of running DDIM scheduler for 250 steps, we only study it with IN-1K LDM-4. The results are shown in \cref{fig:full_ldm_in1k_pretrain_ddim}.
One can observe the same trends from the metrics using DPM and DDIM, demonstrating our findings are scheduler agnostic.

\begin{figure}[h]
\centering
    \hfill
    \subfigure[FID]{\label{fig:full_ldm_in1k_pretrain_fid_ddim}\includegraphics[width=0.24\linewidth]{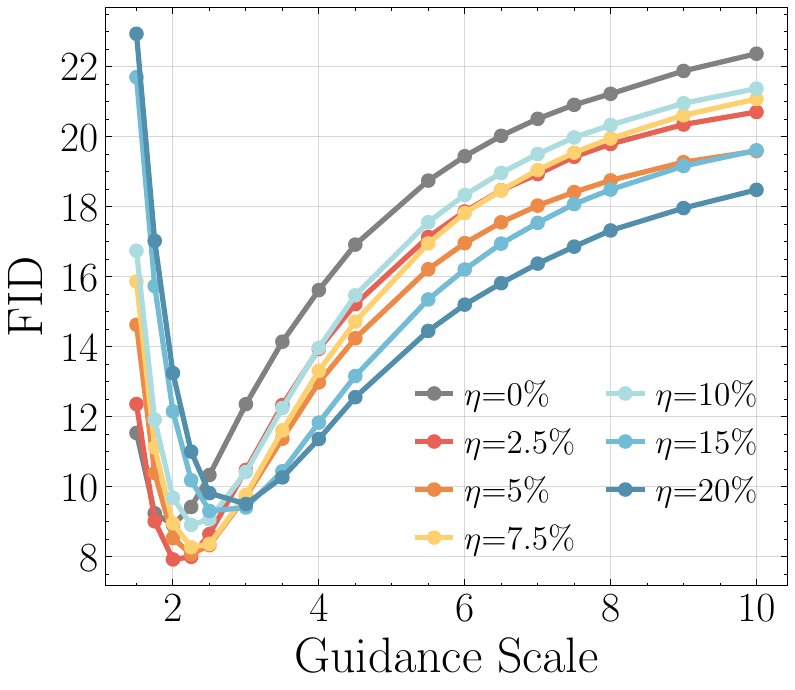}}
    \hfill
    \subfigure[IS]{\label{fig:full_ldm_in1k_pretrain_is_ddim}\includegraphics[width=0.24\linewidth]{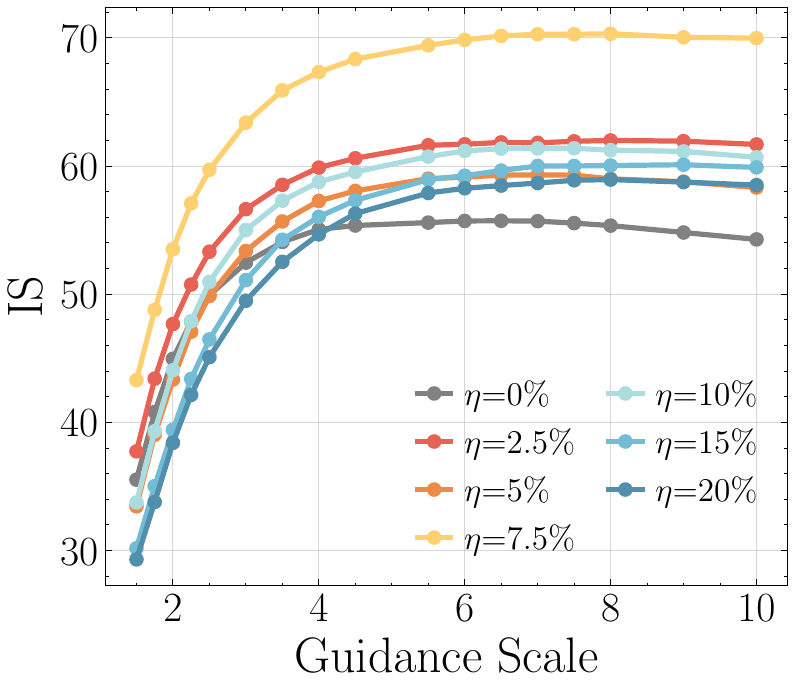}}
    \hfill
    \subfigure[Precision]{\label{fig:full_ldm_in1k_pretrain_prec_ddim}\includegraphics[width=0.24\linewidth]{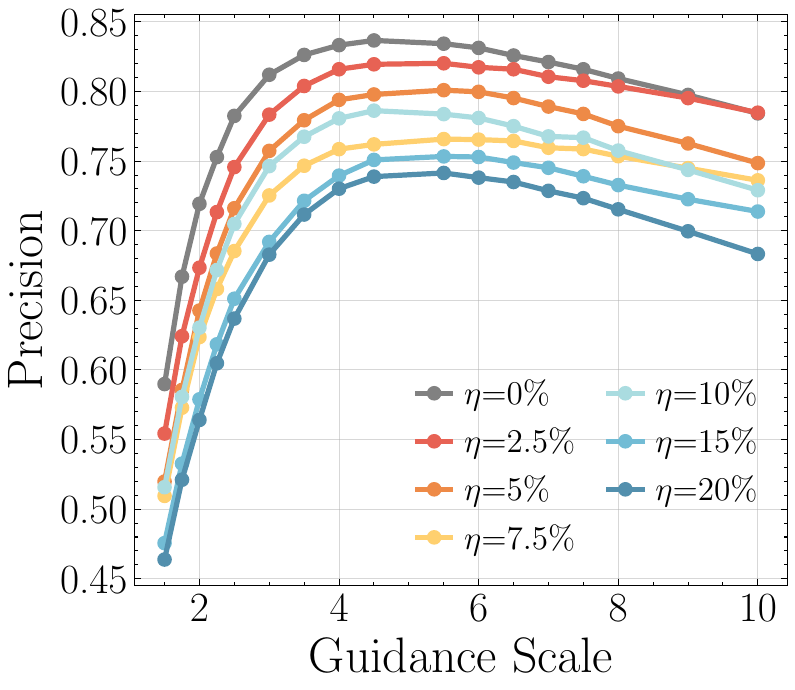}}
    \hfill
    \subfigure[Recall]{\label{fig:full_ldm_in1k_pretrain_recall_ddim}\includegraphics[width=0.24\linewidth]{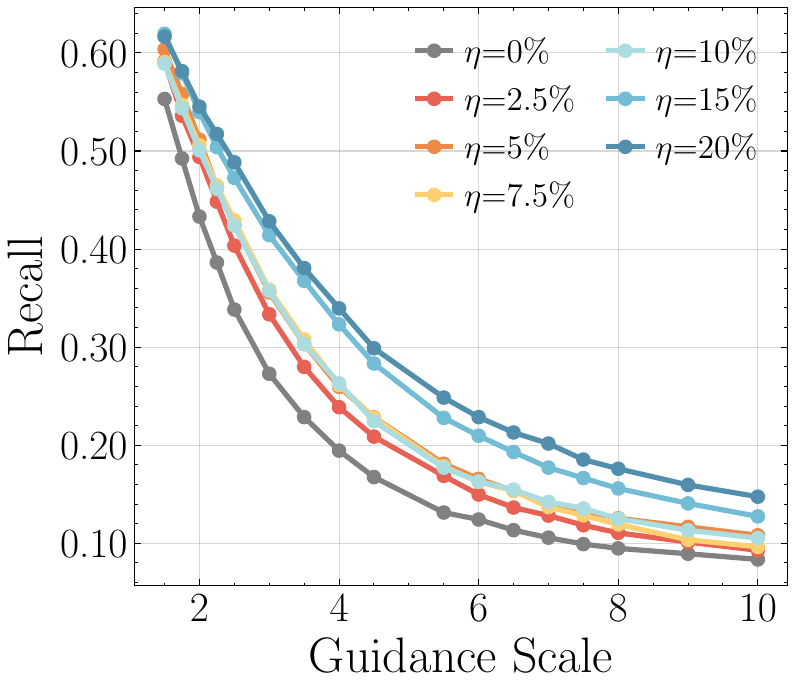}}
    \hfill
\caption{Qualitative evaluation results of 50K images generated by class-conditional LDMs pre-trained on ImageNet-1K with synthetic corruptions. The images are generated with various guidance scales using 1K class conditions and compared with 50K validation images of ImageNet-1K. 
We use DDIM scheduler with 250 inference steps for these results.
} 
\label{fig:full_ldm_in1k_pretrain_ddim}
\end{figure}

We then show the pre-training results of DiT-XL/2 and LCM-v1.5 in \cref{fig:appendix_dit_in1k_pretrain}, where we primarily compute the FID, IS, Precision, and Recall. 
For DiT-XL/2, we use $s \in \{1.5, 1.75, 2.0, 2.25, 2.5, 3.0, 3.5, 4.0, 4.5, 5.0, 5.5\}$. 
For LCM-v1.5, we use $s \in \{1.5, 2.0, 2.5, 3.0, 3.5, 4.0, 4.5, 5.0, 5.5, 6.0, 6.5, 7.0, 7.5, 8.0, 9.0, 10.0\}$.
Slight condition corruption also facilitates the performance by using the most suitable guidance scale.

\begin{figure}[h]
\centering
    \hfill
    \subfigure[FID]{\label{fig:full_dit_in1k_pretrain_fid}\includegraphics[width=0.24\linewidth]{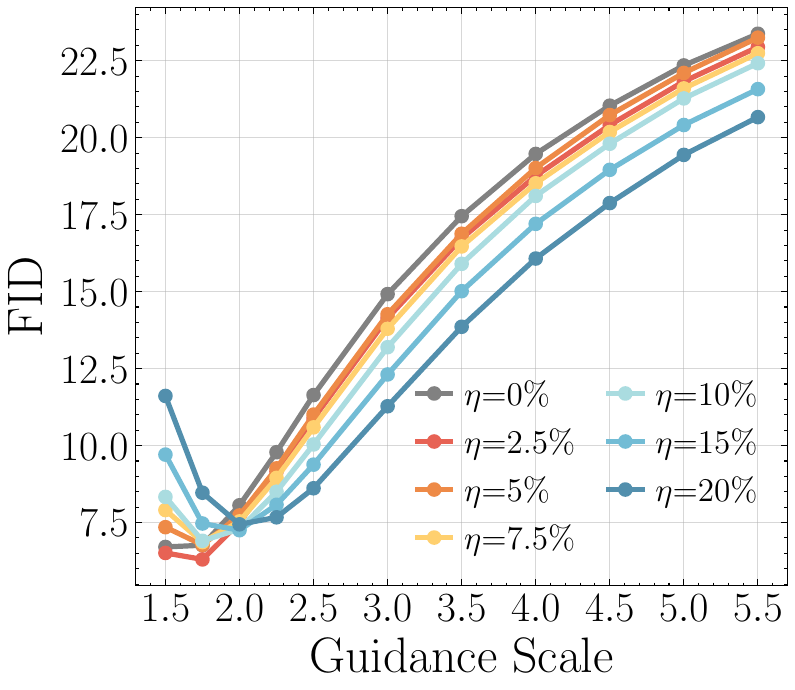}}
    \hfill
    \subfigure[IS]{\label{fig:full_dit_in1k_pretrain_is}\includegraphics[width=0.24\linewidth]{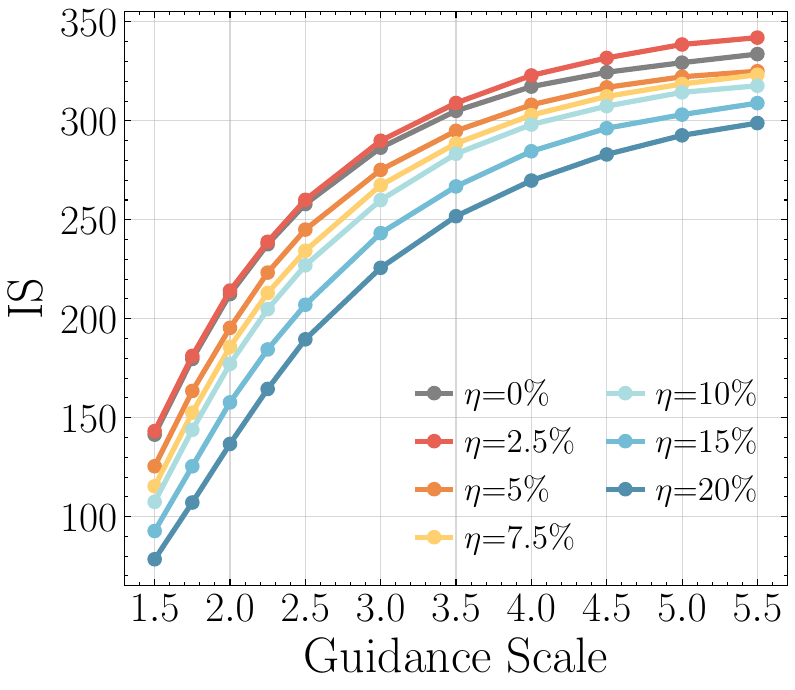}}
    \hfill
    \subfigure[Precision]{\label{fig:full_dit_in1k_pretrain_prec}\includegraphics[width=0.24\linewidth]{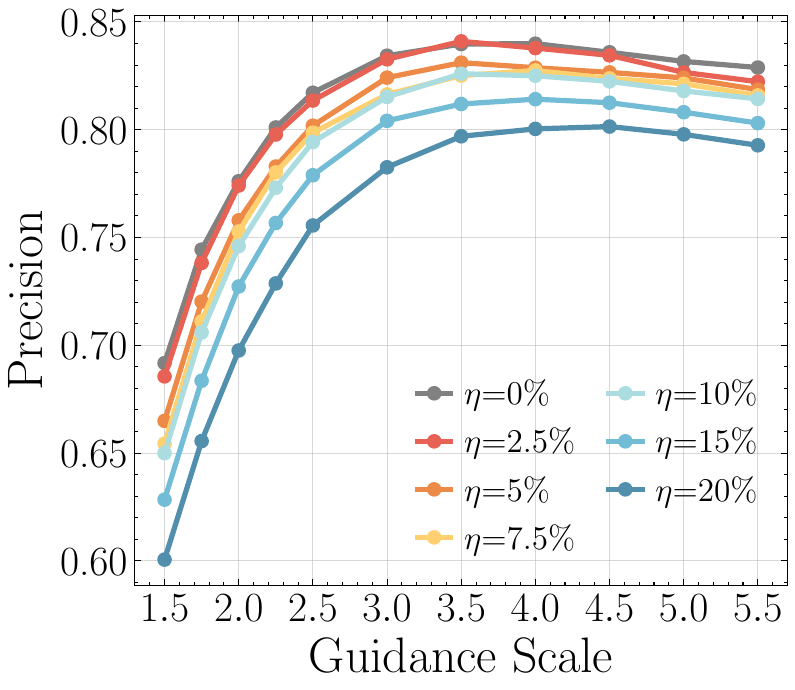}}
    \hfill
    \subfigure[Recall]{\label{fig:full_dit_in1k_pretrain_recall}\includegraphics[width=0.24\linewidth]{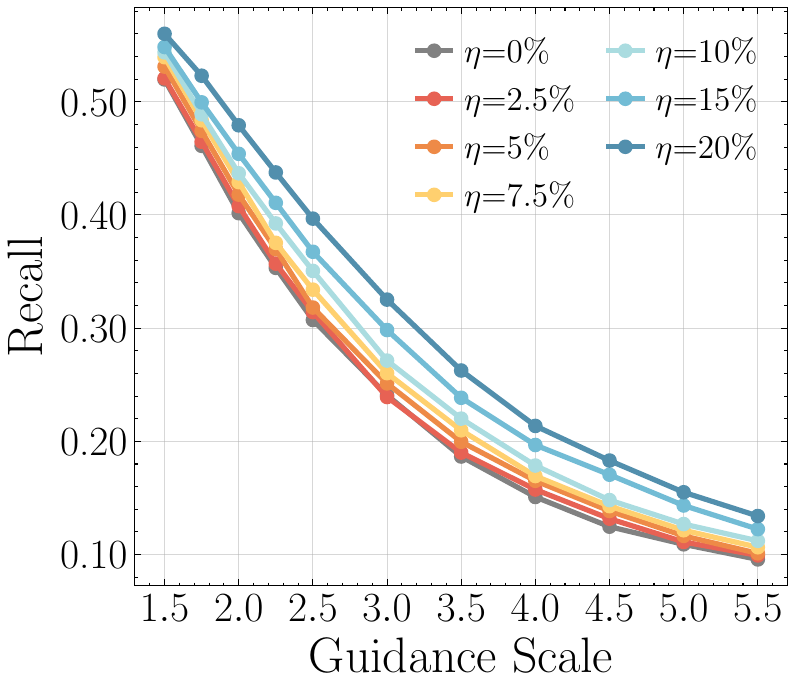}}
    \hfill
\caption{Qualitative evaluation results of 50K images generated by class-conditional DiT-XL/2 pre-trained on IN-1K with synthetic corruptions. The images are generated with various guidance scales using 1K class conditions from IN-1K and compared with validation images of IN-1K. 
} 
\label{fig:full_dit_in1k_pretrain}
\end{figure}

\begin{figure}[h]
\centering
    \hfill
    \subfigure[FID]{\label{fig:full_lcm_cc3m_pretrain_fid}\includegraphics[width=0.24\linewidth]{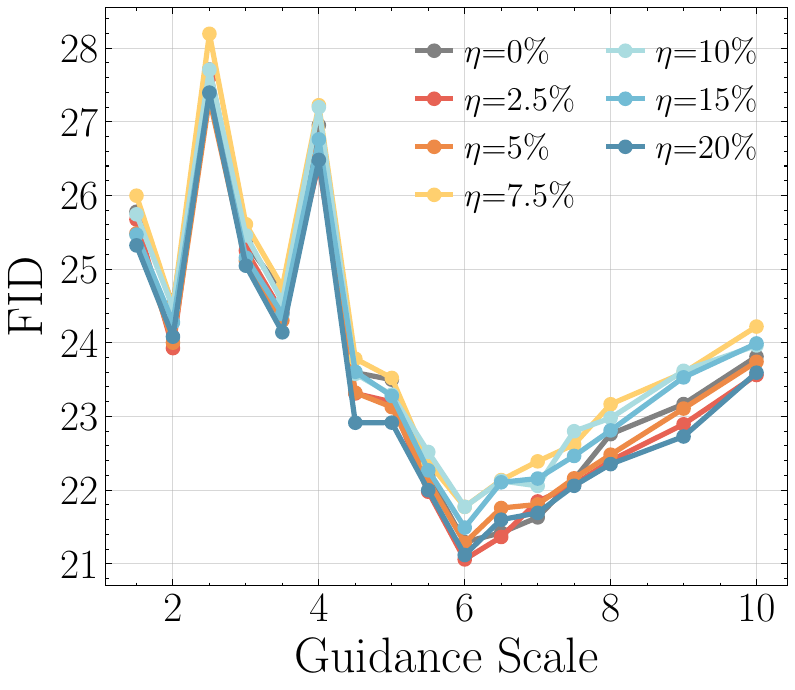}}
    \hfill
    \subfigure[IS]{\label{fig:full_lcm_cc3m_pretrain_is}\includegraphics[width=0.24\linewidth]{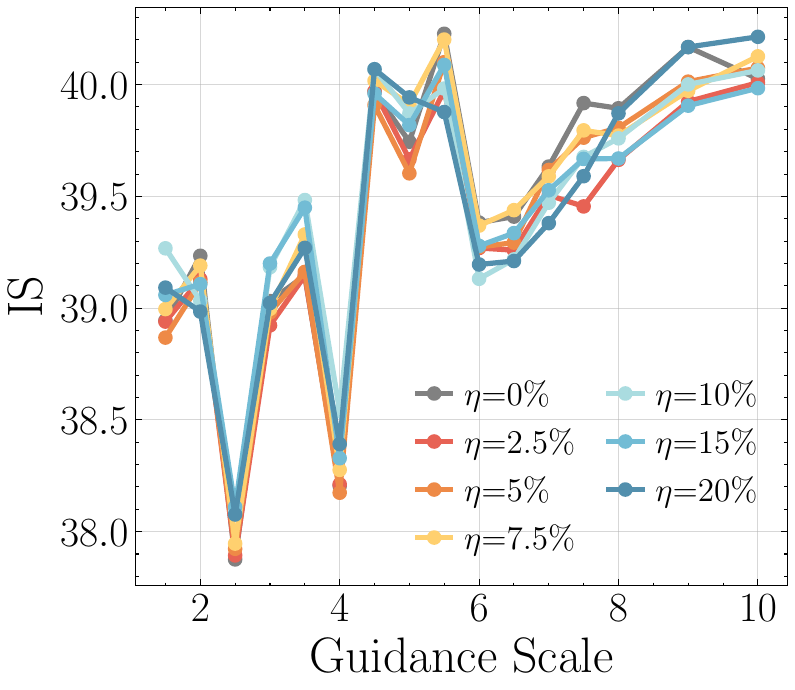}}
    \hfill
    \subfigure[Precision]{\label{fig:full_lcm_cc3m_pretrain_prec}\includegraphics[width=0.24\linewidth]{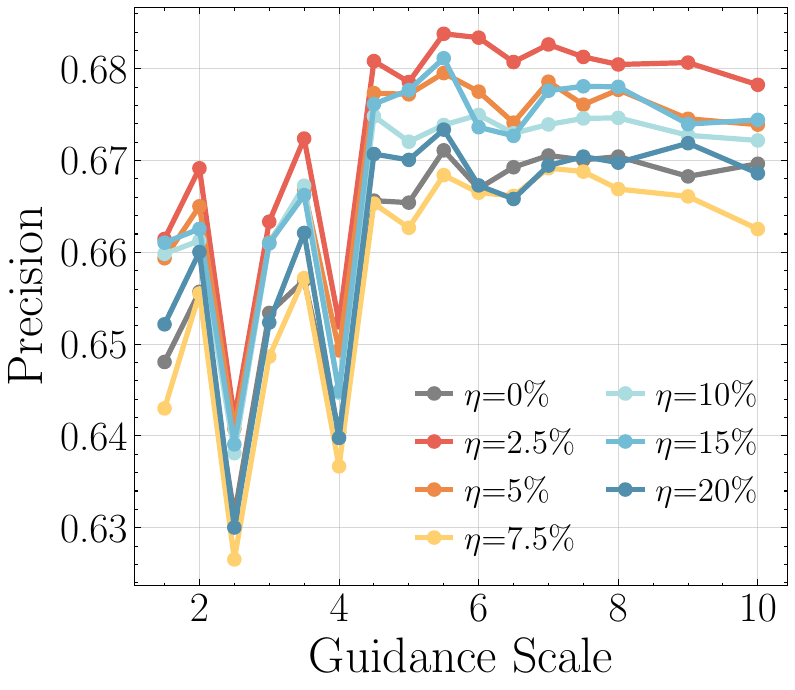}}
    \hfill
    \subfigure[Recall]{\label{fig:full_lcm_cc3m_pretrain_recall}\includegraphics[width=0.24\linewidth]{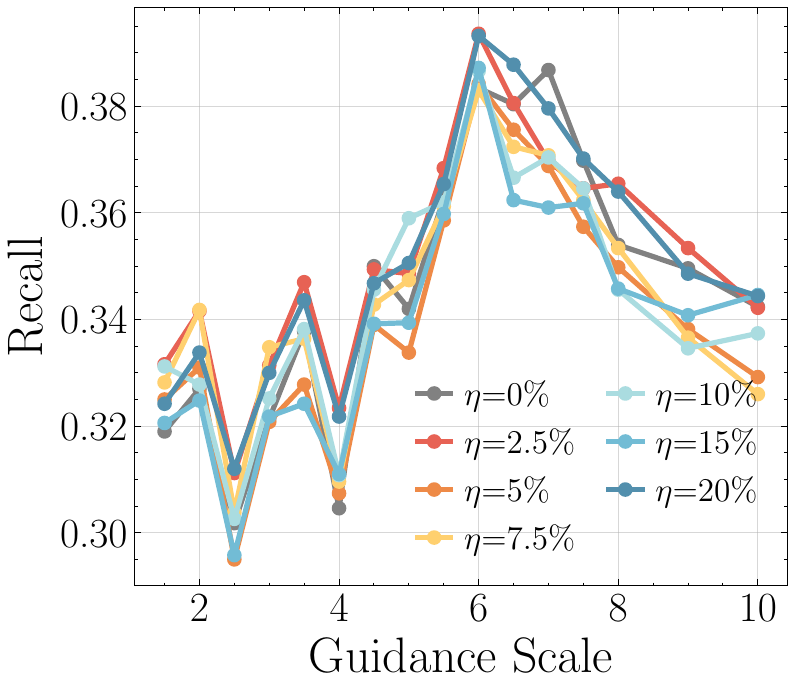}}
    \hfill
\caption{Qualitative evaluation results of 50K images generated by text-conditional LCM-v1.5 pre-trained on CC3M with synthetic corruptions. The images are generated with various guidance scales using 5K text conditions from MS-COCO and compared with validation images of MS-COCO. 
} 
\label{fig:full_lcm_in1k_pretrain}
\end{figure}

We additionally include the FID and IS trend along training for LDM IN-1K model with no corruption and $2.5$\% corruption, using a guidance scale of 2.5, as shown in \cref{tab:fid-is-train}. 
Slight corruption begins to be effective at the very early stage of training.

\begin{table}[h]
\centering
\caption{FID and IS along training of LDM IN-1K with guidance scale 2.5.}
\label{tab:fid-is-train}
\resizebox{0.5\textwidth}{!}{%
\begin{tabular}{@{}cccccccc@{}}
\toprule
$\eta$ & 10K   & 25K   & 50K   & 75K   & 100K   & 125K   & 150K   \\ \midrule
\multicolumn{8}{c}{FID}                                           \\ \midrule
0      & 71.48 & 52.02 & 20.88 & 14.49 & 12.66  & 10.44  & 10.12  \\
2.5    & 77.94 & 51.59 & 21.16 & 13.08 & 12.24  & 9.25   & 8.98   \\ \midrule
\multicolumn{8}{c}{IS}                                            \\ \midrule
0      & 4.86  & 23.49 & 71.26 & 93.85 & 103.27 & 164.41 & 170.2  \\
2.5    & 13.66 & 24.40 & 64.27 & 97.11 & 109.39 & 167.21 & 175.83 \\ \bottomrule
\end{tabular}%
}
\end{table}

Finally, we present the results of LDMs pre-trained on CC3M with LLM corruption and IN-1K with asymmetric corruption, where similar observations still hold.

\begin{figure}[h]
\centering
    \hfill
    \subfigure[FID]{\label{fig:full_ldm_cc3m_llm_pretrain_fid}\includegraphics[width=0.24\linewidth]{appendix_figures/ldm_cc3m_pretrain_dpm_50_fid.pdf}}
    \hfill
    \subfigure[IS]{\label{fig:full_ldm_cc3m_llm_pretrain_is}\includegraphics[width=0.24\linewidth]{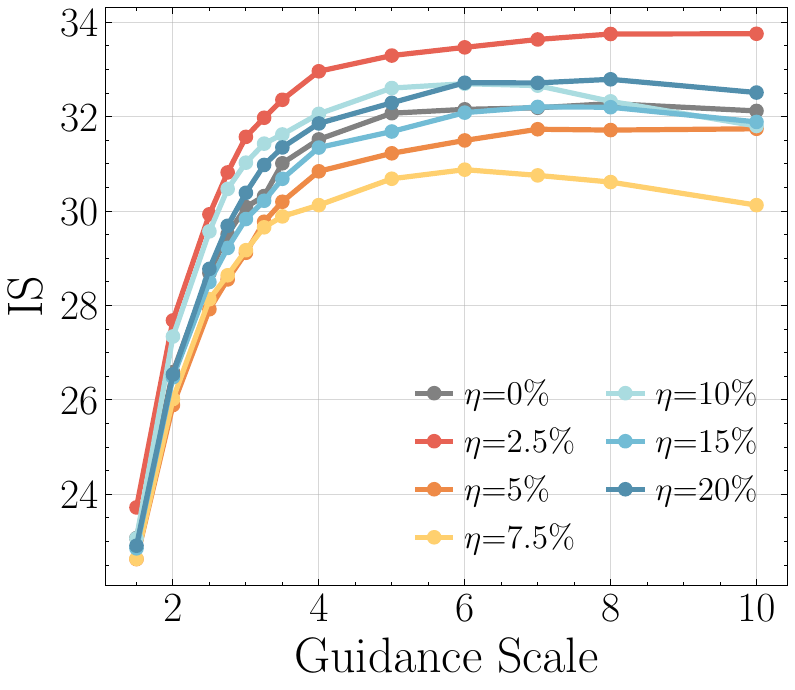}}
    \hfill
    \subfigure[Precision]{\label{fig:full_ldm_cc3m_llm_pretrain_prec}\includegraphics[width=0.24\linewidth]{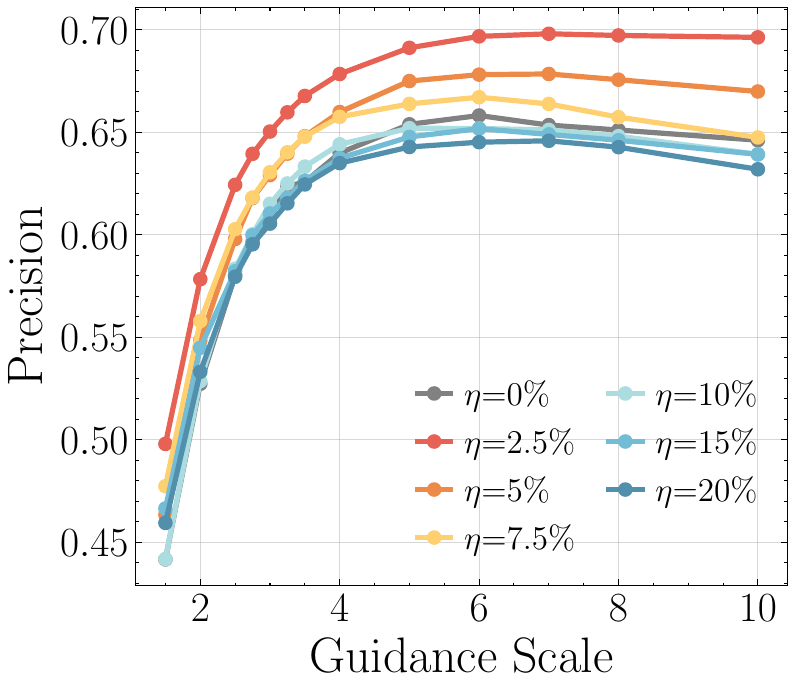}}
    \hfill
    \subfigure[Recall]{\label{fig:full_ldm_cc3m_llm_pretrain_recall}\includegraphics[width=0.24\linewidth]{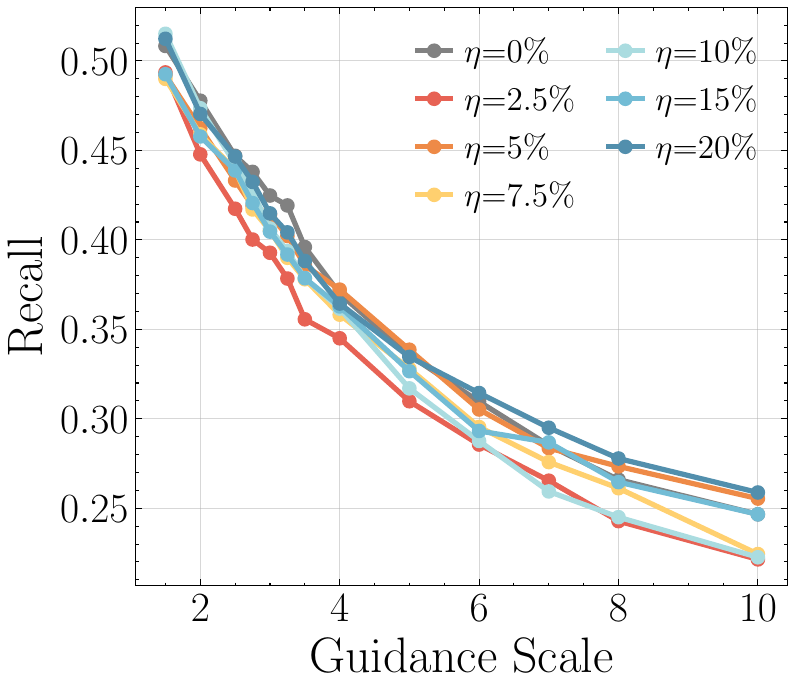}}
    \hfill
\caption{Qualitative evaluation results of 50K images generated by text-conditional LDMs pre-trained on CC3M with LLM re-writing corruptions. The images are generated with various guidance scales using 5K text conditions from MS-COCO and compared with validation images of MS-COCO. 
} 
\label{fig:full_ldm_cc3m_llm_pretrain}
\end{figure}

\begin{figure}[h]
\centering
    \hfill
    \subfigure[FID]{\label{fig:full_ldm_in1k_asym_pretrain_fid}\includegraphics[width=0.24\linewidth]{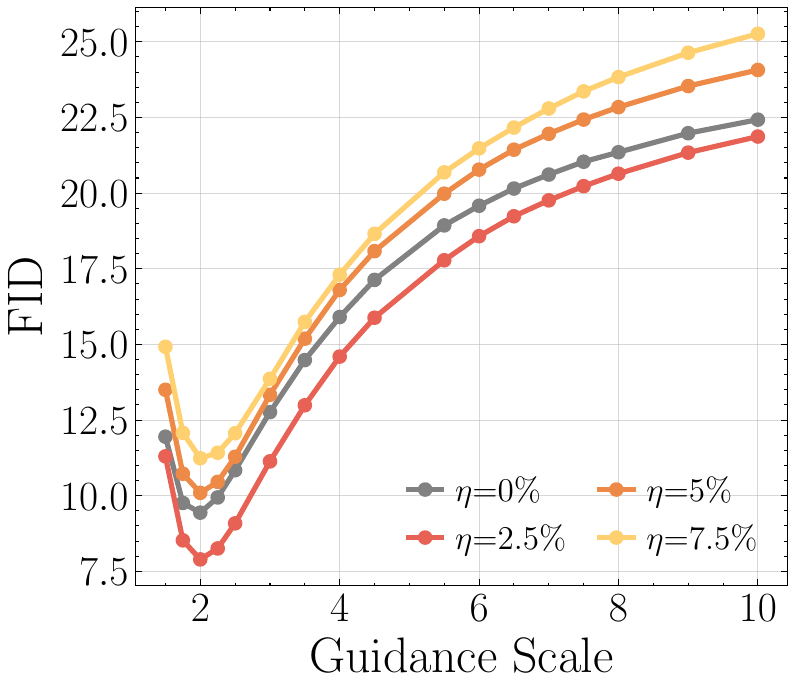}}
    \hfill
    \subfigure[IS]{\label{fig:full_ldm_in1k_asym_pretrain_is}\includegraphics[width=0.24\linewidth]{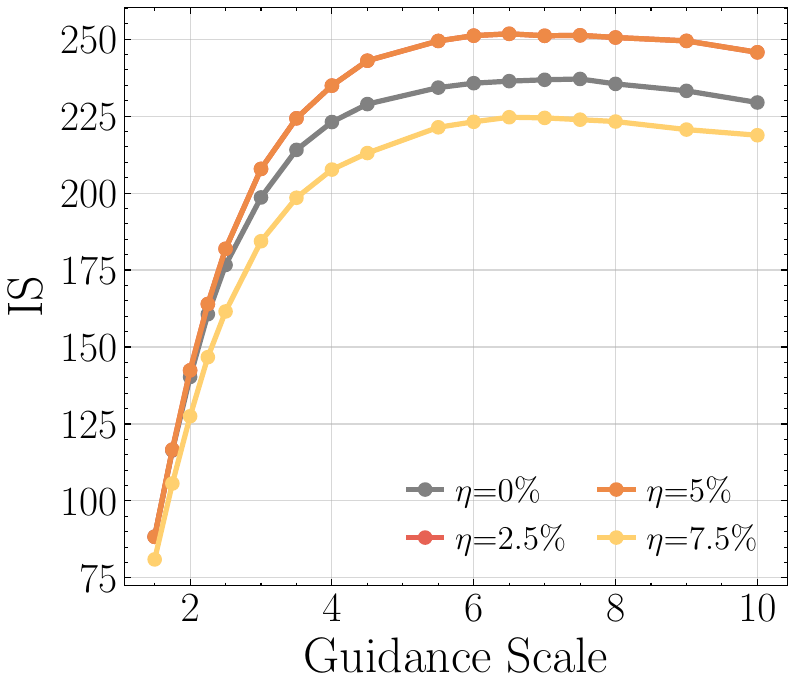}}
    \hfill
    \subfigure[Precision]{\label{fig:full_ldm_in1k_asym_pretrain_prec}\includegraphics[width=0.24\linewidth]{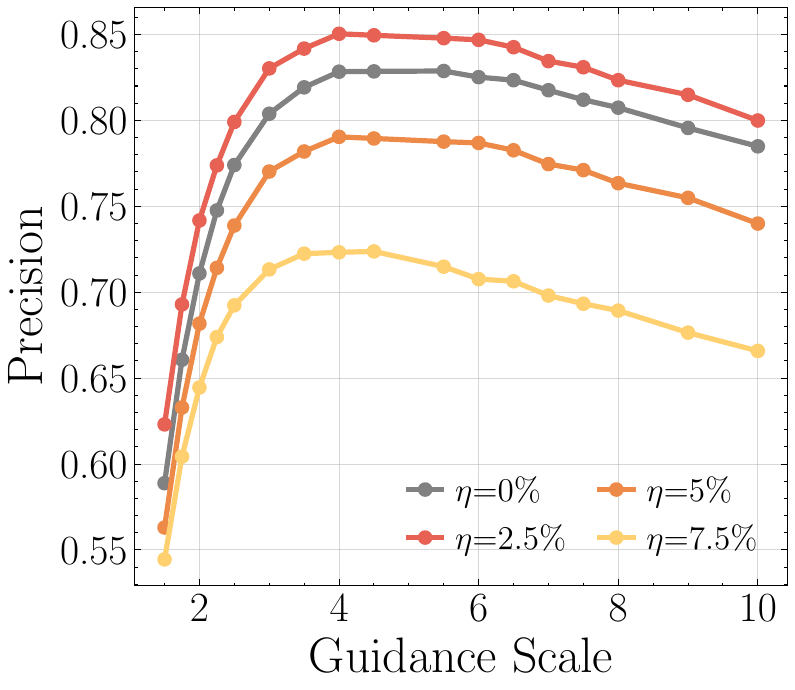}}
    \hfill
    \subfigure[Recall]{\label{fig:full_ldm_in1k_asym_pretrain_recall}\includegraphics[width=0.24\linewidth]{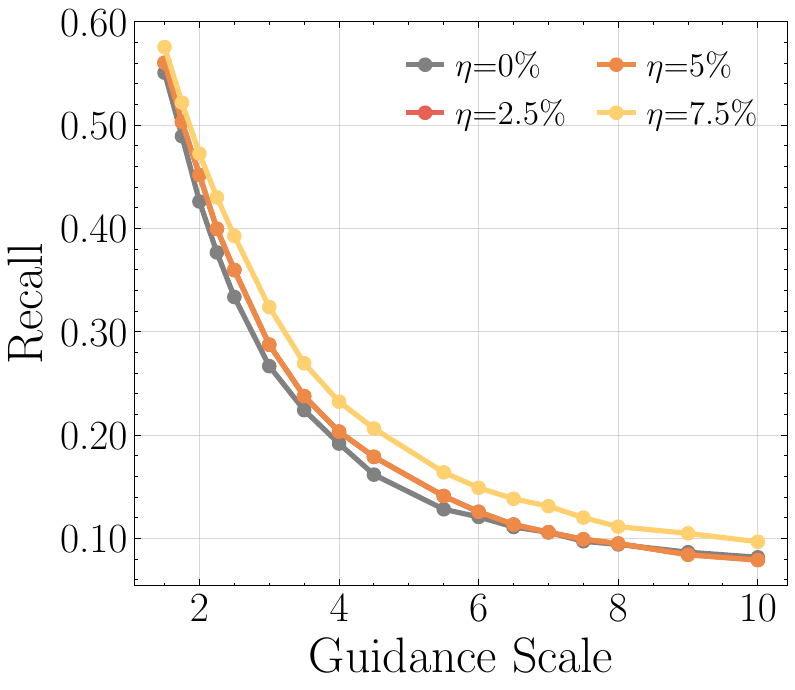}}
    \hfill
\caption{Qualitative evaluation results of 50K images generated by class-conditional LDMs pre-trained on IN-1K with asymmetric corruptions. The images are generated with various guidance scales using 1K class conditions from IN-1K and compared with validation images of IN-1K. 
} 
\label{fig:full_ldm_in1k_asym_pretrain}
\end{figure}

\subsection{Qualitative Results}

We present more visualization results of class-conditional LDM-4 in \cref{fig:appendix_ldm_in1k_pretrain}, class-conditional DiT-XL/2 in \cref{fig:appendix_dit_in1k_pretrain}, text-conditional LDM-4 in \cref{fig:appendix_ldm_cc3m_pretrain}, and text-conditional LCM-v1.5 in \cref{fig:appendix_lcm_cc3m_pretrain}.
One can observe that DMs pre-trained with slight condition corruption in general more visually appealing images.

\begin{figure}[h]
    \centering
    \includegraphics[width=0.98\linewidth]{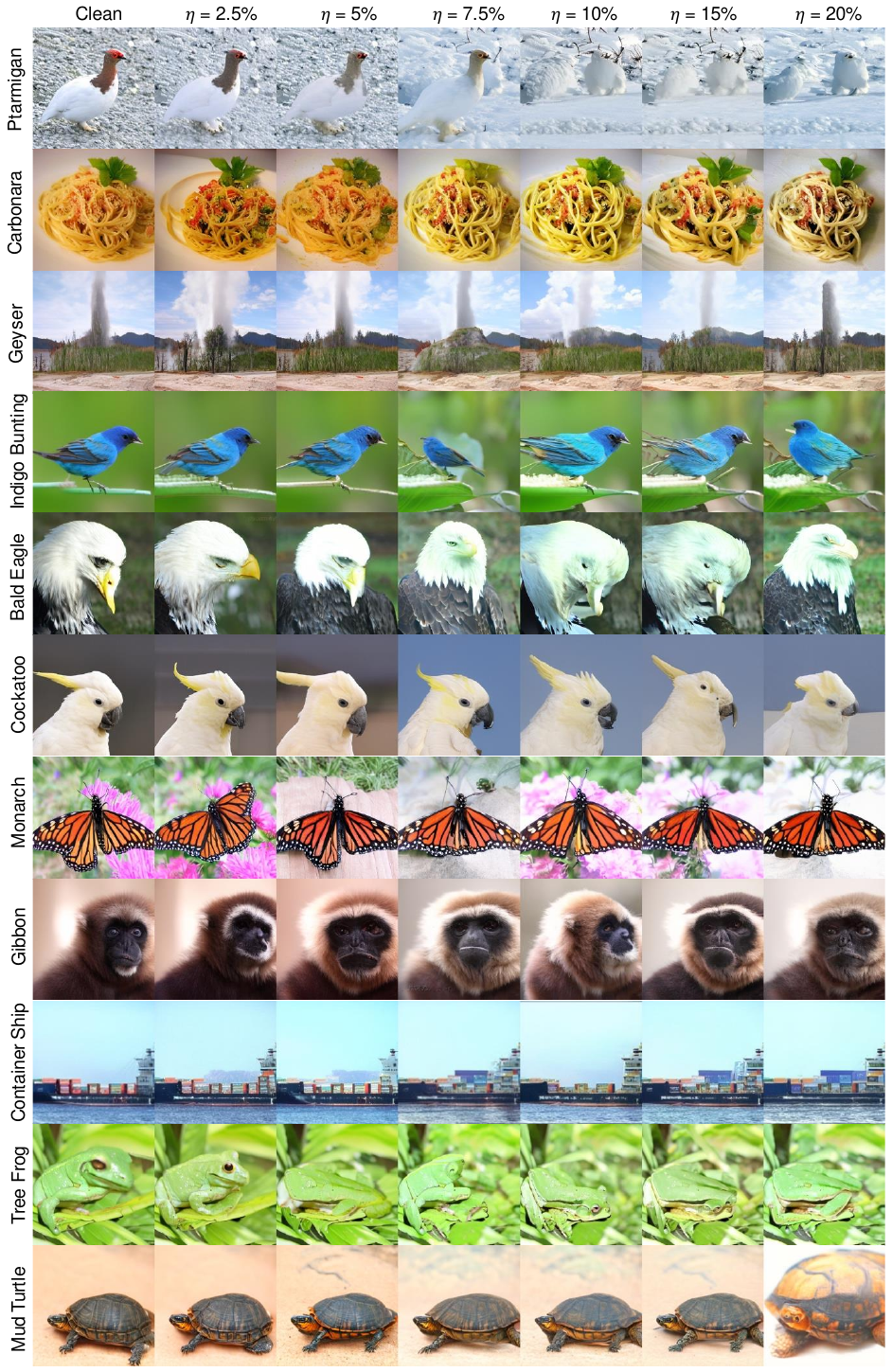}
    \caption{Visualization of LDMs IN-1K pre-training results.}
    \label{fig:appendix_ldm_in1k_pretrain}
\end{figure}

\begin{figure}[h]
    \centering
    \includegraphics[width=0.98\linewidth]{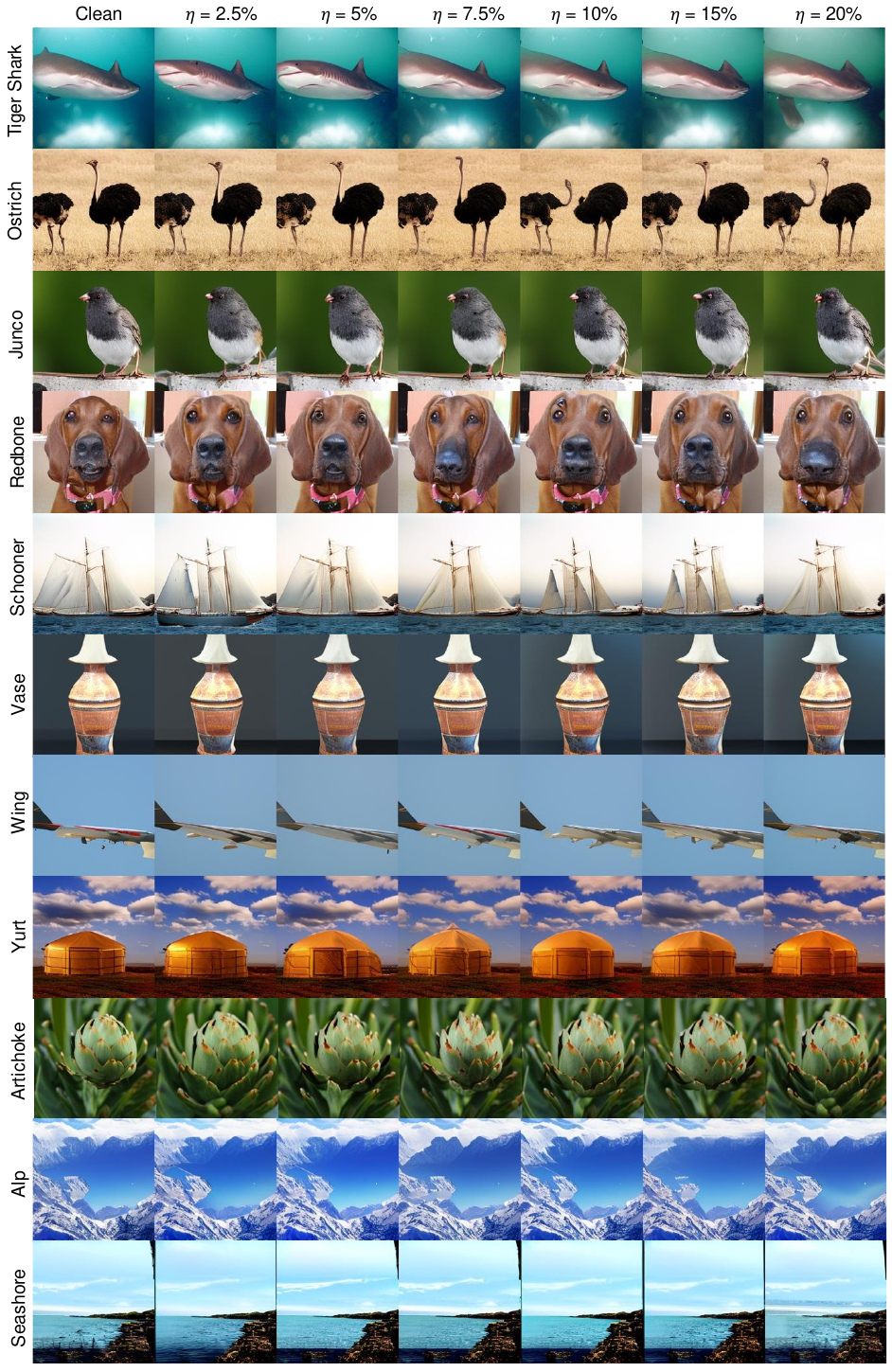}
    \caption{Visualization of DiT-XL/2 IN-1K pre-training results.}
    \label{fig:appendix_dit_in1k_pretrain}
\end{figure}

\begin{figure}[h]
    \centering
    \includegraphics[width=0.98\linewidth]{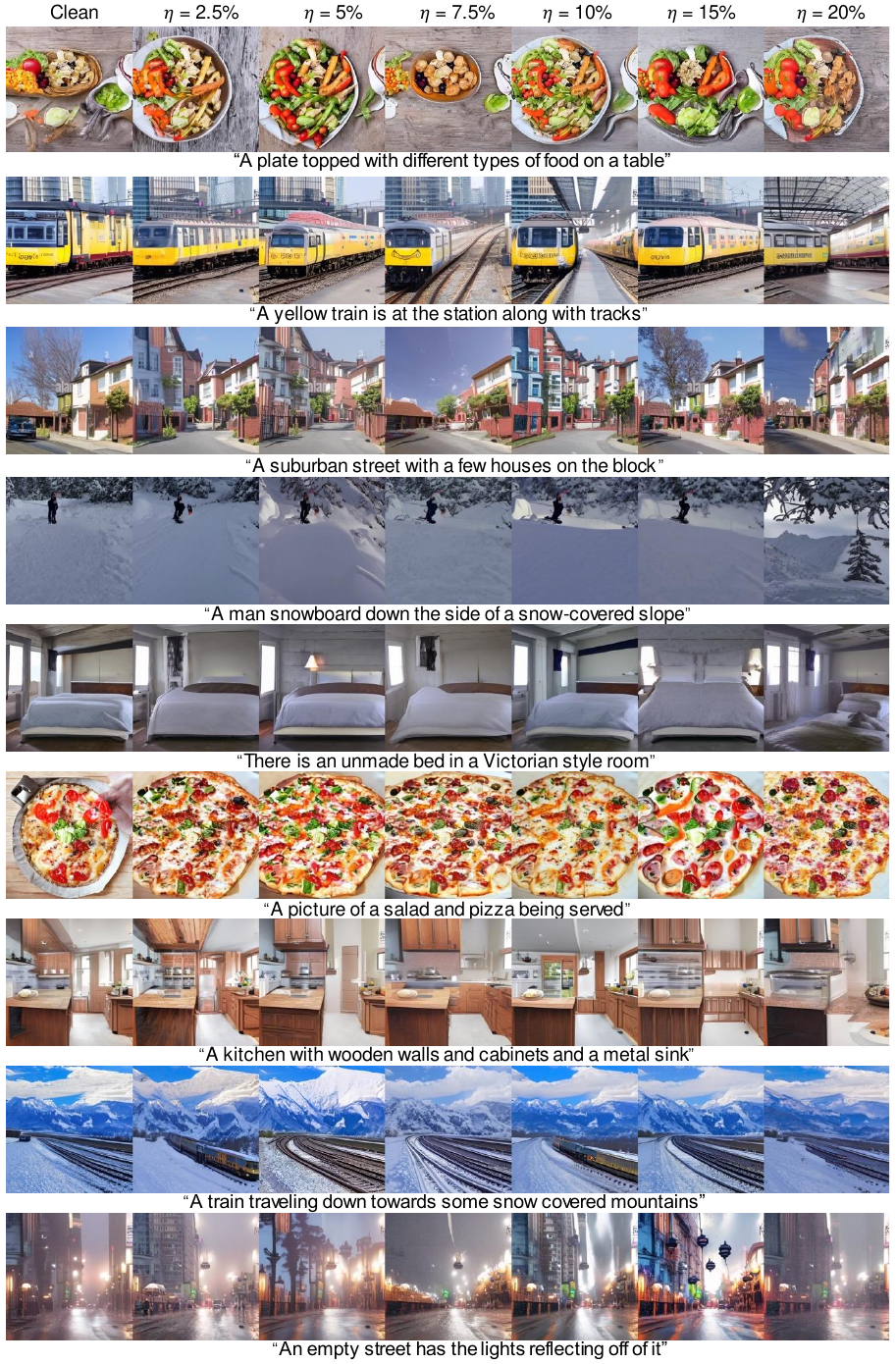}
    \caption{Visualization of LDMs CC3M pre-training results.}
    \label{fig:appendix_ldm_cc3m_pretrain}
\end{figure}

\begin{figure}[h]
    \centering
    \includegraphics[width=0.98\linewidth]{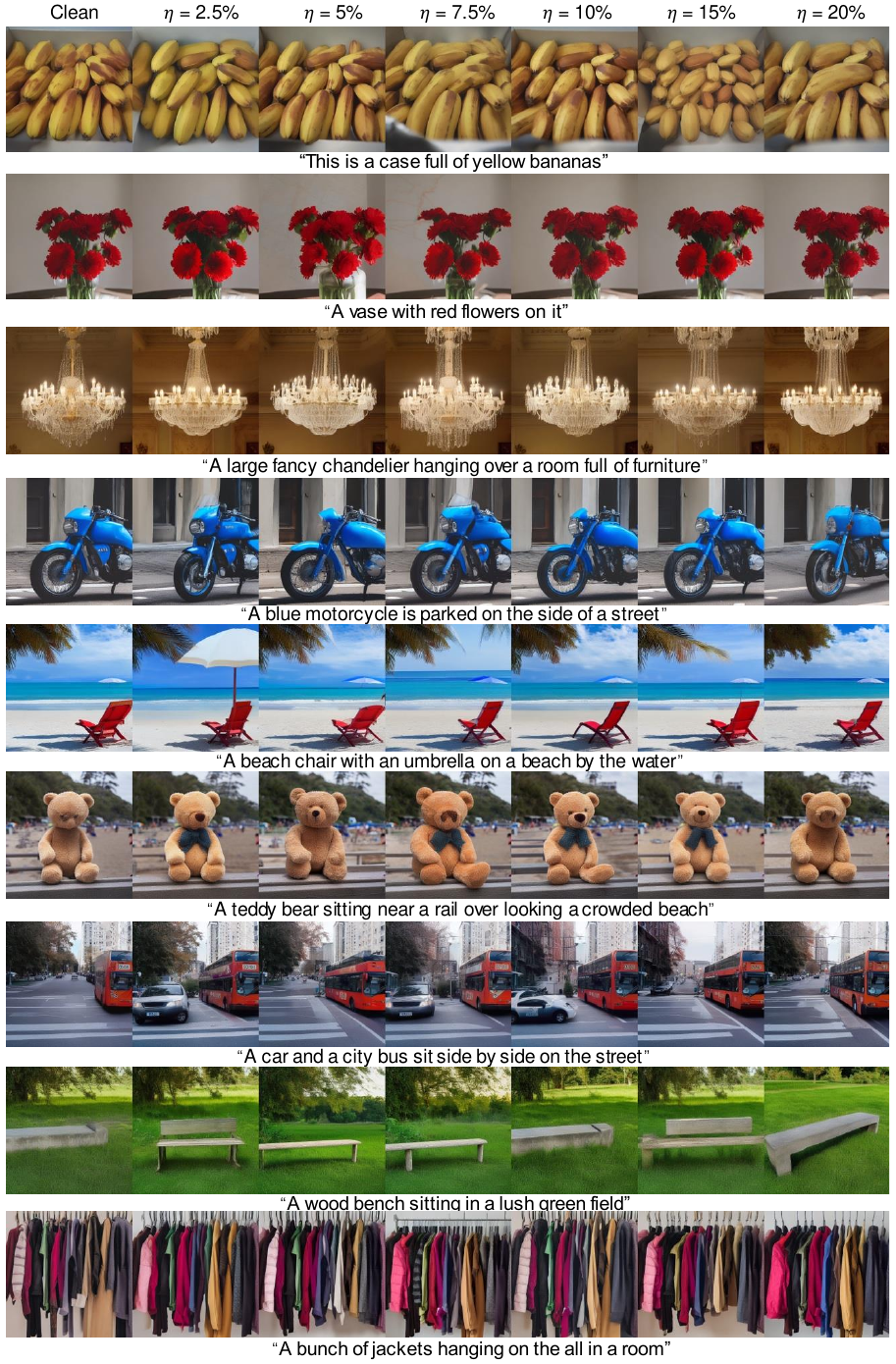}
    \caption{Visualization of LCM CC3M pre-training results.}
    \label{fig:appendix_lcm_cc3m_pretrain}
\end{figure}

\section{Full Results of Downstream Personalization Evaluation}
\label{sec:appendix_full_personalization}

We present complete results of downstream personalization here.

\subsection{Quantitative Results}

We show the results of ControlNet IN-1K LDM-4 in \cref{fig:full_ldm_in1k_controlnet}, T2I-Adapter IN-1K LDM-4 in \cref{fig:full_ldm_in1k_t2iadapter}, ControlNet CC3M LDM-4 in \cref{fig:full_ldm_cc3m_controlnet}, and T2I-Adapter CC3M LDM-4 in \cref{fig:full_ldm_cc3m_t2iadapter}.
For all personalization experiments, we compute the results for both Canny and SAM spatial controls. 
Guidance scales of $\{1.25, 1.5, 2.0, 2.25, 2.5, 3.0, 4.0, 5.0, 6.0, 7.0\}$ and $\{2.0, 3.0, 4.0, 5.0, 6.0, 6.5, 7.0, 7.5, 8.0, 8.5, 9.0, 10.0\}$ are used for IN-1K models and CC3M models, respectively, for all experiments here.

From the results, one can observe that models pre-trained with slight condition corruption also present the best performance in downstream personalization tasks.

\begin{figure}[!t]
\centering

    \hfill
    \subfigure[Canny, FID]{\label{fig:full_ldm_in1k_controlnet_canny_fid}\includegraphics[width=0.24\linewidth]{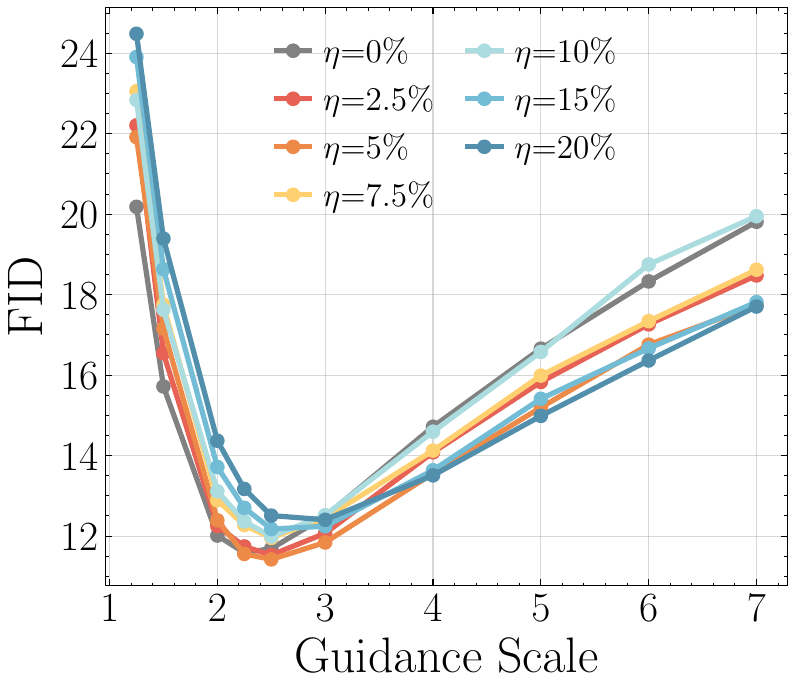}}
    \hfill
    \subfigure[Canny, IS]{\label{fig:full_ldm_in1k_controlnet_canny_is}\includegraphics[width=0.24\linewidth]{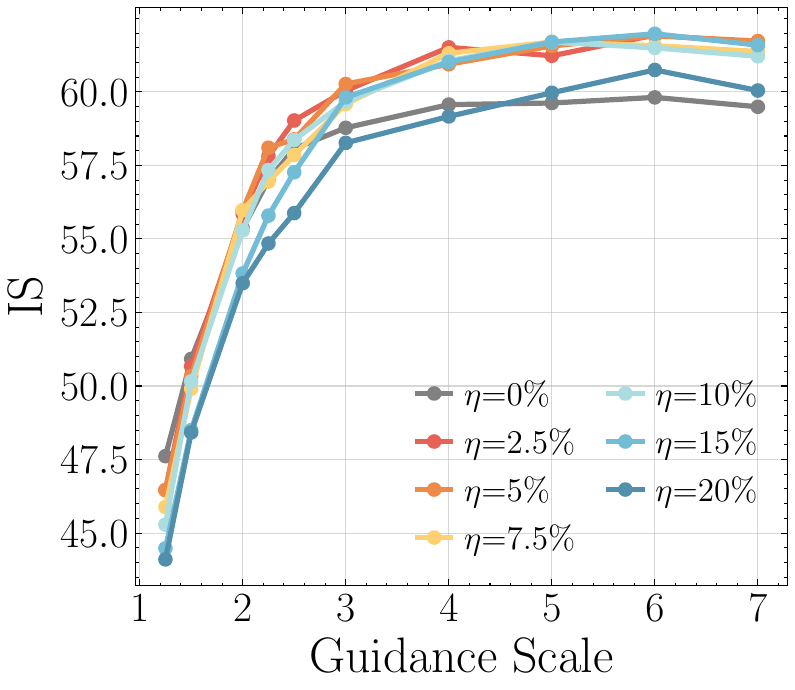}}
    \hfill
    \subfigure[Canny, Precision]{\label{fig:full_ldm_in1k_controlnet_canny_prec}\includegraphics[width=0.24\linewidth]{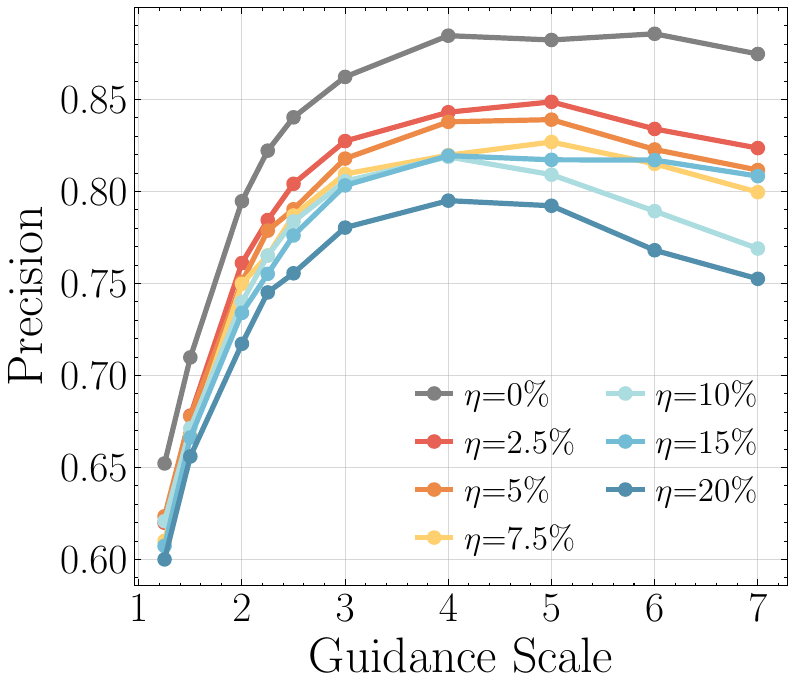}}
    \hfill
    \subfigure[Canny, Recall]{\label{fig:full_ldm_in1k_controlnet_canny_recall}\includegraphics[width=0.24\linewidth]{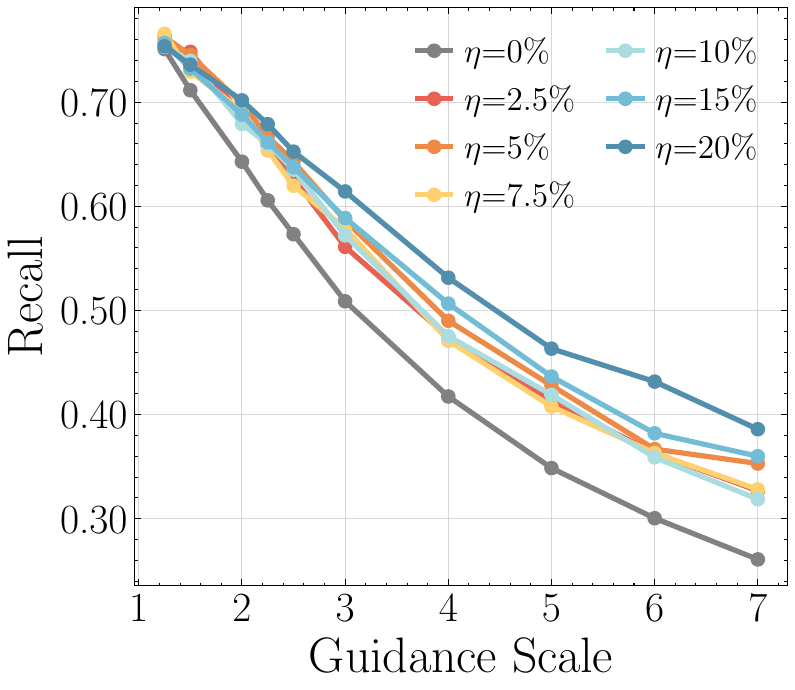}}
    \hfill

    \hfill
    \subfigure[SAM, FID]{\label{fig:full_ldm_in1k_controlnet_sam_fid}\includegraphics[width=0.24\linewidth]{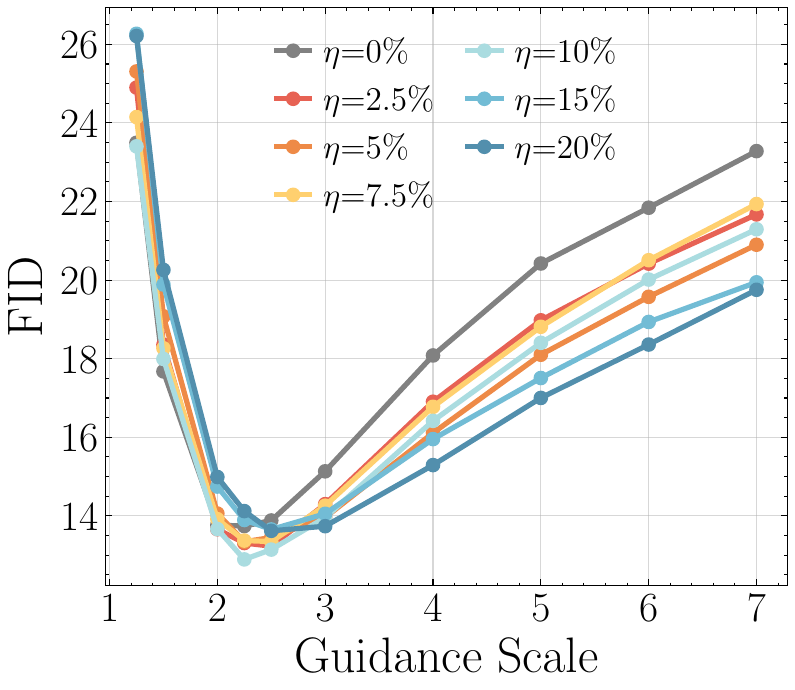}}
    \hfill
    \subfigure[SAM, IS]{\label{fig:full_ldm_in1k_controlnet_sam_is}\includegraphics[width=0.24\linewidth]{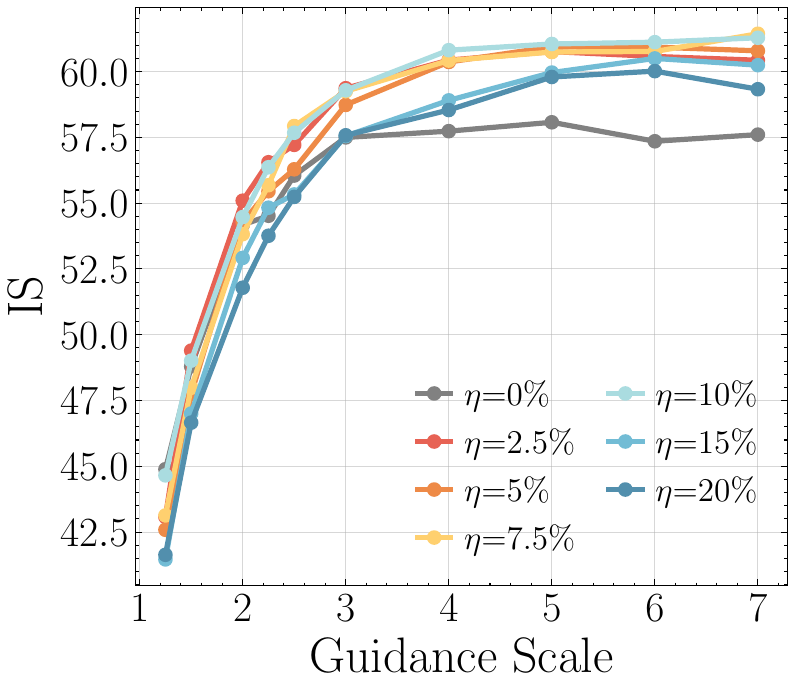}}
    \hfill
    \subfigure[SAM, Precision]{\label{fig:full_ldm_in1k_controlnet_sam_prec}\includegraphics[width=0.24\linewidth]{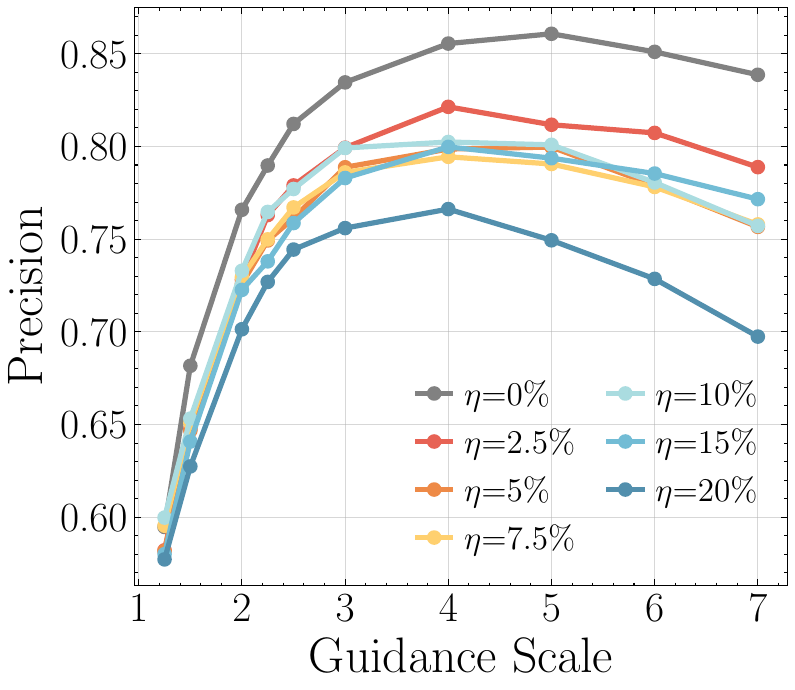}}
    \hfill
    \subfigure[SAM, Recall]{\label{fig:full_ldm_in1k_controlnet_sam_recall}\includegraphics[width=0.24\linewidth]{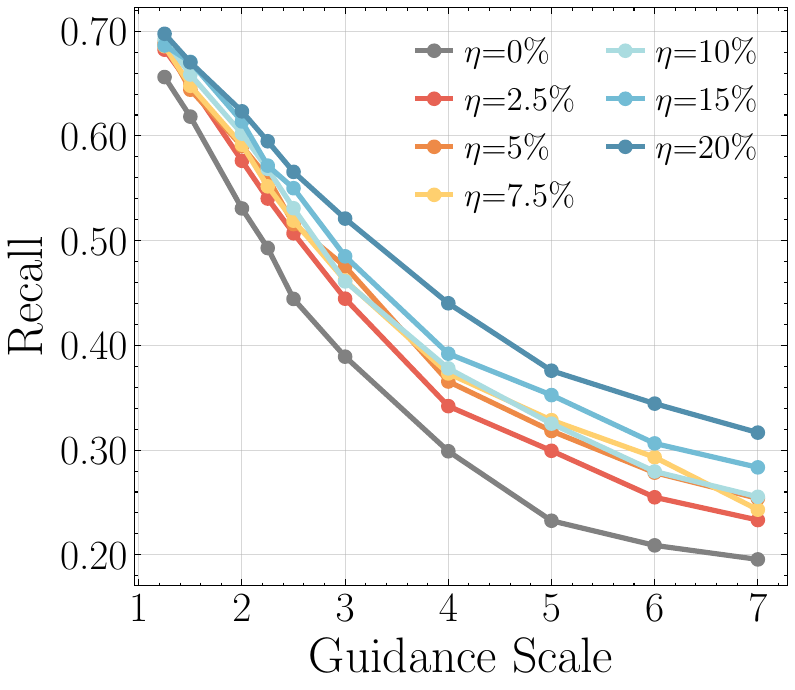}}
    \hfill


\caption{Qualitative evaluation results of 5K images generated by class-conditional LDMs pre-trained on ImageNet-1K and personalized on ImageNet-100 using ControlNet. We personalized the models with different control styles, including canny ((a) - (d)), segmentation mask from SAM ((e) - (h)), and lineart ((i) - (l)). The images are generated using 100 class conditions with various guidance scales, compared with 5K validation images of ImageNet-100. } 
\label{fig:full_ldm_in1k_controlnet}
\end{figure}

\begin{figure}[!t]
\centering

    \hfill
    \subfigure[Canny, FID]{\label{fig:full_ldm_in1k_t2iadapter_canny_fid}\includegraphics[width=0.24\linewidth]{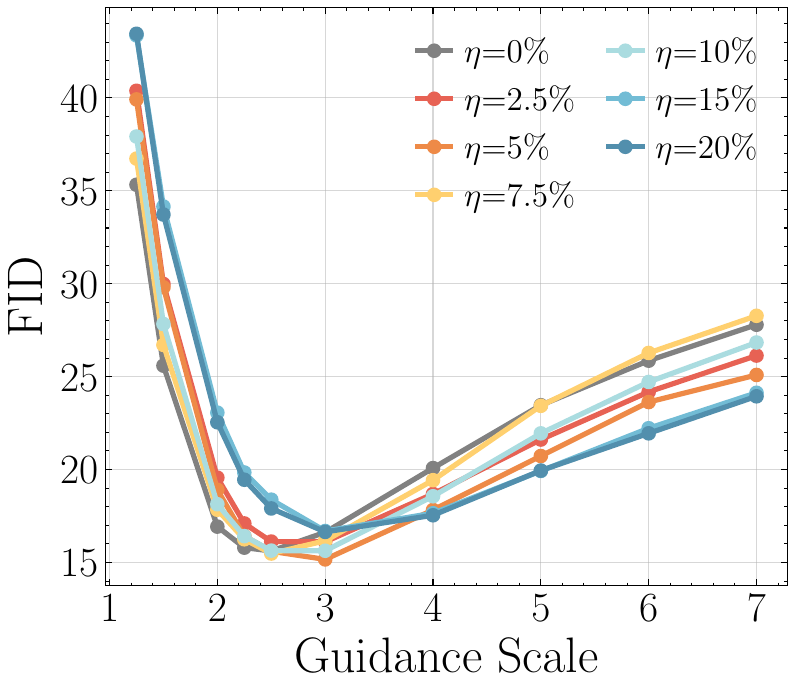}}
    \hfill
    \subfigure[Canny, IS]{\label{fig:full_ldm_in1k_t2iadapter_canny_is}\includegraphics[width=0.24\linewidth]{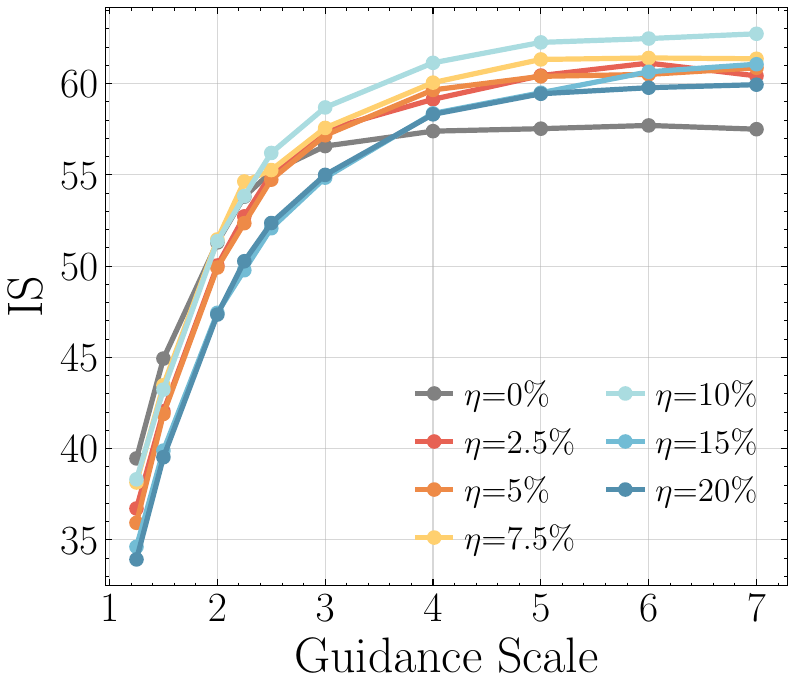}}
    \hfill
    \subfigure[Canny, Precision]{\label{fig:full_ldm_in1k_t2iadapter_canny_prec}\includegraphics[width=0.24\linewidth]{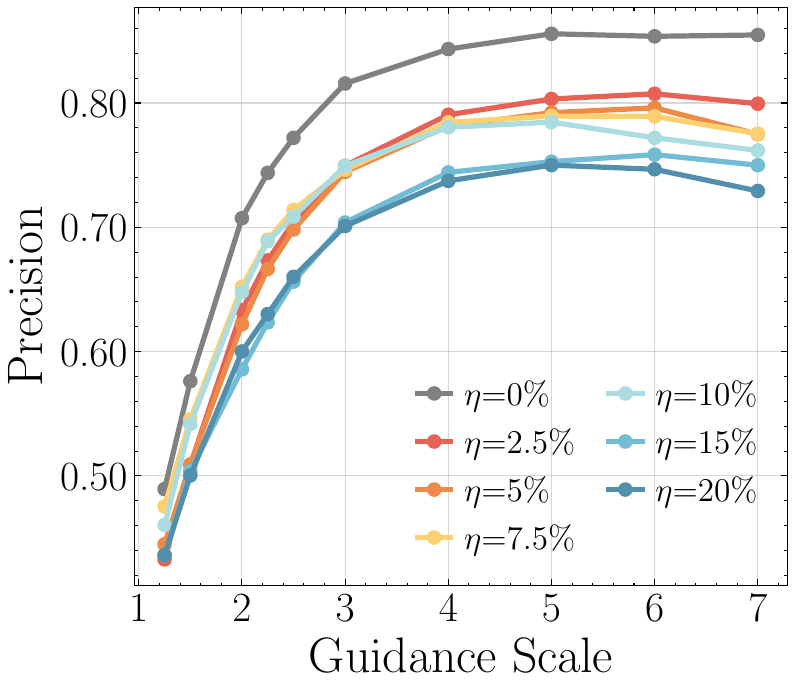}}
    \hfill
    \subfigure[Canny, Recall]{\label{fig:full_ldm_in1k_t2iadapter_canny_recall}\includegraphics[width=0.24\linewidth]{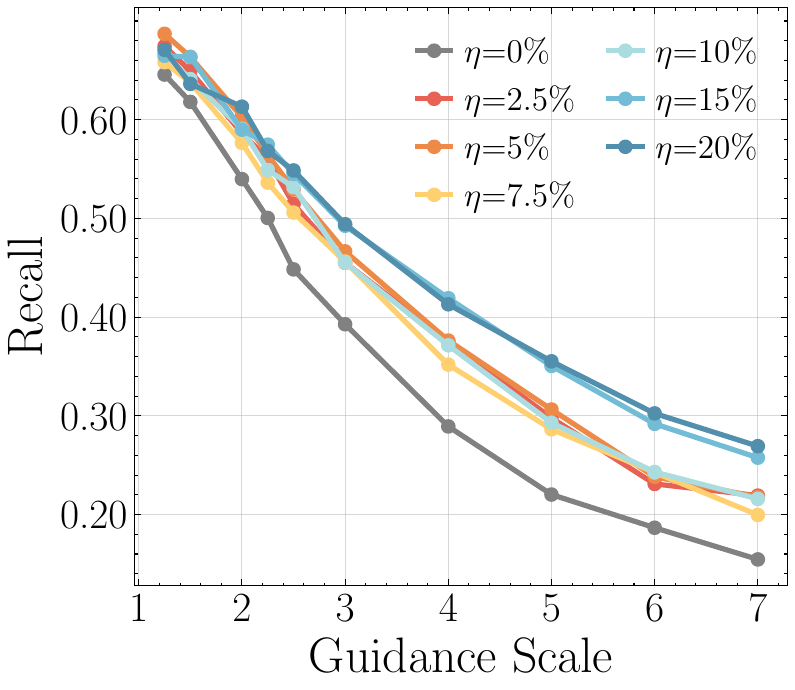}}
    \hfill

    \hfill
    \subfigure[SAM, FID]{\label{fig:full_ldm_in1k_t2iadapter_sam_fid}\includegraphics[width=0.24\linewidth]{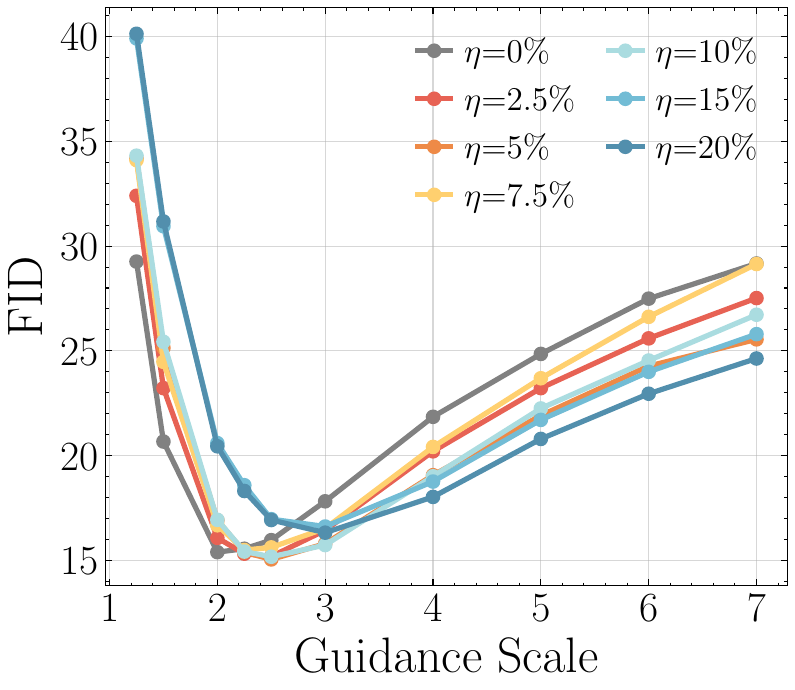}}
    \hfill
    \subfigure[SAM, IS]{\label{fig:full_ldm_in1k_t2iadapter_sam_is}\includegraphics[width=0.24\linewidth]{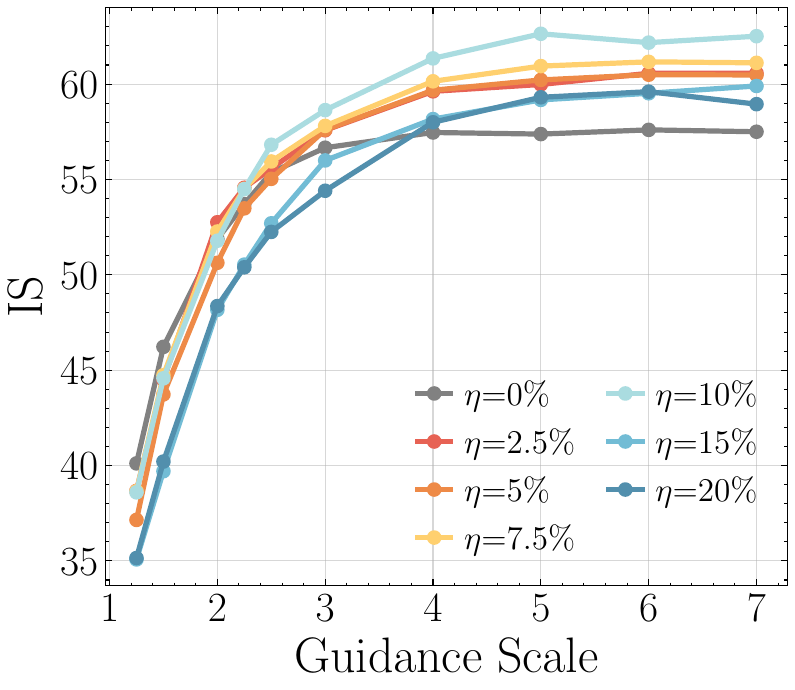}}
    \hfill
    \subfigure[SAM, Precision]{\label{fig:full_ldm_in1k_t2iadapter_sam_prec}\includegraphics[width=0.24\linewidth]{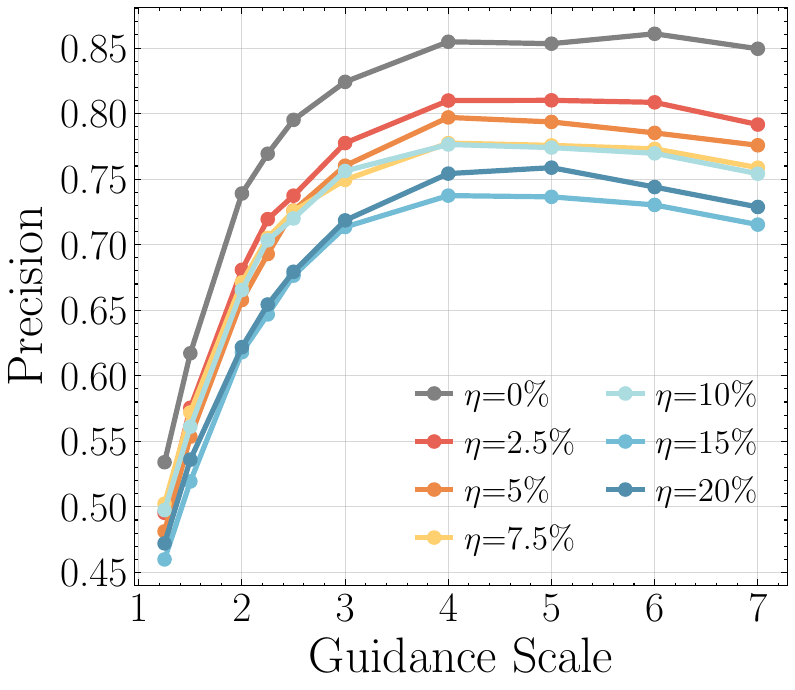}}
    \hfill
    \subfigure[SAM, Recall]{\label{fig:full_ldm_in1k_t2iadapter_sam_recall}\includegraphics[width=0.24\linewidth]{appendix_figures/eval_ldm_in1k_t2iadapter_sam_dpm_50_is.pdf}}
    \hfill


\caption{Qualitative evaluation results of 5K images generated by class-conditional LDMs pre-trained on ImageNet-1K and personalized on ImageNet-100 using T2I-Adapter. We personalized the models with different control styles, including canny ((a) - (d)), segmentation mask from SAM ((e) - (h)), and lineart ((i) - (l)). The images are generated using 100 class conditions with various guidance scales, compared with 5K validation images of ImageNet-100. } 
\label{fig:full_ldm_in1k_t2iadapter}
\end{figure}

\begin{figure}[!t]
\centering

    \hfill
    \subfigure[Canny, FID]{\label{fig:full_ldm_cc3m_controlnet_canny_fid}\includegraphics[width=0.24\linewidth]{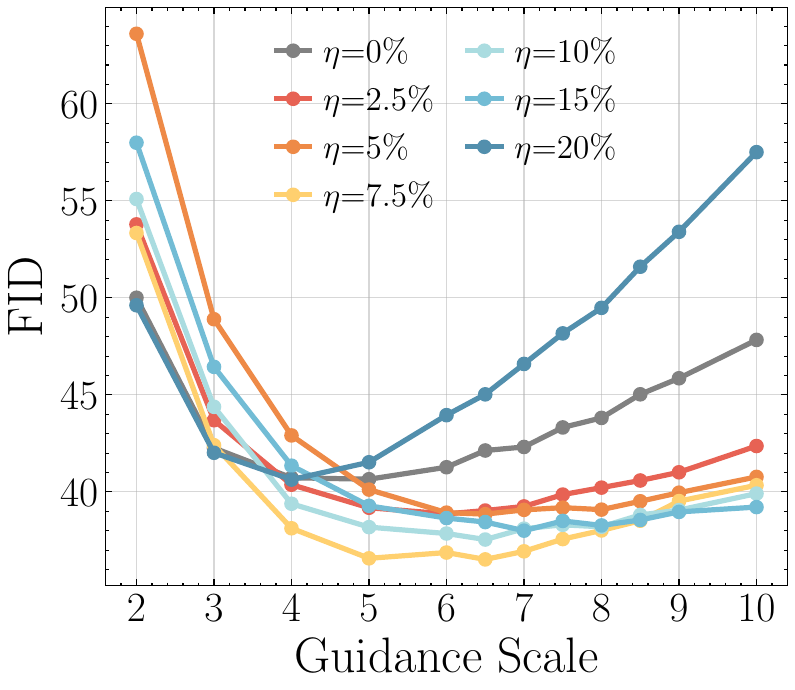}}
    \hfill
    \subfigure[Canny, IS]{\label{fig:full_ldm_cc3m_controlnet_canny_is}\includegraphics[width=0.24\linewidth]{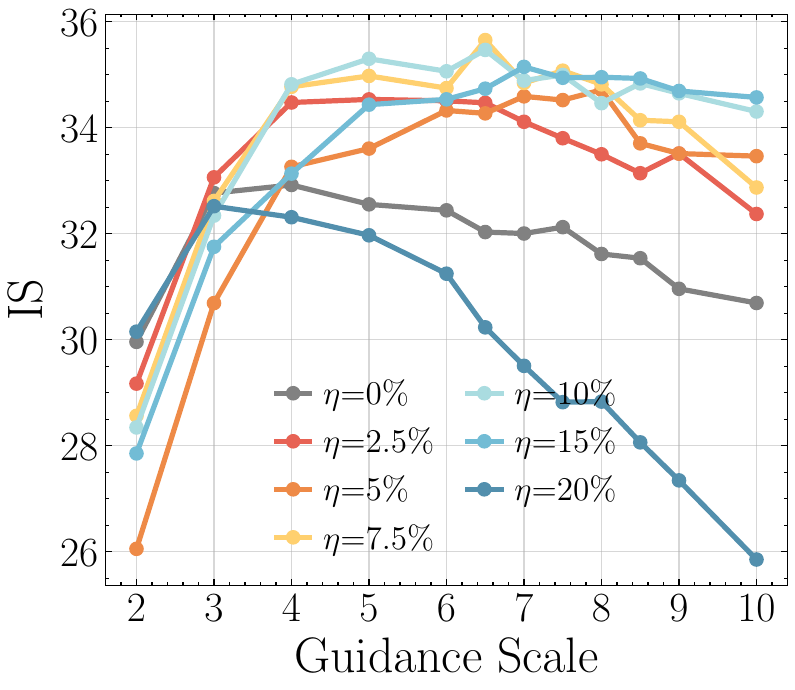}}
    \hfill
    \subfigure[Canny, Precision]{\label{fig:full_ldm_cc3m_controlnet_canny_prec}\includegraphics[width=0.24\linewidth]{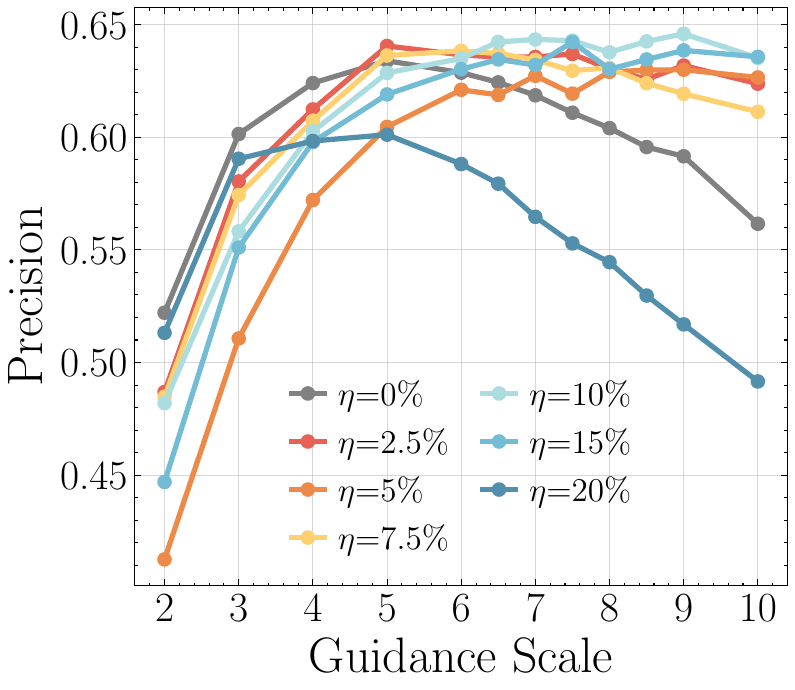}}
    \hfill
    \subfigure[Canny, Recall]{\label{fig:full_ldm_cc3m_controlnet_canny_recall}\includegraphics[width=0.24\linewidth]{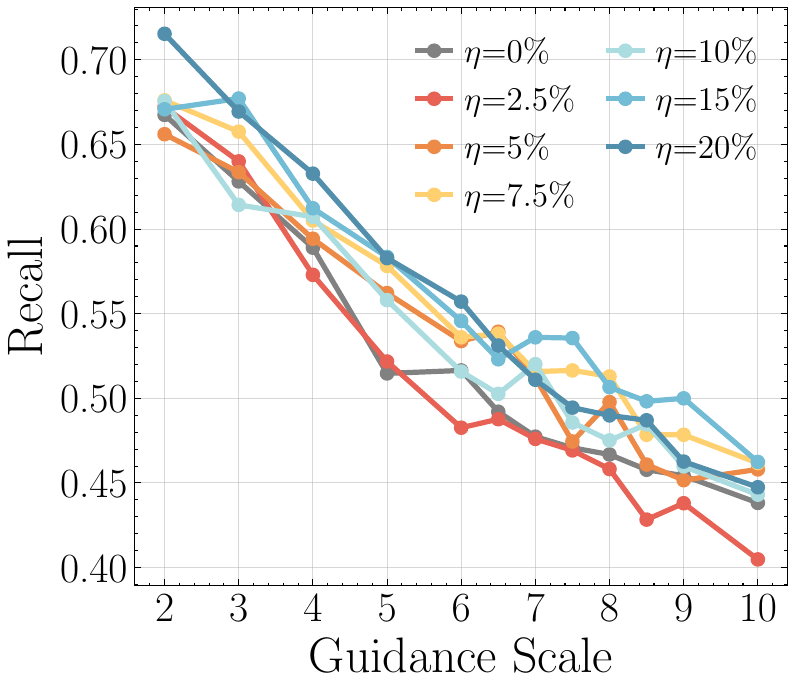}}
    \hfill

    \hfill
    \subfigure[SAM, FID]{\label{fig:full_ldm_cc3m_controlnet_sam_fid}\includegraphics[width=0.24\linewidth]{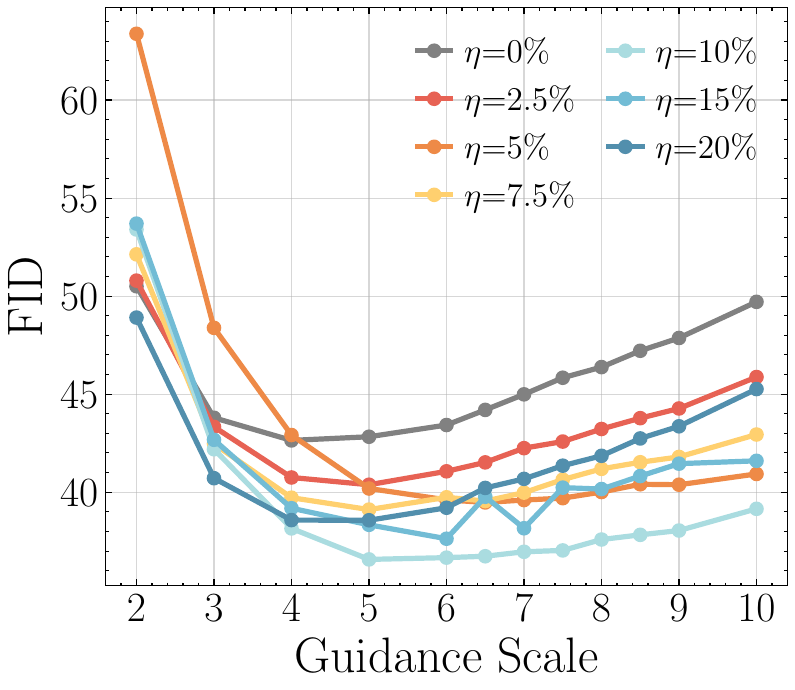}}
    \hfill
    \subfigure[SAM, IS]{\label{fig:full_ldm_cc3m_controlnet_sam_is}\includegraphics[width=0.24\linewidth]{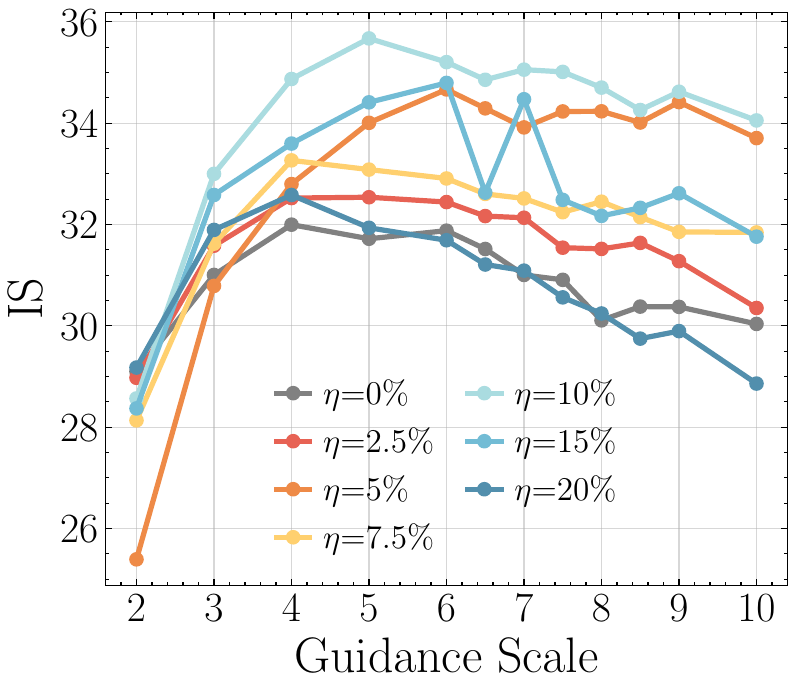}}
    \hfill
    \subfigure[SAM, Precision]{\label{fig:full_ldm_cc3m_controlnet_sam_prec}\includegraphics[width=0.24\linewidth]{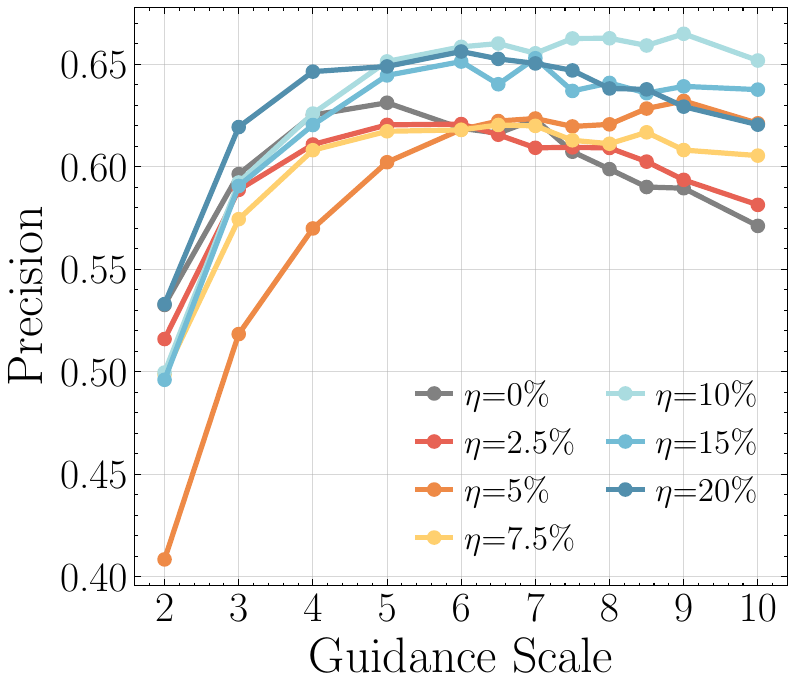}}
    \hfill
    \subfigure[SAM, Recall]{\label{fig:full_ldm_cc3m_controlnet_sam_recall}\includegraphics[width=0.24\linewidth]{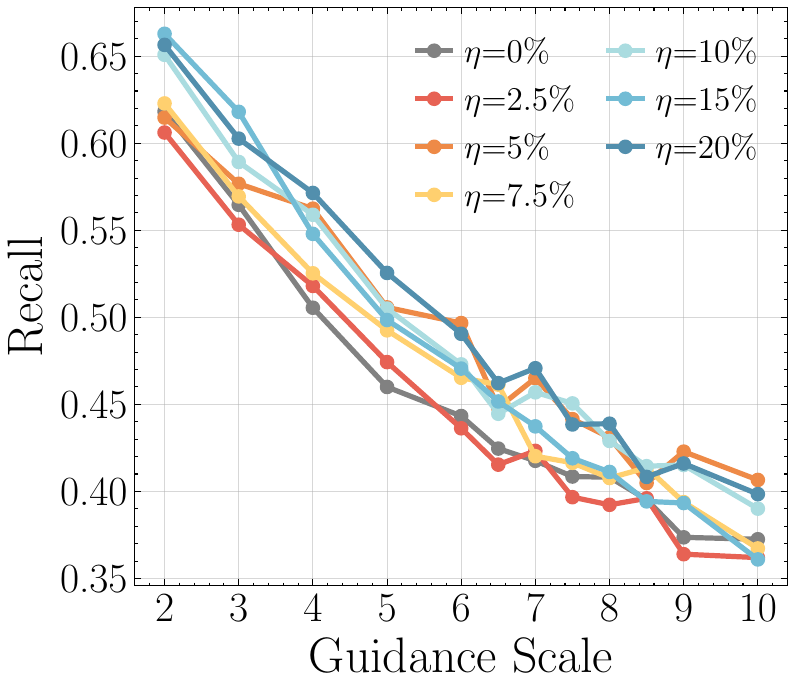}}
    \hfill


\caption{Qualitative evaluation results of 5K images generated by text-conditional LDMs pre-trained on CC3M and personalized on ImageNet-100 using ControlNet. We personalized the models with different control styles, including canny ((a) - (d))and segmentation mask from SAM ((e) - (h)). The images are generated using text captions annotated from BLIP with various guidance scales, compared with 5K validation images of ImageNet-100. } 
\label{fig:full_ldm_cc3m_controlnet}
\end{figure}

\begin{figure}[!t]
\centering

    \hfill
    \subfigure[Canny, FID]{\label{fig:full_ldm_cc3m_t2iadapter_canny_fid}\includegraphics[width=0.24\linewidth]{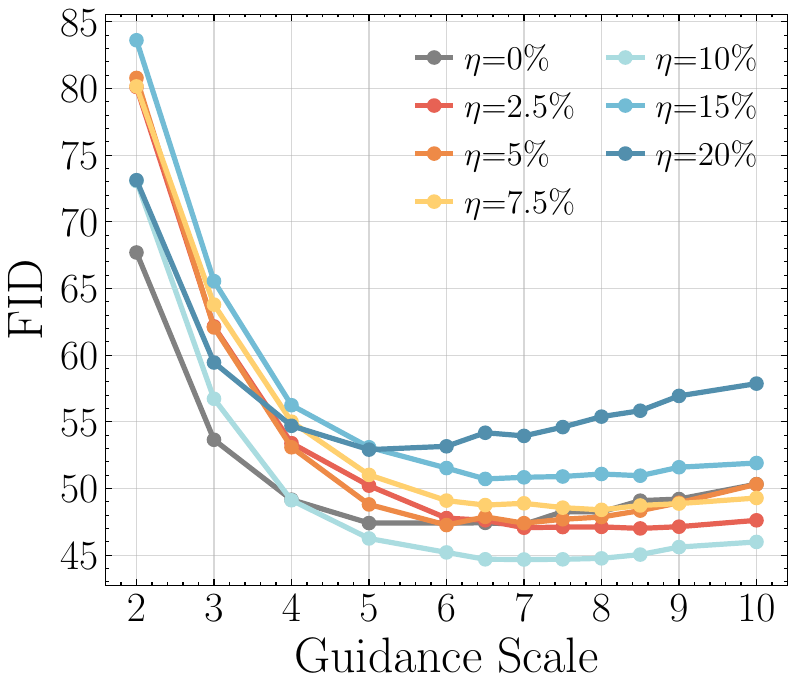}}
    \hfill
    \subfigure[Canny, IS]{\label{fig:full_ldm_cc3m_t2iadapter_canny_is}\includegraphics[width=0.24\linewidth]{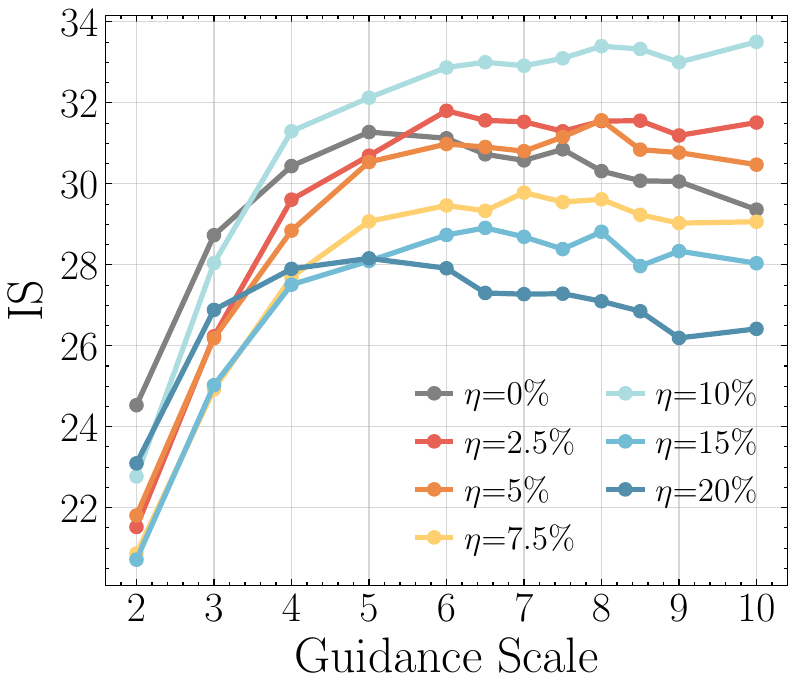}}
    \hfill
    \subfigure[Canny, Precision]{\label{fig:full_ldm_cc3m_t2iadapter_canny_prec}\includegraphics[width=0.24\linewidth]{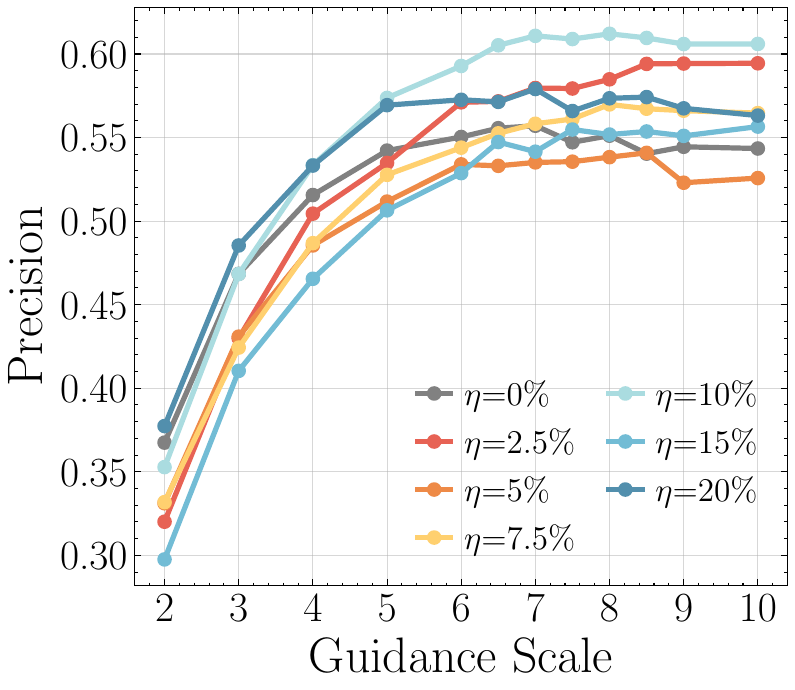}}
    \hfill
    \subfigure[Canny, Recall]{\label{fig:full_ldm_cc3m_t2iadapter_canny_recall}\includegraphics[width=0.24\linewidth]{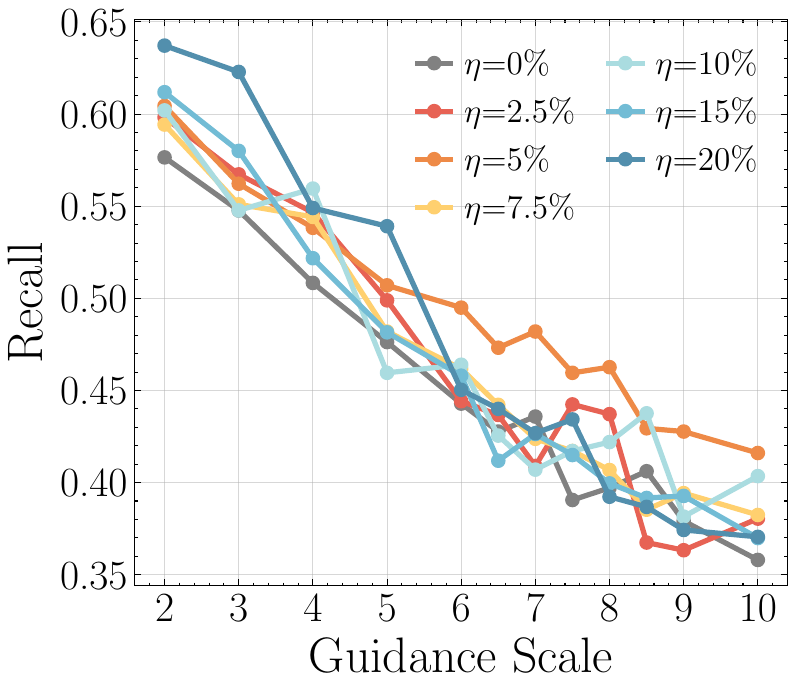}}
    \hfill

    \hfill
    \subfigure[SAM, FID]{\label{fig:full_ldm_cc3m_t2iadapter_sam_fid}\includegraphics[width=0.24\linewidth]{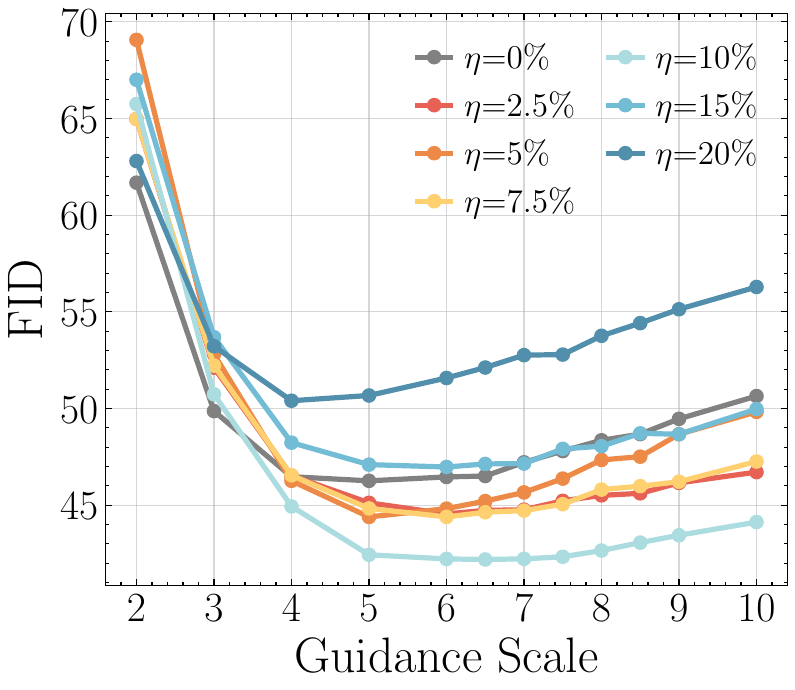}}
    \hfill
    \subfigure[SAM, IS]{\label{fig:full_ldm_cc3m_t2iadapter_sam_is}\includegraphics[width=0.24\linewidth]{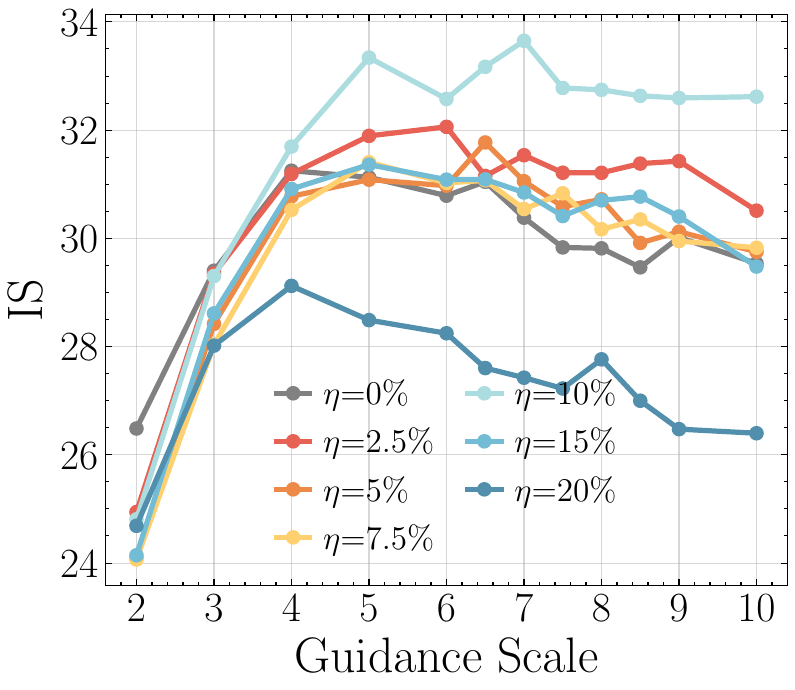}}
    \hfill
    \subfigure[SAM, Precision]{\label{fig:full_ldm_cc3m_t2iadapter_sam_prec}\includegraphics[width=0.24\linewidth]{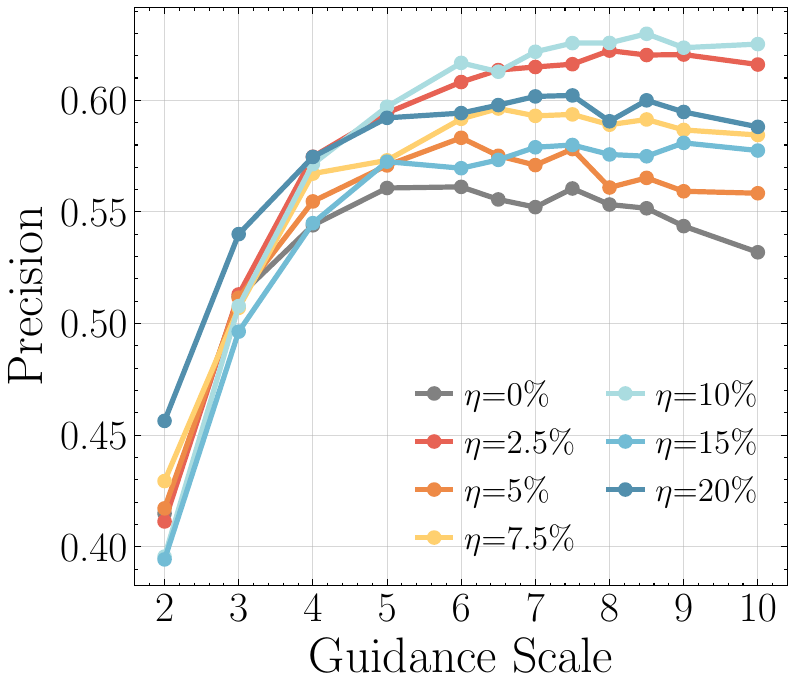}}
    \hfill
    \subfigure[SAM, Recall]{\label{fig:full_ldm_cc3m_t2iadapter_sam_recall}\includegraphics[width=0.24\linewidth]{appendix_figures/eval_ldm_cc3m_t2iadapter_sam_dpm_50_is.pdf}}
    \hfill


\caption{Qualitative evaluation results of 5K images generated by text-conditional LDMs pre-trained on CC3M and personalized on ImageNet-100 using T2I-Adapter. We personalized the models with different control styles, including canny ((a) - (d)) and segmentation mask from SAM ((e) - (h)). The images are generated using text captions annotated from BLIP with various guidance scales, compared with 5K validation images of ImageNet-100. } 
\label{fig:full_ldm_cc3m_t2iadapter}
\end{figure}

\subsection{Qualitative Results}

We present the qualitative comparison of ControlNet personalization results here.
Since T2I-Adapter personalization results are similar but visually worse (quantitatively worse too), we skip their results.
The visualizations of ControlNet IN-1K LDM-4 with Canny and SAM conditions are shown in \cref{fig:appendix_ldm_in1k_controlnet_canny} and \cref{fig:appendix_ldm_in1k_controlnet_sam}, respectively.
The visualizations of ControlNet CC3M LDM-4 with Canny and SAM conditions are shown in \cref{fig:appendix_ldm_cc3m_controlnet_canny} and \cref{fig:appendix_ldm_cc3m_controlnet_sam}, respectively.
Similarly, models pre-trained with slight condition corruption present the best image quality.

\begin{figure}[h]
    \centering
    \includegraphics[width=0.98\linewidth]{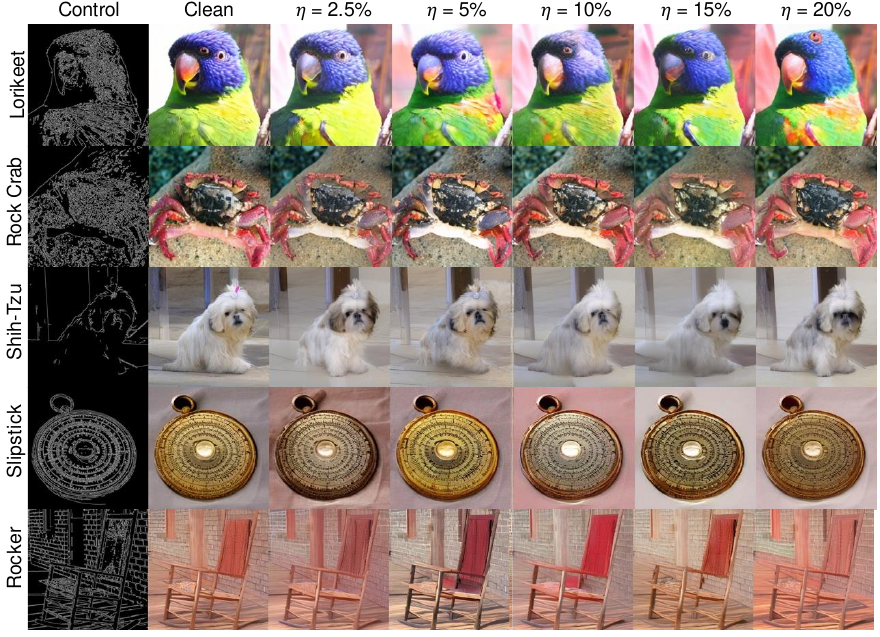}
    \caption{Visualization of LDMs IN-1K ControlNet Canny personalization results.}
    \label{fig:appendix_ldm_in1k_controlnet_canny}
\end{figure}

\begin{figure}[h]
    \centering
    \includegraphics[width=0.98\linewidth]{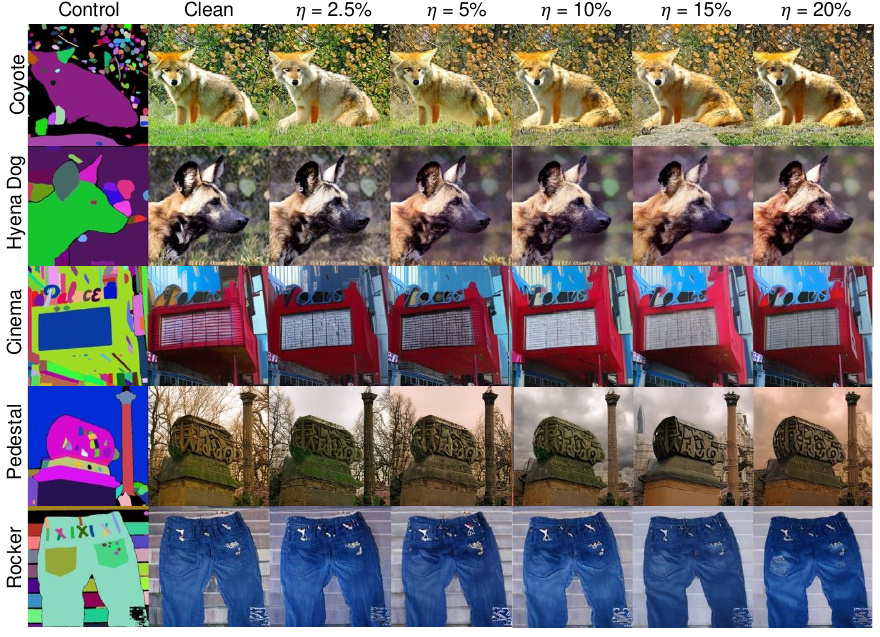}
    \caption{Visualization of LDMs IN-1K ControlNet SAM personalization results}
\label{fig:appendix_ldm_in1k_controlnet_sam}
\end{figure}

\begin{figure}[h]
    \centering
    \includegraphics[width=0.98\linewidth]{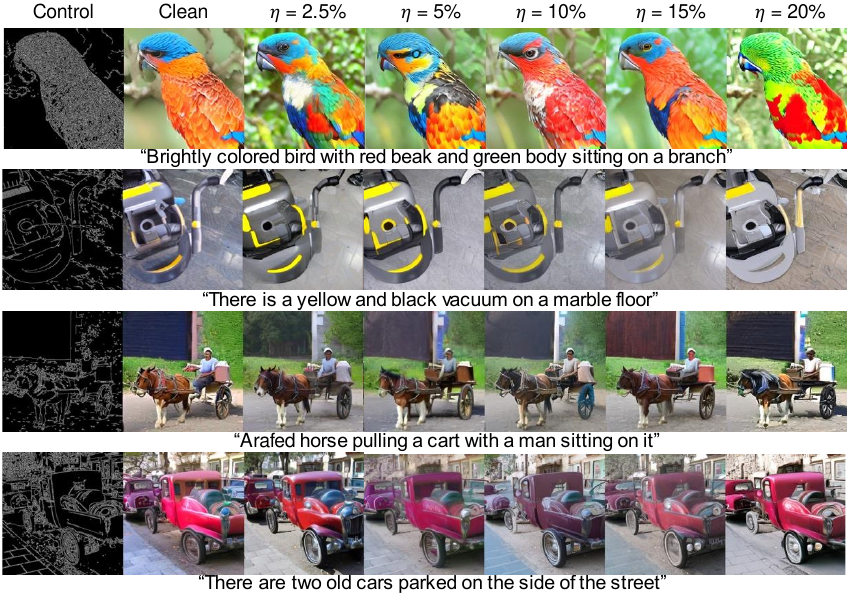}
    \caption{Visualization of LDMs CC3M ControlNet Canny personalization results.}
    \label{fig:appendix_ldm_cc3m_controlnet_canny}
\end{figure}

\begin{figure}[h]
    \centering
    \includegraphics[width=0.98\linewidth]{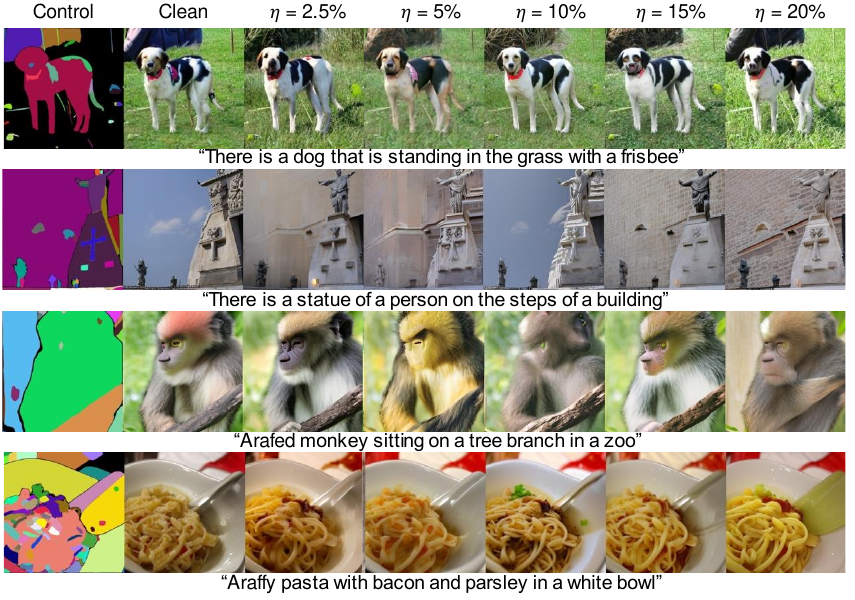}
    \caption{Visualization of LDMs CC3M ControlNet SAM personalization results}
    \label{fig:appendix_ldm_cc3m_controlnet_sam}
\end{figure}

\section{Full Results of Conditional Embedding Perturbation}
\label{sec:appendix_cep}

\subsection{Qualitative Results}
\label{sec:appendix_cep-qualitative}

More visualizations of CEP, compared with clean and IP pre-trained are shown here.
We present the more results of IN-1K LDM-4 and DiT-XL/2 in \cref{fig:in1k_cep_ldm} and \cref{fig:in1k_cep_dit}, respectively.
We also present more results of CC3M LDM-4 and LCM-v1.5 in \cref{fig:cc3m_cep_ldm} and \cref{fig:cc3m_cep_lcm}, respectively. 
CEP generally helps DMs generate more visually appealing and realistic images. 
We also show the more personalization visualization in \cref{fig:in1k_cep_personalization} and \cref{fig:cc3m_cep_personalization}.

\begin{figure}[h]
\centering
    \hfill
    \subfigure[IN-1K LDM-4]{\label{fig:in1k_cep_ldm}\includegraphics[width=0.49\linewidth]{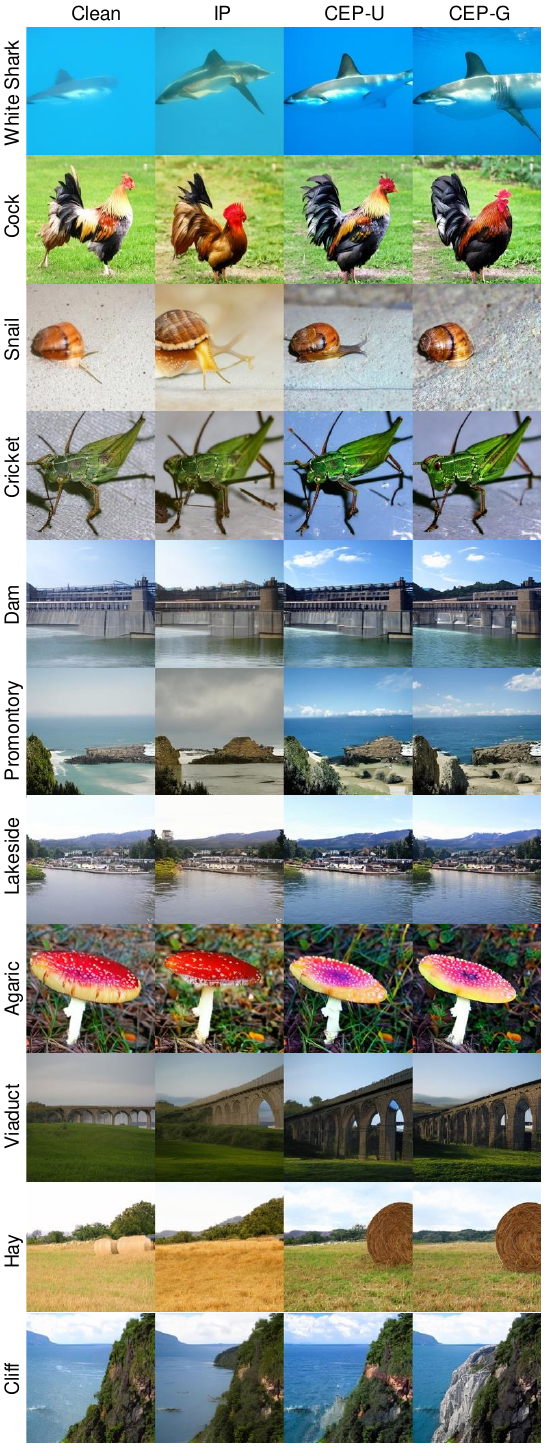}}
    \hfill
    \subfigure[IN-1K DiT-XL/2]{\label{fig:in1k_cep_dit}\includegraphics[width=0.49\linewidth]{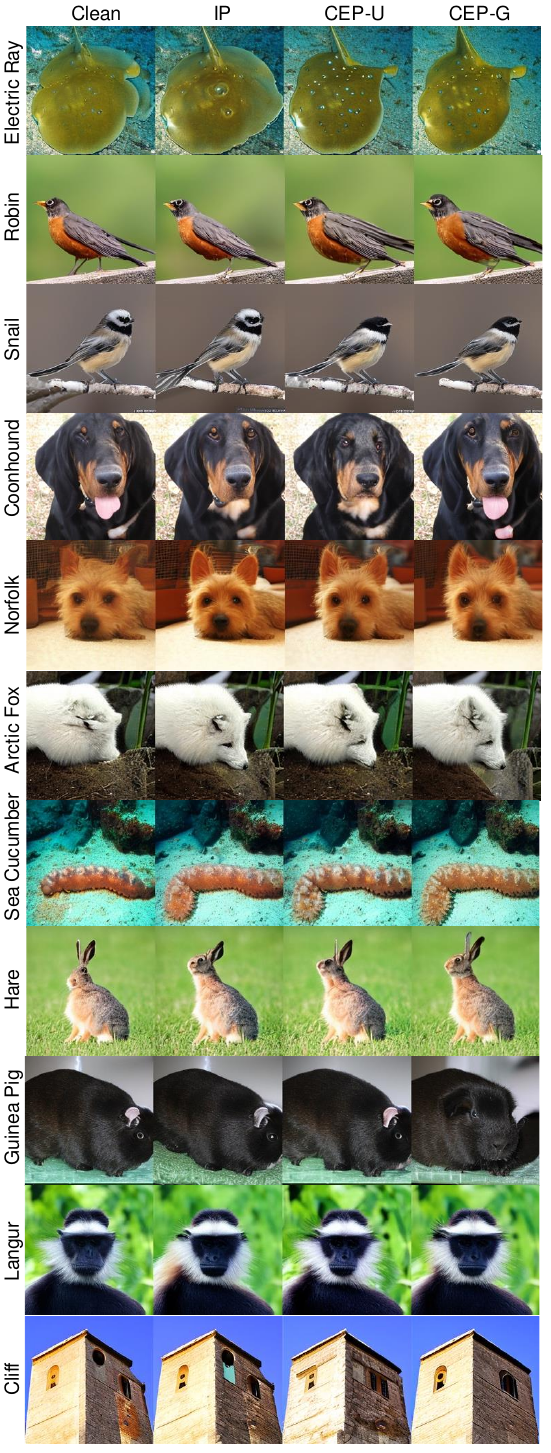}}
    \hfill
\caption{Visualization of CEP on IN-1K pre-trained LDM-4 and DiT-XL/2}
\label{fig:in1k_cep}
\end{figure}

\begin{figure}[h]
\centering
    \hfill
    \subfigure[CC3M LDM-4]{\label{fig:cc3m_cep_ldm}\includegraphics[width=0.49\linewidth]{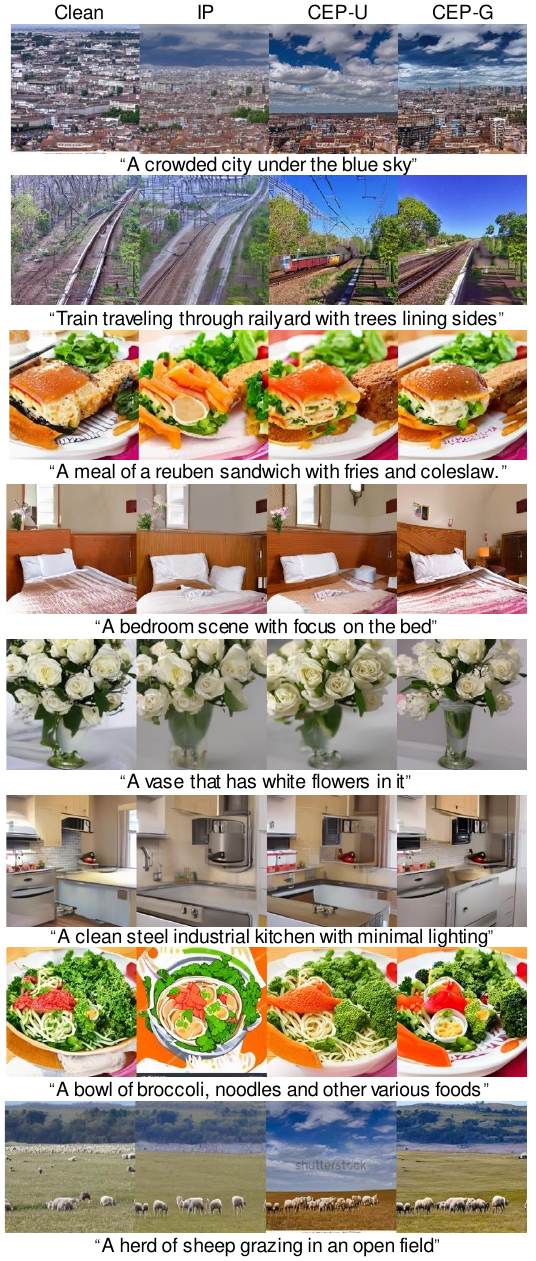}}
    \hfill
    \subfigure[CC3M LCM-v1.5]{\label{fig:cc3m_cep_lcm}\includegraphics[width=0.49\linewidth]{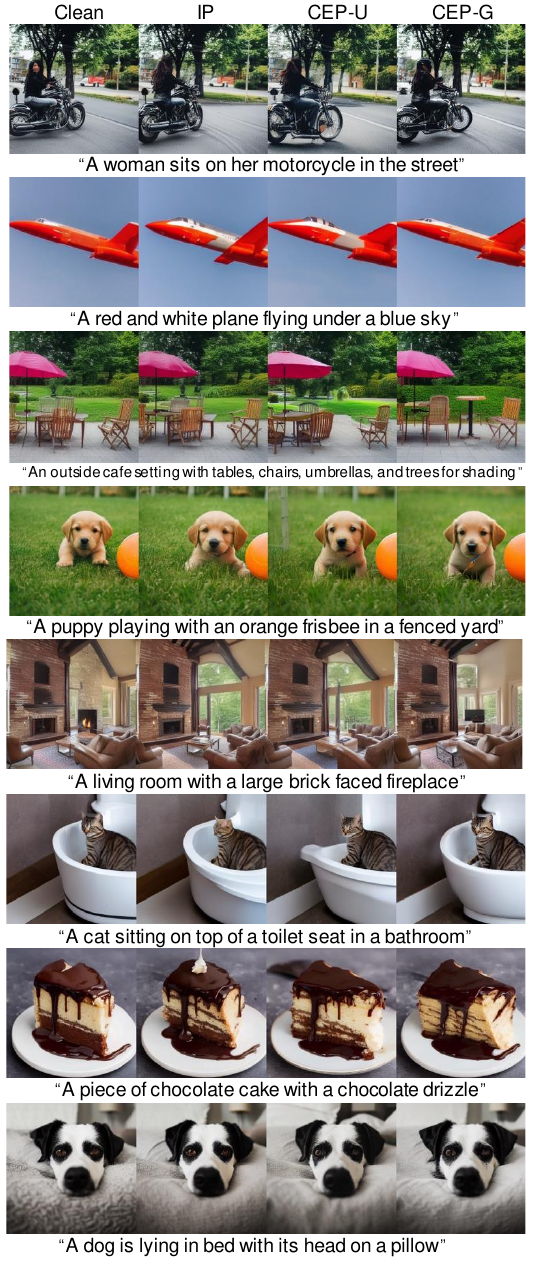}}
    \hfill
\caption{Visualization of CEP on CC3M pre-trained LDM-4 and LCM-v1.5}
\label{fig:cc3m_cep}
\end{figure}

\begin{figure}[h]
\centering
    \includegraphics[width=0.6 \linewidth]{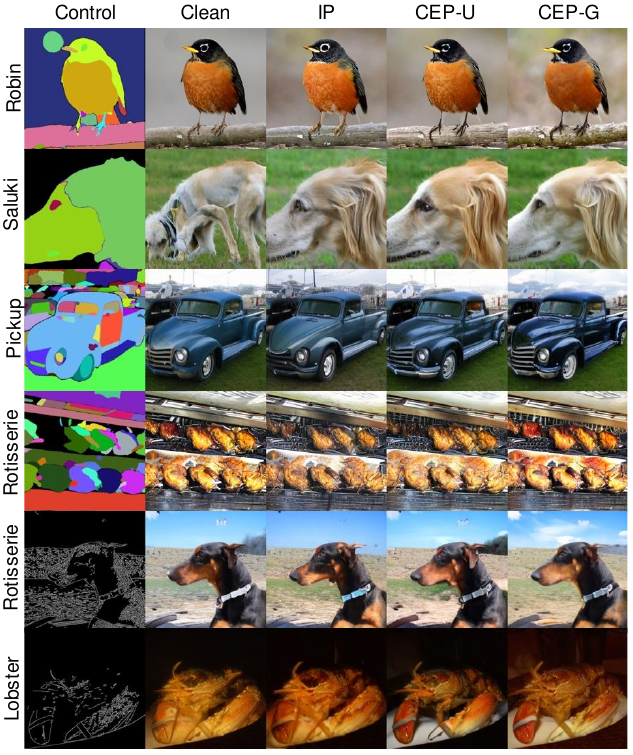}
\caption{Visualization of CEP on ControlNet adapted IN-1K pre-trained LDM-4}
\label{fig:in1k_cep_personalization}
\end{figure}

\begin{figure}[h]
\centering
    \includegraphics[width=0.6 \linewidth]{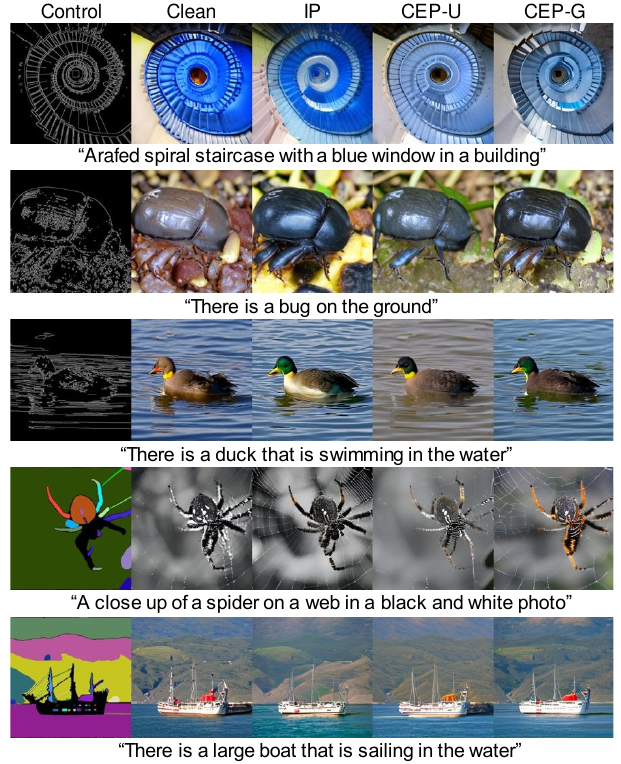}
\caption{Visualization of CEP on ControlNet adapted CC3M pre-trained LDM-4}
\label{fig:cc3m_cep_personalization}
\end{figure}

\subsection{Ablation Study}
\label{sec:appendix_cep-ablation}

In \cref{fig:cep_analysis-ablation}, we compute the $L_2$ distance of perturbed condition embeddings and the clean ones, as a measurement for the corruption levels (of CEP). 
Here, we elaborate how we compute the $L_2$ distances. 
For fixed corruption, we calculate the distances as:
\begin{equation}
    \frac{1}{N} \sum_{i=1}^N \|\mathbf{c}_{\theta^*}(y^c_i) - \mathbf{c}_{\theta^*}(y_i) \|^2_2,
\end{equation}
where $\theta^*$ is learned from clean data.
For CEP, we directly calculate the $L_2$ norm of sampled noise:
\begin{equation}
    \sum_{i=1}^N \| \boldsymbol{\sigma}_i \|^2_2
\end{equation}

\subsection{Comparison with Dropout and Label Smoothing}
\label{sec:appendix_cep-dropout}

Here, we additionally compare CEP with dropout and label smoothing on LDM IN-1K models, which are two alternatives that also introduce perturbations in class embeddings.
The results are shown in \cref{tab:dropout_ls}.
One can observe that, both dropout and label smoothing have similar regularization effects on training diffusion models, whereas CEP-U and CEP-G is more effective.

\begin{table}[h]
\centering
\caption{Comparison of CEP with dropout and label smoothing on LDM IN-1K.}
\label{tab:dropout_ls}
\resizebox{0.4 \textwidth}{!}{%
\begin{tabular}{@{}l|cc@{}}
\toprule
Corruption            & FID  & IS     \\ \midrule
Clean                 & 9.44 & 138.46 \\
+ Dropout 0.1         & 8.67 & 145.80 \\
+ Label Smoothing 0.1 & 8.49 & 146.27 \\
+ CEP-U               & 7.00 & 170.73 \\
+ CEP-G               & 6.91 & 180.77 \\ \bottomrule
\end{tabular}%
}
\end{table}

\subsection{Comparison with Fixed and Random Corruption}
\label{sec:appendix_cep-fixed}

We further compare with fixed CEP corruption, and random data corruption, to study the effects of fixed and random perturbation to train diffusion models. 
For fixed CEP-U, we first select the samples to add perturbation, and then fix them during training. 
For random data corruption, we randomly choose samples during training to make their label noisy by flipping to other classes. 
From the results in \cref{tab:fixed_random}, we show that CEP works the best among all corruption methods. Also fixed CEP is more effective than adding data corruption (fixed and random). Random data corruption can be viewed as a CEP-variant with embeddings from flipping label instead of adding noise, and thus is also more effective than fixed data corruption. 

\begin{table}[h]
\centering
\caption{Comparison of fixed and random corruption on LDM IN-1K.}
\label{tab:fixed_random}
\resizebox{0.45 \textwidth}{!}{%
\begin{tabular}{@{}l|cc@{}}
\toprule
Corruption            & FID  & IS     \\ \midrule
Clean                 & 9.44 & 138.46 \\
+ CEP-U       & 7.00 & 170.33 \\
+ Fixed CEP-U & 7.94 & 154.48 \\
+ Random Data Corruption              & 8.13 & 143.07 \\
+ Fixed Data Corruption             & 8.44 & 140.27 \\ \bottomrule
\end{tabular}%
}
\end{table}




\end{document}